\useunder{\uline}{\ul}{}
\theoremstyle{plain}
\newtheorem{theorem}{Theorem}[section]
\newtheorem{lemma}[theorem]{Lemma}
\theoremstyle{definition}
\newtheorem{definition}[theorem]{Definition}
\theoremstyle{remark}
\icmltitlerunning{Shape-aware Graph Spectral Learning}
\begin{document}

\twocolumn[
\icmltitle{Shape-aware Graph Spectral Learning}




\begin{icmlauthorlist}
\icmlauthor{Junjie Xu}{psu}
\icmlauthor{Enyan Dai}{psu}
\icmlauthor{Dongsheng Luo}{fiu}
\icmlauthor{Xiang Zhang}{psu}
\icmlauthor{Suhang Wang}{psu}

\end{icmlauthorlist}

\icmlaffiliation{psu}{The Pennsylvania State University, USA}
\icmlaffiliation{fiu}{Florida International University, USA}

\icmlcorrespondingauthor{Junjie Xu}{junjiexu@psu.edu}
\icmlcorrespondingauthor{Suahang Wang}{szw494@psu.edu}


\vskip 0.3in
]



\printAffiliationsAndNotice{}  

\begin{abstract}
Spectral Graph Neural Networks (GNNs) are gaining attention for their ability to surpass the limitations of message-passing GNNs. They rely on supervision from downstream tasks to learn spectral filters that capture the graph signal's useful frequency information. However, some works empirically show that the preferred graph frequency is related to the graph homophily level. This relationship between graph frequency and graphs with homophily/heterophily has not been systematically analyzed and considered in existing spectral GNNs. To mitigate this gap, we conduct theoretical and empirical analyses revealing a positive correlation between low-frequency importance and the homophily ratio, and a negative correlation between high-frequency importance and the homophily ratio. Motivated by this, we propose shape-aware regularization on a Newton Interpolation-based spectral filter that can (i) learn an arbitrary polynomial spectral filter and (ii) incorporate prior knowledge about the desired shape of the corresponding homophily level. Comprehensive experiments demonstrate that NewtonNet can achieve graph spectral filters with desired shapes and superior performance on both homophilous and heterophilous datasets. The code is available at https://github.com/junjie-xu/NewtonNet.
\end{abstract}

\section{Introduction}
Graphs are pervasive in the real world, such as social networks ~\cite{wu2020graph}, biological networks~\cite{fout2017protein}, and recommender systems~\cite{wu2022graph}. GNNs~\cite{velickovic2018graph, hamilton2017inductive} have witnessed great success in various tasks such as node classification~\cite{kipf2017semi}, link prediction~\cite{zhang2018link}, and graph classification~\cite{zhang2018end}. Existing GNNs can be broadly categorized into spatial GNNs and spectral GNNs. Spatial GNNs~\cite{velickovic2018graph, klicpera2018predict, abu2019mixhop}  propagate and aggregate information from neighbors. Hence, it learns similar representations for connected neighbors. Spectral GNNs~\cite{bo2023survey} learn a filter on the Laplacian matrix. The filter output is the amplitude of each frequency in the graph spectrum, which ultimately determines the learned representations. 

In real-world scenarios, different graphs can have different homophily ratios. In homophilous graphs, nodes from the same class tend to be connected; while in heterophilous graphs, nodes from different classes tend to be connected. Several studies~\cite{zhu2020beyond, ma2022is, xu2022hp} have shown that heterophilous graphs pose a challenge for spatial GNNs based on message-passing as they implicitly assume the graph follows homophily assumption. Many works are proposed from the spatial perspective to address the heterophily problem~\cite{zheng2022graph}. 
Spectral GNNs~\cite{bo2023survey, levie2018cayleynets} with learnable filters have shown great potential in addressing heterophily problems as they can learn spectral filters from the graph, which can alleviate the issue of aggregating noisy information from neighbors of different features/classes. 

Recent works~\cite{bo2021beyond, zhu2020beyond, liu2022revisiting} have empirically shown that, in general, homophilous graphs contain more low-frequency information in graph spectrum and low-pass filter works well on homophilous graphs; while heterophilous graphs contain more high-frequency information and high-pass filter is preferred. However, they mostly provide empirically analysis without theoretical understanding. In addition, graphs can be in various homophily ratios instead of simply heterophily and homophily, which needs further analysis and understanding. However, \textit{there lacks a systematic investigation and theoretical understanding between the homophily ratio of a graph and the frequency that would be beneficial for representation learning of the graph.}

To fill the gap, 
we systematically analyze the impact of amplitudes of each frequency on graphs with different homophily ratios. We observe that low-frequency importance is positively correlated with the homophily ratio, while high-frequency importance is negatively correlated with the homophily ratio. Meanwhile, middle-frequency importance increases and then decreases as the homophily ratio increases. We also provide theoretical analysis to support our observations. 

These observations suggest that an effective spectral GNN should be able to learn filters that can adapt to the homophily ratio of the graph, i.e., encouraging more important frequencies and discouraging frequencies with lower importance based on the graph ratio. However, this is a non-trivial task and existing spectral GNNs cannot satisfy the goal as they solely learn on downstream tasks and are not aware of the behaviors of the learned filter. As a result, existing spectral GNNs have the following problems: (i) they cannot adapt to varying homophily ratios, resulting in their inability to encourage and discourage different frequencies; and (ii) when only weak supervision is provided, there are not sufficient labels for the models to learn an effective filter, which degrades the performance.


To address the challenges, we propose a novel framework \textbf{NewtonNet}, which can learn filters that can encourage important frequencies while discourage non-important frequencies based on homophily ratio. Specifically, NewtonNet introduces a set of learnable points of filters supervised by label information, which gives the basic shape of the filter. It then adopts Newton Interpolation~\cite{hildebrand1987introduction} on those interpolation points to get the complete filter. As the points are learnable, NewtonNet can approximate arbitrary filters. As those interpolation points determine the shape of the filter, to adapt the filter to the homophily ratio, we design a novel \textbf{shape-aware regularization} on the points to encourage beneficial frequencies and discourage harmful frequencies to achieve an ideal filter shape. Experiments show that NewtonNet outperforms spatial and spectral GNNs on various datasets.

Our \textbf{main contributions} are:  (1) We are the first to establish a well-defined relationship between graph frequencies and homophily ratios. We empirically and theoretically show that the more homophilous the graph is, the more beneficial the low-frequency is; while the more heterophilous the graph is, the more beneficial the high-frequency is.
(2) We propose a novel framework NewtonNet using Newton Interpolation with shape-aware regularization that can learn better filter encourages beneficial frequency and discourages harmful frequency, resulting in better node representations.
(3) Extensive experiments demonstrate the effectiveness of  NewtonNet in various settings.

\section{Preliminaries}
{\bf Notations and Definitions.} 
Let \(\mathcal{G} = (\mathcal{V}, \mathcal{E}, \mathbf{X})\) be an attributed undirected graph, where \(\mathcal{V} = \{v_1,...,v_N\}\) is the set of $N$ nodes, and \(\mathcal{E} \subseteq \mathcal{V} \times \mathcal{V}\) is the set of edges.  \(\mathbf{X} = \{\mathbf{x}_1,...,\mathbf{x}_N\} \in \mathbb{R}^{N \times F}\) is the node feature matrix, where \(\mathbf{x}_i\) is the node features of node \(v_i\) and $F$ is the feature dimension. \(\mathbf{A} \in \mathbb{R}^{N \times N}\) is the adjacency matrix, where \(\mathbf{A}_{ij}=1\)  if \( (v_i, v_j) \in \mathcal{E}\); otherwise \(\mathbf{A}_{ij}\) = 0. \(\mathcal{V_L} = \mathcal{V} - \mathcal{V_U} \subseteq \mathcal{V}\) is the training set with known class labels \(\mathcal{Y}_L= \{y_v, \forall v \in \mathcal{V_L} \} \), where $\mathcal{V_U}$ is the unlabeled node sets. We use $f_{\theta}(\cdot)$ to denote the feature transformation function parameterized by $\theta$ and $g(\cdot)$ to denote the filter function. $\mathbf{D}$ is a diagonal matrix with $\mathbf{D}_{i i}=\sum_{i} \mathbf{A}_{i j}$. The normalized graph Laplacian matrix is given by $\mathbf{L}=\mathbf{I}-\mathbf{D}^{-1 / 2} \mathbf{A D}^{-1 / 2}$. We use $(s, t)$ to denote a node pair and $\mathbf{\tilde{x}} \in \mathbb{R}^{N \times 1}$ to denote a graph signal, which can be considered as a feature vector.

Homophily Ratio measures the ratio of edges connecting nodes with the same label to all the edges, i.e., $h(G) = \frac{|\{(s, t) \in \mathcal{E}: y_s = y_t\}|}{|\mathcal{E}|}$.
Graphs with high homophily and low heterophily have homophily ratios near 1, while those with low homophily and high heterophily have ratios near 0.

\label{Graph_Spectral_Learning}
\noindent{\bf Graph Spectral Learning.} 
For an undirected graph, the Laplacian matrix $\mathbf{L}$ is a positive semidefinite matrix. Its eigendecomposition is $\mathbf{L} = \mathbf{U}\mathbf{\Lambda}\mathbf{U}^{\top}$, where $\mathbf{U} = [\mathbf{u}_1, \cdots,\mathbf{u}_{N}]$ is the eigenvector matrix and $\mathbf{\Lambda} = \operatorname{diag}([\lambda_1, \cdots, \lambda_{N}])$ is the diagonal eigenvalue matrix. Given a graph signal $\mathbf{\tilde{x}}$, its graph Fourier transform is $\mathbf{\hat{x}} = \mathbf{U}^{\top} \mathbf{\tilde{x}}$, and inverse transform is $\mathbf{\tilde{x}} = \mathbf{U} \mathbf{\hat{x}}$. The graph filtering is given as
\begin{equation} \label{filter}
    \mathbf{z} = \mathbf{U} \operatorname{diag}\left[g\left(\lambda_{1}\right), \ldots, g\left(\lambda_{N}\right)\right] \mathbf{U}^{\top} \mathbf{\tilde{x}}=\mathbf{U} g(\mathbf{\Lambda}) \mathbf{U}^{\top} \mathbf{\tilde{x}} ,
\end{equation}
where $g$ is the spectral filter to be learned. However, eigendecomposition is computationally expensive with cubic time complexity. Thus, a polynomial function is usually adopted to approximate filters to avoid eigendecomposition, i.e., Eq.~\ref{filter} is reformulated as a polynomial of $\mathbf{L}$ as
\begin{equation} \label{eq:graph_convolution}
\begin{split}
    \mathbf{z} &= \mathbf{U}g(\mathbf{\Lambda})\mathbf{U}^{\top} \mathbf{\tilde{x}} = \mathbf{U} \Big({\sum}_{k=0}^{K} \theta_k \mathbf{\Lambda}^k \Big) \mathbf{U}^{\top} \mathbf{\tilde{x}} \\
    &= \Big( {\sum}_{k=0}^{K} \theta_k \mathbf{L}^k \Big) \mathbf{\tilde{x}} = g(\mathbf{L})\mathbf{\tilde{x}} ,
\end{split}
\end{equation}
where $g$ is polynomial function and $\theta_k$ are the coefficients of the polynomial.

\section{Impact of Frequencies on Graphs with Various Homophily Ratios} \label{sec:motivating_analyses}
In this section, we first provide a theoretical analysis of the influence of different frequencies on graphs with various homophily ratios. We then perform preliminary experiments, which yield consistent results with our theoretical proof. Specifically, we find that low-frequency is beneficial while high-frequency is harmful to homophilous graphs; the opposite is true for heterophilous graphs. Our findings pave us the way for learning better representations based on the homophily ratio.

\subsection{Theoretical Analysis of Filter Behaviors} \label{theoretical}
Several studies~\cite{bo2021beyond, zhu2020beyond} have empirically shown that low-frequency information plays a crucial role in homophilous graphs, while high-frequency information is more important for heterophilous graphs. However, none of them provide theoretical evidence to support this observation. To fill this gap, we theoretically analyze the relationship between homophily ratio and graph frequency. Specifically, we examine two graph filters that exhibit distinct behaviors in different frequency regions and explore their impacts on graphs with varying homophily ratios.

\begin{lemma}  \label{theorem_1/C}
For a graph $\mathcal{G}$ with $N$ nodes, $C$ classes, and $N/C$ nodes for each class, if we randomly connect nodes to form the edge set $\mathcal{E}$, the expected homophily ratio is $\mathbb{E}(h(\mathcal{G})) = \frac{1}{C}$.
\end{lemma}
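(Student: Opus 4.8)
The plan is to compute the expectation by linearity over the (random) edge set and reduce everything to a single per-edge probability. Write $h(\mathcal{G}) = \frac{1}{|\mathcal{E}|}\sum_{(s,t)\in\mathcal{E}} \mathbf{1}[y_s = y_t]$. Conditioning on the realized number of edges $m = |\mathcal{E}|$, and using that under the ``randomly connect nodes'' model each edge's two endpoints are drawn independently and uniformly from $\mathcal{V}$, linearity of expectation gives $\mathbb{E}[h(\mathcal{G}) \mid |\mathcal{E}| = m] = \frac{1}{m}\sum_{i=1}^{m} \Pr[y_{s_i} = y_{t_i}] = \Pr[y_s = y_t]$ for a single random edge $(s,t)$, a quantity that does not depend on $m$; the tower rule then removes the conditioning on $|\mathcal{E}|$.

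It then remains to evaluate $p := \Pr[y_s = y_t]$ for $s,t$ drawn i.i.d.\ uniformly from the $N$ nodes. Since the class labels partition $\mathcal{V}$ into $C$ blocks of equal size $N/C$, we have $\Pr[y_s = c] = (N/C)/N = 1/C$ for every class $c$, and by independence of the two endpoints $p = \sum_{c=1}^{C} \Pr[y_s = c]\,\Pr[y_t = c] = \sum_{c=1}^{C} (1/C)^2 = 1/C$. Combining this with the previous paragraph yields $\mathbb{E}[h(\mathcal{G})] = 1/C$, as claimed.

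The only genuine subtlety, and the step to state carefully, is the precise random-graph model. If instead one samples each edge as a uniformly random pair of \emph{distinct} nodes, the per-edge probability becomes $C\binom{N/C}{2}\big/\binom{N}{2} = \frac{N-C}{C(N-1)}$, which equals $1/C$ only in the $N\to\infty$ limit; so I would either adopt the independent-endpoint (sampling-with-replacement) convention, under which the identity is exact, or phrase the conclusion as holding up to an $O(1/N)$ correction. Everything else is a routine application of linearity of expectation and the law of total expectation, requiring no further computation.
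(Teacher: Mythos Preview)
Your proof is correct and follows essentially the same approach as the paper: compute the per-edge probability that both endpoints share a class under balanced labels, then apply linearity of expectation over the edges. Your treatment is in fact more careful than the paper's, which glosses over both the conditioning on $|\mathcal{E}|$ and the with-versus-without-replacement subtlety you flag; the paper simply asserts $\Pr(y_s=y_t)=1/C$ and multiplies by $|\mathcal{E}|/|\mathcal{E}|$.
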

The proof is in Appendix~\ref{appec:proof:1}. Lemma~\ref{theorem_1/C} reveals that if the edge set is constructed by random sampling, the expected homophily ratio of the graph is $1/C$. Hence, for a graph $\mathcal{G}$, if $h(\mathcal{G})>1/C$, it is more prone to generate homophilous edges than in the random case. If $h(\mathcal{G})<1/C$, heterophilous edges are more likely to form. 



\begin{theorem} \label{theorem_bound}
Let $\mathcal{G}=\{\mathcal{V}, \mathcal{E}\}$ be an undirected graph. $0 \leq \lambda_1 \cdots \leq \lambda_{N} $ are eigenvalues of its Laplacian matrix $\mathbf{L}$. Let $g_1$ and $g_2$ be two spectral filters satisfying the following two conditions: (1) $g_1(\lambda_i) < g_2(\lambda_i)$ for $1 \le i \le m$; and $g_1(\lambda_i) > g_2(\lambda_i)$ for $m+1 \le i \le N$, where $1< m < N$; and (2) They have the same norm of output values $\|[g_1(\lambda_1), \cdots, g_1(\lambda_{N})]^{\top}\|_2^2 = \|[g_2(\lambda_1), \cdots, g_2(\lambda_{N})]^{\top}\|_2^2 $. For a graph signal $\mathbf{x}$, $\mathbf{x}^{(1)} = g_1(\mathbf{L})\mathbf{x}$ and $\mathbf{x}^{(2)} = g_2(\mathbf{L})\mathbf{x}$ are the corresponding representations after filters $g_1$ and $g_2$. Let $\Delta s = \sum_{(s,t) \in \mathcal{E}} \left[ (x^{(1)}_s - x^{(1)}_t)^2 - (x^{(2)}_s - x^{(2)}_t)^2 \right]$ be the difference between the total distance of connected nodes got by $g_1$ and $g_2$, where $x_s^1$ denotes the $s$-th element of $\mathbf{x}^{(1)}_s$. Then we have $\mathbb{E}[\Delta s] > 0$.
\end{theorem}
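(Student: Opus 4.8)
The plan is to recognize $\Delta s$ as a difference of Dirichlet energies, diagonalize it in the graph Fourier basis, take the expectation over the (random) graph signal, and reduce the claim to an elementary inequality comparing the two filter spectra.

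First, using the identity $\sum_{(s,t)\in\mathcal{E}}(y_s-y_t)^2 = \mathbf{y}^{\top}\mathbf{L}\mathbf{y}$, we write $\Delta s = (\mathbf{x}^{(1)})^{\top}\mathbf{L}\,\mathbf{x}^{(1)} - (\mathbf{x}^{(2)})^{\top}\mathbf{L}\,\mathbf{x}^{(2)}$. Let $\hat{\mathbf{x}} = \mathbf{U}^{\top}\mathbf{x}$ with components $\hat{x}_i$. Since $\mathbf{x}^{(j)} = \mathbf{U}g_j(\mathbf{\Lambda})\mathbf{U}^{\top}\mathbf{x}$, $\mathbf{L} = \mathbf{U}\mathbf{\Lambda}\mathbf{U}^{\top}$, and $\mathbf{U}$ is orthogonal, a direct computation gives $(\mathbf{x}^{(j)})^{\top}\mathbf{L}\,\mathbf{x}^{(j)} = \sum_{i=1}^{N}\lambda_i\,g_j(\lambda_i)^2\,\hat{x}_i^2$ for $j\in\{1,2\}$, hence
\[
\Delta s = \sum_{i=1}^{N}\lambda_i\big(g_1(\lambda_i)^2 - g_2(\lambda_i)^2\big)\hat{x}_i^2 .
\]
Treating $\mathbf{x}$ as a random signal with i.i.d.\ zero-mean entries (so $\mathbb{E}[\mathbf{x}\mathbf{x}^{\top}]\propto\mathbf{I}$), we get $\mathbb{E}[\hat{x}_i^2] = \mathbf{u}_i^{\top}\mathbb{E}[\mathbf{x}\mathbf{x}^{\top}]\mathbf{u}_i = \sigma^2$, the same constant $\sigma^2>0$ for every $i$. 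Writing $a_i := g_1(\lambda_i)^2 - g_2(\lambda_i)^2$, it therefore remains to prove the scalar inequality $\sum_{i=1}^{N}\lambda_i a_i > 0$.

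Because filter amplitudes are nonnegative, condition (1) turns the single-crossing hypothesis on $g_1,g_2$ into one on their squares: $a_i < 0$ for $i\le m$ and $a_i > 0$ for $i\ge m+1$; condition (2) gives $\sum_{i=1}^{N}a_i = 0$. Note also that $\lambda_m < \lambda_{m+1}$: if $\lambda_m = \lambda_{m+1}$ then conditions (1) for indices $m$ and $m+1$ would require both $g_1(\lambda_m) < g_2(\lambda_m)$ and $g_1(\lambda_m) > g_2(\lambda_m)$, a contradiction, so the hypotheses are vacuous otherwise. Pick any $\theta^{\star}\in(\lambda_m,\lambda_{m+1})$. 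Since $\sum_i a_i = 0$,
\[
\sum_{i=1}^{N}\lambda_i a_i = \sum_{i=1}^{N}(\lambda_i-\theta^{\star})a_i .
\]
For $i\le m$ we have $\lambda_i - \theta^{\star} < 0$ and $a_i < 0$; for $i\ge m+1$ we have $\lambda_i - \theta^{\star} > 0$ and $a_i > 0$; in every case $(\lambda_i-\theta^{\star})a_i > 0$, and since $m\ge 1$ at least one term is strictly positive, so $\sum_i \lambda_i a_i > 0$ and thus $\mathbb{E}[\Delta s] > 0$.

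The reduction in the first two paragraphs is routine once one fixes a model for the random signal; the crux is the scalar inequality, and the only delicate points there are (i) invoking nonnegativity of the filter outputs to pass from a single crossing of $g_1,g_2$ to a single crossing of $g_1^2-g_2^2$, and (ii) obtaining a \emph{strict} inequality, which needs the observation that the hypotheses force $\lambda_m < \lambda_{m+1}$ so that a separating threshold $\theta^{\star}$ exists. If the paper prefers to avoid the nonnegativity assumption, one can instead restate condition (1) directly as a single-crossing condition on $g_1^2-g_2^2$; the rest of the argument is unchanged.
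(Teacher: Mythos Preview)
Your proof is correct and follows essentially the same route as the paper: express $\Delta s$ via the Dirichlet energy $\mathbf{y}^\top\mathbf{L}\mathbf{y}$, diagonalize in the Fourier basis, assume the spectral coefficients $\hat{x}_i$ (the paper's $c_i$) are i.i.d., and reduce to the scalar inequality $\sum_i \lambda_i\big(g_1^2(\lambda_i)-g_2^2(\lambda_i)\big)>0$. Your centering-by-$\theta^\star$ step is a tidier version of the paper's chain of bounds through $\lambda_m$ and $\lambda_{m+1}$, and you are more explicit than the paper about the two technical points you flag---nonnegativity of the filter amplitudes (needed to pass the single-crossing condition to the squares) and the forced strict gap $\lambda_m<\lambda_{m+1}$---both of which the paper's argument uses without comment.
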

Note that in the theorem, we assume $g_1$ and $g_2$ have the same norm to avoid trivial solutions. The proof of the theorem and the discussion of this assumption is in Appendix~\ref{appec:proof:2}. Theorem~\ref{theorem_bound} reveals that if $g_1$ has higher amplitude in the high-frequency region and lower amplitude in the low-frequency region compared to $g_2$, it will result in less similarity in representations of  connected nodes. In contrast, $g_2$ increases the representation similarity of connected nodes. As most edges in heterophilous graphs are heterophilous edges, $g_1$ is preferred to increase distances between heterophilously connected nodes. In contrast, homophilous graphs prefer $g_2$ to decrease distances between homophilously connected nodes. Next, we theoretically prove the claim that low-frequency is beneficial for the prediction of homophilous graphs, while high-frequency is beneficial for heterophilous graphs.

\begin{theorem} \label{theorem_trasition_phase}
Let $\mathcal{G}=\{\mathcal{V}, \mathcal{E}\}$ be a balanced undirected graph with $N$ nodes, $C$ classes, and $N/C$ nodes for each class. $\mathcal{P}_{in}$ is the set of possible pairs of nodes from the same class. $\mathcal{P}_{out}$ is the set of possible pairs of nodes from different classes. $g_1$ and $g_2$ are two filters same as Theorem~\ref{theorem_bound}. Given an arbitrary graph signal $\mathbf{x}$, let $d^{(1)}_{in} = \sum_{(s, t) \in \mathcal{P}_{in}} \big(x^{(1)}_s - x^{(1)}_t \big)^2$ be the total intra-class distance, $d^{(1)}_{out} = \sum_{(s, t) \in \mathcal{P}_{out}} \big(x^{(1)}_s - x^{(1)}_t \big)^2$ be the total inter-class distance, and $\bar{d}^{(1)}_{in} =  d^{(1)}_{in}/|\mathcal{P}_{in}|$ be the average intra-class distance while $\bar{d}^{(1)}_{out} = d^{(1)}_{out}/|\mathcal{P}_{out}|$ be the average inter-class distance. $\Delta \bar{d}^{(1)} = \bar{d}^{(1)}_{out} - \bar{d}^{(1)}_{in}$ is the difference between average inter-distance and intra-class distance. $d^{(2)}_{out}$, $d^{(2)}_{out}$, $\bar{d}^{(2)}_{in}$, $\bar{d}^{(2)}_{out}$, and $\Delta \bar{d}^{(2)}$ are corresponding values defined similarly on $\mathbf{x}^{(2)}$. If $\mathbb{E}[\Delta s] > 0$, we have: (1) when $h > \frac{1}{C}$, $\mathbb{E}[\Delta \bar{d}^{(1)}] < \mathbb{E}[\Delta \bar{d}^{(2)}]$; and (2) when $h < \frac{1}{C}$, $\mathbb{E}[\Delta \bar{d}^{(1)}] > \mathbb{E}[\Delta \bar{d}^{(2)}]$. 
\end{theorem}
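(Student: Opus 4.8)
The plan is to collapse everything onto two scalars — the expected change, on replacing $g_2$ by $g_1$, of the average intra-class squared distance, $a := \mathbb{E}[\bar d^{(1)}_{in}] - \mathbb{E}[\bar d^{(2)}_{in}]$, and of the average inter-class squared distance, $b := \mathbb{E}[\bar d^{(1)}_{out}] - \mathbb{E}[\bar d^{(2)}_{out}]$ — and then to pin down the sign of $b-a$, since $\mathbb{E}[\Delta\bar d^{(1)}] - \mathbb{E}[\Delta\bar d^{(2)}] = b-a$ and the two conclusions are exactly $b-a<0$ (for $h>1/C$) and $b-a>0$ (for $h<1/C$). First I would partition $\mathcal{E}$ into the homophilous edges $\mathcal{E}_{in}=\{(s,t)\in\mathcal{E}: y_s=y_t\}$ with $|\mathcal{E}_{in}|=h|\mathcal{E}|$ and the heterophilous edges $\mathcal{E}_{out}$ with $|\mathcal{E}_{out}|=(1-h)|\mathcal{E}|$, and split $\Delta s$ over these two sets. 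Modeling the random graph so that, conditional on $h$, each homophilous edge is uniform on $\mathcal{P}_{in}$ and each heterophilous edge uniform on $\mathcal{P}_{out}$, taking expectations gives $\mathbb{E}[\Delta s] = |\mathcal{E}|\big(h a + (1-h) b\big)$, so the hypothesis $\mathbb{E}[\Delta s]>0$ becomes the single linear inequality $h a + (1-h) b > 0$.

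This inequality alone does not separate $a$ from $b$, so I would next exploit the second condition in Theorem~\ref{theorem_bound}, that $g_1$ and $g_2$ have equal output norm. Using the identity that the sum of squared differences of a vector $\mathbf{v}\in\mathbb{R}^N$ over all unordered pairs equals $N\|\mathbf{v}\|_2^2-(\mathbf{1}^\top\mathbf{v})^2$ and applying it to $\mathbf{v}=\mathbf{x}^{(i)}=g_i(\mathbf{L})\mathbf{x}$, the equal-norm condition forces $\mathbf{x}^{(1)}$ and $\mathbf{x}^{(2)}$ to have the same expected squared length and the same mean, hence the total pairwise squared distance is conserved: $\mathbb{E}[d^{(1)}_{in}+d^{(1)}_{out}] = \mathbb{E}[d^{(2)}_{in}+d^{(2)}_{out}]$. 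Dividing the intra- and inter-class parts by $|\mathcal{P}_{in}|$ and $|\mathcal{P}_{out}|$ turns this into the conservation law $|\mathcal{P}_{in}|\,a + |\mathcal{P}_{out}|\,b = 0$, and for the balanced graph $|\mathcal{P}_{out}|/|\mathcal{P}_{in}| = N(C-1)/(N-C)$, which is $\approx C-1$ when $N\gg C$; thus $a = -\tfrac{|\mathcal{P}_{out}|}{|\mathcal{P}_{in}|}\,b$.

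Combining the two relations is then routine: substituting $a = -\tfrac{|\mathcal{P}_{out}|}{|\mathcal{P}_{in}|}\,b$ into $h a + (1-h) b > 0$ yields $b\big(1 - h\tfrac{C(N-1)}{N-C}\big) > 0$, whose threshold $\tfrac{N-C}{C(N-1)}\to 1/C$, so modulo this lower-order correction $\operatorname{sign}(b) = \operatorname{sign}(1/C - h)$. Since $\mathbb{E}[\Delta\bar d^{(1)}]-\mathbb{E}[\Delta\bar d^{(2)}] = b-a = b\big(1+\tfrac{|\mathcal{P}_{out}|}{|\mathcal{P}_{in}|}\big)$ has the same sign as $b$, it is negative when $h>1/C$ and positive when $h<1/C$, which are precisely parts (1) and (2). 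If one wants the threshold to be exactly $1/C$, it is cleanest to invoke Lemma~\ref{theorem_1/C} and work with the asymptotic pair counts $|\mathcal{P}_{in}|/|\mathcal{P}_{out}|\to 1/(C-1)$.

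The hard part is the conservation step: turning the purely spectral statement $\|[g_1(\lambda_k)]_k\|_2 = \|[g_2(\lambda_k)]_k\|_2$ into the Euclidean statement $\mathbb{E}\|\mathbf{x}^{(1)}\|_2^2 = \mathbb{E}\|\mathbf{x}^{(2)}\|_2^2$ requires passing from $\|\mathbf{x}^{(i)}\|_2^2 = \sum_k g_i(\lambda_k)^2\hat x_k^2$ to $\sum_k g_i(\lambda_k)^2$ — hence an assumption on the spectral profile of $\mathbf{x}$ (a roughly flat $\hat x_k^2$, or an average over a random signal) — and one must also kill the DC term $\mathbf{1}^\top\mathbf{x}^{(i)}$ via a near-regularity assumption on $\mathcal{G}$ or a zero-mean assumption on $\mathbf{x}$. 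Justifying the uniform-within-type edge model used to obtain $\mathbb{E}[\Delta s] = |\mathcal{E}|(h a + (1-h)b)$ is a secondary point of the same flavor; I would state both as standing assumptions at the start of the proof, since this is exactly where the expectations in the statement are doing their work.
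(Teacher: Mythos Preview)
Your approach is correct under the assumptions you flag, but it is genuinely different from the paper's. The paper does not use the equal-norm condition from Theorem~\ref{theorem_bound} as a conservation law at all. Instead it makes a single heuristic simplification --- ``solely consider the direct influence of graph convolution on connected nodes'' --- which amounts to declaring that only edge pairs contribute to the change in $d_{in}$ and $d_{out}$. This yields two equations immediately, $\mathbb{E}[d^{(1)}_{in}-d^{(2)}_{in}] = h\,\mathbb{E}[\Delta s]$ and $\mathbb{E}[d^{(1)}_{out}-d^{(2)}_{out}] = (1-h)\,\mathbb{E}[\Delta s]$, and after dividing by the (self-loop-inclusive) pair counts $|\mathcal{P}_{in}|=N^2/(2C)$ and $|\mathcal{P}_{out}|=N^2(C-1)/(2C)$ one reads off $\mathbb{E}[\Delta\bar d^{(1)}-\Delta\bar d^{(2)}] = \tfrac{2C}{(C-1)N^2}(1-Ch)\,\mathbb{E}[\Delta s]$ with the threshold $1/C$ appearing exactly. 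Your route replaces that heuristic with a second relation coming from total-pairwise-distance conservation; this is arguably more principled because it actually uses the norm constraint in Theorem~\ref{theorem_bound}, but the price is the extra white-spectrum and zero-DC assumptions you list, and you recover $1/C$ only in the $N\gg C$ limit rather than exactly. One small slip to fix: the equal-norm condition by itself does \emph{not} force $\mathbf{1}^\top\mathbf{x}^{(1)}=\mathbf{1}^\top\mathbf{x}^{(2)}$ as your second paragraph asserts --- indeed $g_1(\lambda_1)<g_2(\lambda_1)$ by hypothesis --- so the zero-mean or regularity assumption you mention in the caveats is essential, not optional.
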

The proof is in Appendix~\ref{appec:proof:3}. In Theorem~\ref{theorem_trasition_phase}, $\Delta \bar{d}$ shows the discriminative ability of the learned representation, where a large $\Delta \bar{d}$ indicates that representations of intra-class nodes are similar, while representations of inter-class nodes are dissimilar. As $g_1$ and $g_2$ are defined as described in Theorem~\ref{theorem_bound}, we can guarantee $\mathbb{E}[\Delta s] > 0$. Hence, homophilous graphs with $h > 1/C$ favor $g_1$; while heterophilous graphs with $h < 1/C$ favor $g_2$. This theorem shows that the \textit{transition phase} of a balanced graph is $1/C$, where the transition phase is the point whether lower or higher frequencies are more beneficial changes. 

Theorem~\ref{theorem_bound} and Theorem~\ref{theorem_trasition_phase} together guide us to the desired filter shape. When $h > 1/C$, the filter should involve more low-frequency and less high-frequency. When $h < 1/C$, the filter need to decrease the low-frequency and increase the high-frequency to be more discriminative. 

\begin{figure}[t]
\centering
\subfigure[Candidate filters]{
\begin{minipage}[t]{0.42\linewidth}
\centering
\includegraphics[width=1\linewidth]{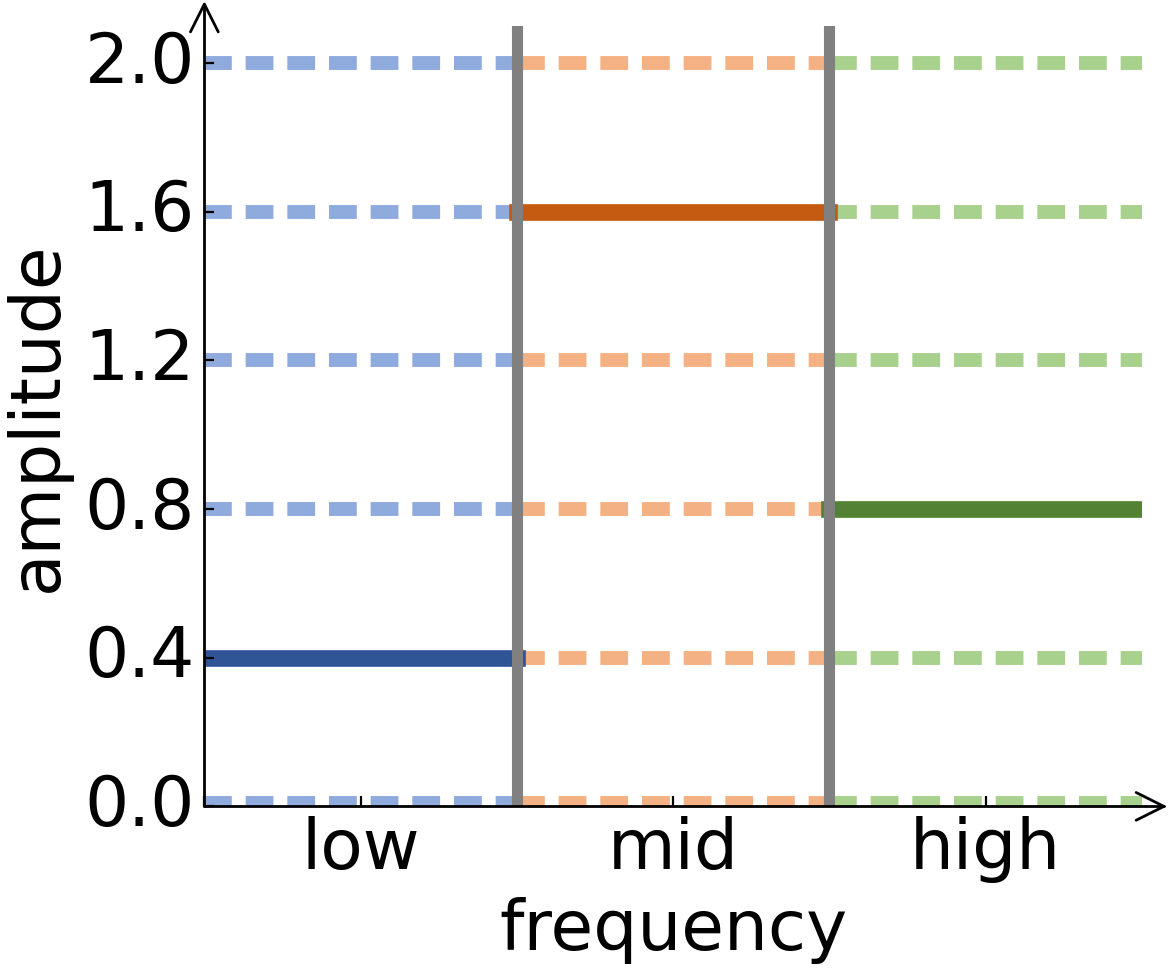} 
\end{minipage}}~~~
\hspace{0.7em}
\subfigure[Frequency importance]{
\begin{minipage}[t]{0.45\linewidth}
\centering
\includegraphics[width=1\linewidth]{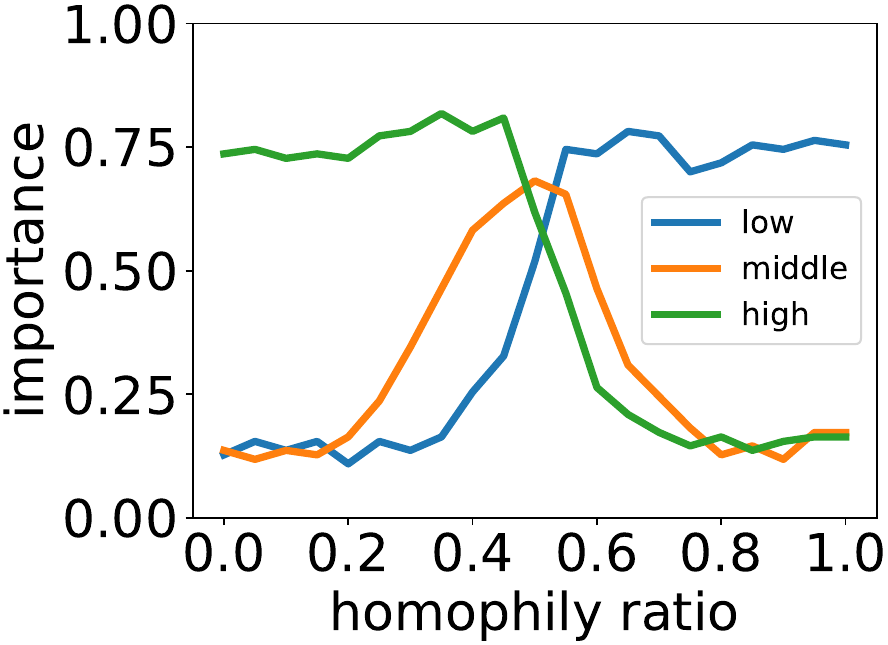} 
\end{minipage}}
\centering
\vskip -1em
\caption{(a) Candidate filters. Blue, orange, and green dashed lines show the choices of amplitudes of low, middle, and high-frequency, respectively. We vary the amplitude of low, middle, and high-frequency among \{0, 0.4, 0.8, 1.2, 1.6, 2.0\}, which gives $6^3$ candidate filters. The solid line shows one candidate filter with $g(\lambda_{low}=0.4)$, $g(\lambda_{mid}=1.6)$ and $g(\lambda_{high})=0.8$. (b) The frequency importance of low, middle, and high on graphs with various homophily ratios.}
\label{fig:mag_imp}
\vskip -1em
\end{figure}

\subsection{Empirical Analysis of Varying Homophily Ratios} \label{sec:influence_frequency_amplitude}
With the theoretical understanding, in this subsection, we further empirically analyze and verify the influence of high- and low- frequencies on node classification performance on graphs with various homophily ratios, which help us to design better GNN for node classification. 
As high-frequency and low-frequency are relative concepts, there is no clear division between them. Therefore, to make the granularity of our experiments better and to make our candidate filters more flexible, we divide the graph spectrum into three parts: low-frequency $0 \leq \lambda_{low} < \frac{2}{3}$, middle-frequency $\frac{2}{3} \leq \lambda_{mid} < \frac{4}{3}$, and high-frequency $\frac{4}{3} \leq \lambda_{high} \leq 2$. We perform experiments on synthetic datasets generated by contextual stochastic block model (CSBM)~\cite{deshpande2018contextual}, which is a popular model that allows us to create synthetic attributed graphs with controllable homophily ratios~\cite{ma2022is, chien2021adaptive}. A detailed description of CSBM is in Appendix~\ref{appen_synthetic_datasets}.

To analyze which parts of frequencies in the graph spectrum are beneficial or harmful for graphs with different homophily ratios, we conduct the following experiments.
We first generate synthetic graphs with different homophily ratios as $\{0, 0.05, 0.10, \cdots, 1.0\}$. Then for each graph, we conduct eigendecomposition as $\mathbf{L} = \mathbf{U}\mathbf{\Lambda}\mathbf{U}^{\top}$ and divide the eigenvalues into three parts: low-frequency $0 \leq \lambda_{low} < \frac{2}{3}$, middle-frequency $\frac{2}{3} \leq \lambda_{mid} < \frac{4}{3}$, and high-frequency $\frac{4}{3} \leq \lambda_{high} \leq 2$, because three-part provides more flexibility of the filter.
As shown in Fig.~\ref{fig:mag_imp}(a), we vary the output values of the filter, i.e., amplitudes of $g(\lambda_{low}), g(\lambda_{mid})$, and $g(\lambda_{high})$ among $\{0, 0.4, 0.8, 1.2, 1.6, 2.0\}$ respectively, which leads to $6^3$ combinations of output values of the filter. 
Then we get transformed graph Laplacian $g(\mathbf{L}) = \mathbf{U}g(\mathbf{\Lambda})\mathbf{U}^{\top}$. 
For each filter, we use $g(\mathbf{L})$ as a convolutional matrix to learn node  representation as $\mathbf{Z} = g(\mathbf{L}) f_{\theta}(\mathbf{X})$, where $f_{\theta}$ is the feature transformation function implemented by an MLP. In summary, the final node representation is given by $\mathbf{z} = \mathbf{U} \operatorname{diag}\left[g(\lambda_{low}), g(\lambda_{mid}), g(\lambda_{high}) \right] \mathbf{U}^{\top} f_{\theta}(\mathbf{x})$. 
By varying the amplitudes of each part of frequency, we vary how much a certain frequency is included in the representations. For example, the larger the value of $g(\lambda_{low})$ is, the more low-frequency information is included in $\mathbf{z}$. 
We then add a Softmax layer on top of $\mathbf{Z}$ to predict the label of each node. For each synthetic graph, we split the nodes into 2.5\%/2.5\%/95\%  for train/validation/test. For each filter, we conduct semi-supervised node classification on each synthetic graph and record the classification performance.

\textbf{Frequency Importance.} With the above experiments, to understand which filter works best for which homophily ratio, for each homophily ratio, we first select filters that give top 5\% performance among the $6^3$ filters. Let $\mathcal{F}_h=\{[g_h^i(\lambda_{low}), g_h^i(\lambda_{mid}), g_h^i(\lambda_{high})]\}_{i=1}^K$ be the set of the best filters for homophily ratio $h$, where $[g_h^i(\lambda_{low}), g_h^i(\lambda_{mid}), g_h^i(\lambda_{high})]$ means the $i$-th best filter for $h$ and $K$ is the number of best filters. Then, importance scores of low, middle, and high frequency are given as
\begin{equation} 
\small
\begin{split}
    \mathcal{I}_h^{low} &= \frac{1}{|\mathcal{F}_h|} {\sum}_{i=1}^{|\mathcal{F}_h|} g_h^i(\lambda_{low}), \quad \mathcal{I}_h^{mid} = \frac{1}{|\mathcal{F}_h|} {\sum}_{i=1}^{|\mathcal{F}_h|} g_h^i(\lambda_{mid}), 
    \\ \mathcal{I}_h^{high} &= \frac{1}{|\mathcal{F}_h|} {\sum}_{i=1}^{|\mathcal{F}_h|} g_h^i(\lambda_{high}).
\end{split}
\end{equation}
The importance score of a certain frequency shows how much that frequency should be involved in the representation to achieve the best performance. 
The findings in Fig.~\ref{fig:mag_imp}(b) reveal two important observations: (i) as the homophily ratio increases, there is a notable increase in the importance of low-frequency, a peak and subsequent decline in the importance of middle-frequency, and a decrease in the importance of high-frequency; (ii) the transition phase occurs around the homophily ratio of $1/2$ as there are two distinct classes, resulting in a significant shift in the importance of frequencies. The frequency importance of graphs generated with other parameters is in Appendix~\ref{appen:more_importances}.
These observations guide us in designing the spectral filter. \textit{For high-homophily graphs, we want to preserve more low-frequency while removing high frequencies. For low-homophily graphs, more high frequencies and fewer low frequencies are desired. For graphs with homophily ratios near the transition phase, we aim the model to learn adaptively}.

\section{The Proposed Framework NewtonNet} \label{sec:methodology}

\begin{figure}[]
\centering
\includegraphics[width=1.0\linewidth]{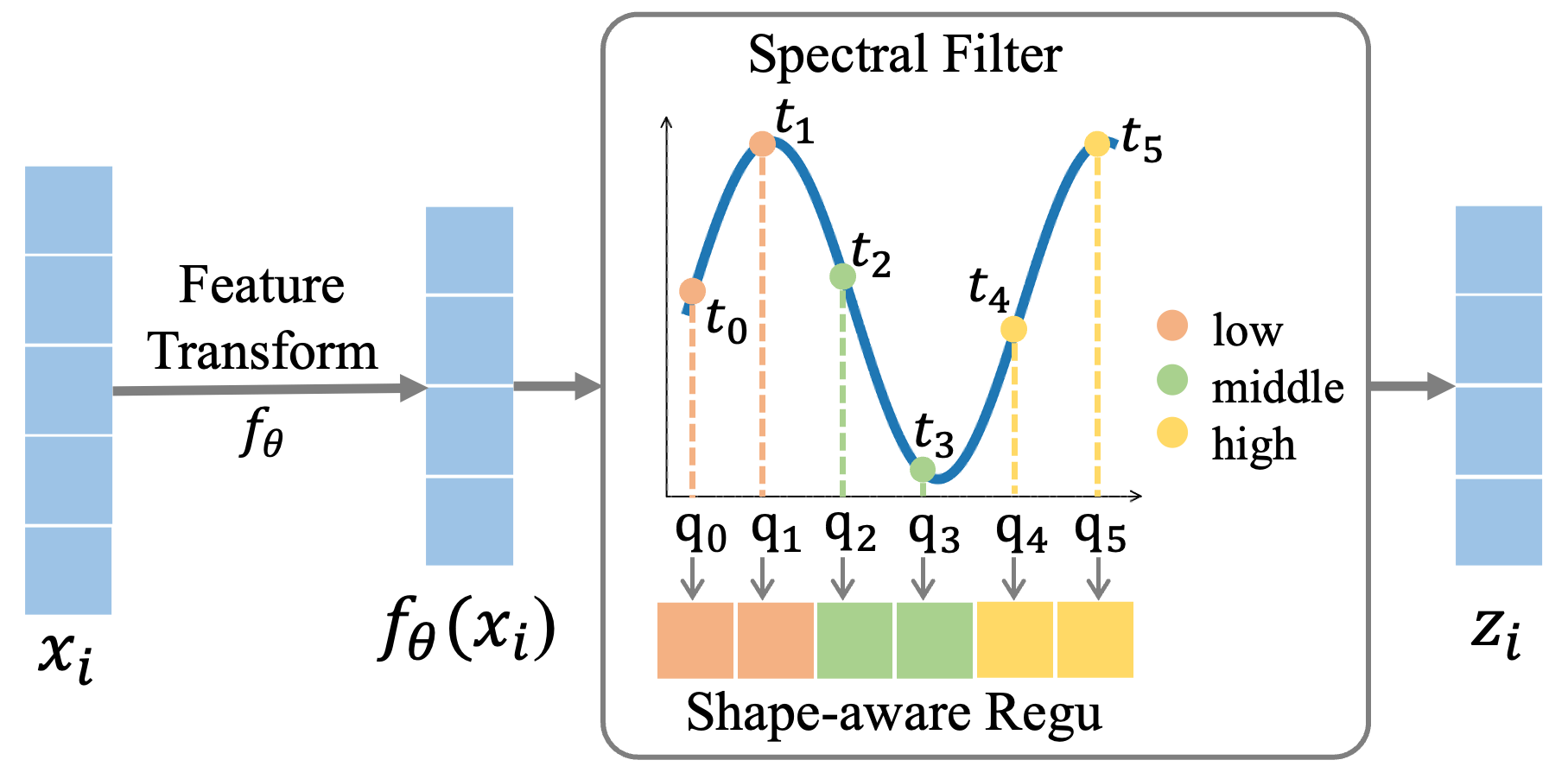}
\vskip -0.75em
\caption{The overall framework.}
\label{framework}
\end{figure}

The preceding theoretical and empirical analysis show that the decision to include more or less of a specific frequency is contingent upon the homophily ratio. Hence, it is desirable for the model to learn this ratio and encourage or discourage low, middle, or high-frequency accordingly. However, existing spectral methods either use predefined filters or let the model freely learn a filter solely based on labels, which lacks an efficient mechanism to promote or discourage low, middle, or high-frequency based on homophily ratio, especially when the label is sparse. Therefore, the regularization of spectral filter values poses a significant challenge for existing methods. To address the challenge, we propose a novel framework NewtonNet, which can regularize spectral filter values and encourage or discourage different frequencies according to the homophily ratio. The basic idea of NewtonNet is to introduce some points $\{(q_0, t_0), \cdots, (q_n, t_K)\}$ of the graph filter, where $\{q_i\}$ are fixed and $\{t_i\}$ are learnable. Fig.~\ref{framework} shows the case when $K=5$. These points give the basic shape of the filter, and we adopt Newton Interpolation to approximate the filter function based on these points. As those points are learnable, we can approximate any filters. To incorporate the prior knowledge, we propose a novel shape-aware regularization to regularize those learnable points to adapt to the homophily ratio of the graph. The filter is then used to learn node representation and optimized for node classification. Next, we introduce the details.

\subsection{Newton Interpolation and NewtonNet}
In order to learn flexible filters and incorporate prior knowledge in Section~\ref{sec:motivating_analyses} in the filter, i.e., encouraging or discouraging different frequencies based on the homophily ratio, we first propose to adopt a set of $K+1$ learnable points $\mathcal{S} = \{(q_0, t_0), \cdots, (q_n, t_K)\}$ of the filter. Interpolated points $\mathbf{q} = [q_0, \cdots, q_K]$ are fixed and distributed equally among low, middle, and high-frequency depending on their values. The corresponding values $\mathbf{t} = [t_0, \cdots, t_K]$ are learnable and updated in each epoch of training. These learnable points gives the basic shape of a filter. Then we can use interpolation method to approximate a filter function $g$ by passing these points such that $g(q_k) = t_k$, where $0 \le k \le K$. This gives us two advantages: (\textbf{i}) As $\mathbf{t}$ determines the basic shape of a filter $g$ and is learnable, we can learn arbitrary filter by learning $\mathbf{t}$ that minimizes the node classification loss; and (\textbf{ii}) As the filter $g$ is determined by $\mathbf{t}$, it allows us to add regularization on $\mathbf{t}$ to encourage beneficial and discourage harmful frequencies based on homophily ratio.

Specifically, with these points, we adopt Newton Interpolation~\cite{hildebrand1987introduction} to learn the filter, which is a widely-used method for approximating a function by fitting a polynomial on a set of given points. Given $K+1$ points and their values $\mathcal{S} = \{(q_0, t_0), \cdots, (q_n, t_K)\}$ of an unknown function $g$, where $q_k$ are pairwise distinct, $g$ can be interpolated based on Newton Interpolation as follows.
\begin{definition}[Divided Differences] \label{defi_divided_differences}
The divided differences $\hat{g}_{\mathbf{t}}$ defined on $\mathbf{t}$ are given recursively as
\begin{equation} \label{eq:defi_divided_differences} 
\small
\begin{split}
    & {\hat{g}_{\mathbf{t}}[q_{k}] } = t_{k}, \qquad \qquad \qquad  0 \leq k \leq K; \\
    & {\hat{g}_{\mathbf{t}}[q_{k}, \cdots, q_{k+j}] } = \frac{\hat{g}_{\mathbf{t}}[q_{k+1}, \cdots, q_{k+j}] - \hat{g}_{\mathbf{t}}[q_{k}, \cdots, q_{k+j-1}]}{q_{k+j}-q_{k}}, \\
    & \qquad \qquad \qquad \qquad \qquad  1 \leq j \leq K, 0 \leq k \leq K-j.
\end{split}
\end{equation}
\end{definition}
\begin{lemma}[Newton Interpolation] \label{lemma_newton_interpolation}
$g(x)$ can be interpolated by Newton interpolation as:
\begin{equation} \label{eq:newton_interpolation}
\begin{split}
    g(q) \approx \hat{g}(q) &= {\sum}_{k=0}^{K} \left[ a_{k} n_{k}(q) \right] \\ 
    &= {\sum}_{k=0}^{K} \left\{ \hat{g}_{\mathbf{t}}[q_{0},\cdots,q_{k}] {\prod}_{i=0}^{k-1}(q - q_{i}) \right\},
\end{split}
\end{equation}
where $n_{k}(q) = {\prod}_{i=0}^{k-1}(q - q_{i})$ are Newton polynomial basis and $a_{k} = \hat{g}_{\mathbf{t}}[q_{0},\cdots,q_{k}] $ are coefficients.
\end{lemma}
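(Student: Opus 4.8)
\textbf{Proof proposal for Lemma~\ref{lemma_newton_interpolation} (Newton Interpolation).}

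The plan is to establish the formula in Eq.~\eqref{eq:newton_interpolation} by a standard induction on the number of interpolation points, which is the cleanest route since the divided differences in Definition~\ref{defi_divided_differences} are themselves defined recursively. First I would set up the inductive claim: for every $m$ with $0 \le m \le K$, the polynomial $p_m(q) = \sum_{k=0}^{m} \hat{g}_{\mathbf{t}}[q_0,\dots,q_k]\prod_{i=0}^{k-1}(q-q_i)$ is the unique polynomial of degree at most $m$ satisfying $p_m(q_k) = t_k$ for all $0 \le k \le m$. The base case $m=0$ is immediate: $p_0(q) = \hat{g}_{\mathbf{t}}[q_0] = t_0$, a constant matching the single point $(q_0,t_0)$, and uniqueness of a degree-$0$ interpolant through one point is trivial.

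For the inductive step, I would assume the claim for $m-1$ and write $p_m(q) = p_{m-1}(q) + c_m\, n_m(q)$ where $n_m(q) = \prod_{i=0}^{m-1}(q-q_i)$ and $c_m$ is the leading coefficient to be determined. Since $n_m(q_k) = 0$ for every $k \le m-1$, the polynomial $p_m$ automatically reproduces $t_0,\dots,t_{m-1}$ regardless of $c_m$; the single remaining condition $p_m(q_m) = t_m$ pins down $c_m$ uniquely. The heart of the argument — and the step I expect to be the main obstacle — is showing that this forced value of $c_m$ equals exactly the divided difference $\hat{g}_{\mathbf{t}}[q_0,\dots,q_m]$. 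The natural way to do this is to identify $c_m$ as the leading (degree-$m$) coefficient of the interpolating polynomial, then prove by a separate induction that the leading coefficient of the unique degree-$\le m$ interpolant through $(q_0,t_0),\dots,(q_m,t_m)$ satisfies the recursion $\frac{[\text{lead coeff through }q_1,\dots,q_m] - [\text{lead coeff through }q_0,\dots,q_{m-1}]}{q_m - q_0}$, which is precisely the recursion defining $\hat{g}_{\mathbf{t}}[q_0,\dots,q_m]$ in Eq.~\eqref{eq:defi_divided_differences}. This follows by writing the degree-$m$ interpolant as an affine combination of the two degree-$(m-1)$ interpolants on the overlapping node sets $\{q_0,\dots,q_{m-1}\}$ and $\{q_1,\dots,q_m\}$, checking the interpolation conditions at the shared and endpoint nodes, and reading off the top coefficient.

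Finally, I would note that uniqueness of the interpolant at each stage is guaranteed because the $q_k$ are pairwise distinct (a difference of two degree-$\le m$ interpolants would be a degree-$\le m$ polynomial with $m+1$ roots, hence zero), so the polynomial $\hat{g}$ produced by Eq.~\eqref{eq:newton_interpolation} is well-defined and independent of the ordering of the nodes. Setting $m = K$ gives $\hat{g}(q_k) = t_k$ for all $k$, and since $g(q_k) = t_k$ by the definition of the learnable points, $\hat{g}$ is the desired polynomial approximation, with $a_k = \hat{g}_{\mathbf{t}}[q_0,\dots,q_k]$ and $n_k(q) = \prod_{i=0}^{k-1}(q-q_i)$ as claimed. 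The only genuinely delicate bookkeeping is the affine-combination identity for the leading coefficients; everything else is routine.
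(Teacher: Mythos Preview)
Your inductive argument is the standard textbook derivation of the Newton form and is correct; the identification of the leading coefficient with the divided difference via the affine combination of the two overlapping degree-$(m-1)$ interpolants is exactly the classical route. Note, however, that the paper does not supply its own proof of this lemma: it is stated as a known result with a citation to Hildebrand's \emph{Introduction to Numerical Analysis}, so there is no paper-side argument to compare against beyond observing that your proposal matches the conventional treatment.
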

Eq.~\ref{eq:newton_interpolation} satisfies $g(q_k) = t_k$ and $K$ is the power of the polynomial. As mentioned in Eq.~\ref{eq:graph_convolution}, we directly apply the spectral filter $g$ on Laplacian $\mathbf{L}$ to get the final representations. Then Eq.~\ref{eq:newton_interpolation} is reformulated as $g(\mathbf{L}) = {\sum}_{k=0}^{K} \left\{ \hat{g}_{\mathbf{t}}[q_{0},\cdots,q_{k}] {\prod}_{i=0}^{k-1}(\mathbf{L} - q_{i}\mathbf{I}) \right\}$, where $\mathbf{I}$ is the identity matrix. 

Following~\cite{chien2021adaptive}, to increase the expressive power, the node features are firstly transformed by a transformation layer $f_{\theta}$, then a convolutional matrix learned by NewtonNet is applied to transformed features, which is shown in Fig.~\ref{framework}. Mathematically, NewtonNet can be formulated as:
\begin{equation} \label{newtonnet}
    \mathbf{Z} = {\sum}_{k=0}^{K} \left\{ \hat{g}_{\mathbf{t}}[q_{0},\cdots,q_{k}] {\prod}_{i=0}^{k-1}(\mathbf{L} - q_{i}\mathbf{I}) \right\} f_{\theta}(\mathbf{X})  ,
\end{equation}
where $f_{\theta}$ is a feature transformation function, and we use a two-layer MLP in this paper.  $q_0, \cdots, q_K$ can be any points $ \in [0, 2]$. We use the equal-spaced points, i.e., $q_i = 2i/K$. $q_i$ controls different frequencies depending on its value.  $t_0, \cdots, t_K$ are learnable filter output values and are randomly initialized. By learning $t_i$, we are not only able to learn arbitrary filters, but can also apply regularization on $t_i$'s to adapt the filter based on homophily ratio, which will be discussed in Section~\ref{sec:shape-aware_regularization}. With the node representation $\mathbf{z}_v$ of node $v$, $v$'s label distribution probability is predicted as $\hat{\boldsymbol{y}}_v = \operatorname{softmax}(\mathbf{z}_v)$. 

\subsection{Shape-aware Regularization} \label{sec:shape-aware_regularization}
As $\{t_{0}, \cdots, t_{k}\}$ determines the shape of filter $g$, we can control the filter shape by adding constraints on learned function values. As shown in Fig.~\ref{framework}, we slice $\mathbf{t}$ into three parts 
\begin{equation} \label{temp_slice}
\begin{split}
\mathbf{t}_{low} &= [t_{0},\cdots, t_{i-1}],\ \mathbf{t}_{mid} = [t_{i},\cdots, t_{j-1}], \\
\mathbf{t}_{high} &= [t_{j},\cdots, t_{K}],  \quad  0 < i < j < K,
\end{split}
\end{equation}
which represent the amplitude of low, middle, and high-frequency, respectively. In this paper, we set $K=5, i=2, j=4$ so that we have six learnable points and two for each frequency. According to the analysis results in Section~\ref{sec:motivating_analyses}, as the homophily ratio increases, we design the following regularizations. (\textbf{i}) For low-frequency, the amplitude should decrease, and we discourage low-frequency before the transition phase and encourage it after the transition phase. Thus, we add a loss term as $\left(\frac{1}{C} -h\right)\|\mathbf{t}_{low}\|_2^2$ to regularize the amplitudes of low-frequency. When the homophily ratio is smaller than the transition phase $1/C$, the low-frequency is harmful; therefore, the loss term has a positive coefficient. In contrast, when the homophily ratio is larger than the transition phase, the low-frequency is beneficial, and we give a positive coefficient for the loss term. (\textbf{ii}) For high-frequency, the amplitude should decrease, and we encourage the high-frequency before the transition phase and discourage it after the transition phase. Similarly, the regularization for high-frequency is $\left(h - \frac{1}{C}\right)\| \mathbf{t}_{high}\|_2^2$. (\textbf{iii}) For middle-frequency, we observe that it has the same trend as the low-frequency before the transition phase and has the same trend as the high-frequency after the transition phase in Fig.~\ref{fig:mag_imp} (b). Therefore, we have the regularization $\left|h-\frac{1}{C}\right|\|\mathbf{t}_{mid} \|_2^2$. 
These three regularization work together to determine the shape based on homophily ratio $h$. Then the shape-aware regularization can be expressed as 
\begin{equation} 
\begin{split}
\min_{\theta, \mathbf{t}} \mathcal{L}_{SR} &= 
\gamma_1\left(\frac{1}{C} -h\right)\|\mathbf{t}_{low}\|_2^2 + \\
& \gamma_2\left|h-\frac{1}{C}\right|\|\mathbf{t}_{mid} \|_2^2 + 
\gamma_3\left(h - \frac{1}{C}\right)\| \mathbf{t}_{high}\|_2^2  ,
\end{split}
\end{equation}
where $\gamma_1, \gamma_2, \gamma_3 > 0$ are scalars to control contribution of the regularizers, respectively. $h$ is the learned homophily ratio, calculated and updated according to the labels learned in every epoch. The final objective function of NewtonNet is
\begin{equation} \label{eq:total_loss}
\min_{\theta, \mathbf{t}} \mathcal{L} = \mathcal{L}_{CE} + \mathcal{L}_{SR} ,
\end{equation}
where $\mathcal{L}_{CE} = {\sum}_{v \in \mathcal{V}^L} \ell(\hat{\boldsymbol{y}}_v, \boldsymbol{y}_v) $ is classification loss on labeled nodes and $\ell(\cdot,\cdot)$ denotes cross entropy loss. The training algorithm of NewtonNet is in Appendix~\ref{appen:algo}. 

\textbf{Compared with other approximation and interpolation methods}
This Newton Interpolation method has its advantages compared with approximation methods, e.g., ChebNet~\cite{defferrard2016convolutional}, GPRGNN~\cite{chien2021adaptive}, and BernNet~\cite{he2021bernnet}. Both interpolation and approximation are common methods to approximate a polynomial function. However, the interpolation function passes all the given points accurately, while approximation methods minimize the error between the function and given points. We further discuss the difference between interpolation and approximation methods in Appendix~\ref{appen:approximation_and_interpolation}. This difference makes us be able to readily apply shape-aware regularization on learned $\mathbf{t}$. However, approximation methods do not pass given points accurately, so we cannot make points learnable to approximate arbitrary functions and apply regularization.

Compared to other interpolation methods like ChebNetII~\cite{he2022chebnetii}, Newton interpolation offers greater flexibility in choosing interpolated points. In Eq.~\ref{newtonnet}, $q_i$ values can be freely selected from the range [0, 2], allowing adaptation to specific graph properties. In contrast, ChebNetII uses fixed Chebyshev points for input values, limiting precision in narrow regions and requiring increased complexity (larger K) to address this limitation. Additionally, ChebNetII can be seen as a special case of NewtonNet when Chebyshev points are used as $q_i$ values~\cite{he2022chebnetii}.

\textbf{Complexity Analysis}
NewtonNet exhibits a time complexity of $O(KEF + NF^2)$ and a space complexity of $O(KE + F^2 + NF)$, where $E$ represents the number of edges. Notably, the complexity scales linearly with $K$. A detailed analysis is provided in Appendix~\ref{complexity}.

\section{Related Work}
\noindent\textbf{Spatial GNNs.} 
Existing GNNs can be categorized into  spatial and spectral-GNNs. Spatial GNNs~\cite{kipf2016variational, velickovic2018graph, klicpera2018predict, hamilton2017inductive} adopt message-passing mechanism, which updates a node's representation by aggregating the message from its neighbors. For example, GCN~\cite{kipf2017semi} uses a weighted average of neighbors' representations as the aggregate function. 
APPNP~\cite{klicpera2018predict} first transforms features and then propagates information via personalized PageRank. However, The message-passing GNNs~\cite{gilmer2017neural} rely on the homophily and thus fail on heterophilous graphs as they smooth over nodes with different labels and make the representations less discriminative. Many works~\cite{zhu2020beyond, xu2022hp, abu2019mixhop, he2021block, wang2021powerful, dai2022labelwise, zhu2021graph} design network structures from a spatial perspective to address this issue.  $\text{H}_2\text{GCN}$ \cite{zhu2020beyond} aggregates information for ego and neighbor representations separately instead of mixing them together. LW-GCN~\cite{dai2022labelwise} proposes the label-wise aggregation strategy to preserve information from heterophilous neighbors. Some other works~\cite{velickovic2018deep, zhu2020deep, xu2021infogcl, xiao2022decoupled} also focus on self-supervised learning on graphs.

\noindent\textbf{Spectral GNNs.} 
Spectral GNNs~\cite{bo2023survey, bo2021beyond, li2021beyond, zhu2021interpreting} learn a polynomial function of graph Laplacian served as the convolutional matrix. 
Recently, more works consider the relationship between heterophily and graph frequency. To name a few, ~\citet{bo2021beyond} maintains a claim that besides low-frequency signal, the high-frequency signal is also useful for heterophilous graphs. \citet{li2021beyond} states that high-frequency components contain important information for heterophilous graphs. \citet{zhu2020beyond} claims high frequencies contain more information about label distributions than low frequencies. Therefore, learnable spectral GNNs~\cite{he2021bernnet, he2022chebnetii, chien2021adaptive, levie2018cayleynets} that can approximate arbitrary filters perform inherently well on heterophily. The methods used to approximate the polynomial function vary among them, such as the Chebyshev polynomial for ChebNet~\cite{defferrard2016convolutional}, Bernstein polynomial for BernNet~\cite{he2021bernnet}, Jacobi polynomial for JacobiConv~\cite{wang2022powerful}, and Chebyshev interpolation for ChebNetII~\cite{he2022chebnetii}. Other spectral methods design non-polynomial filters or transform the basis~\cite{bo2023survey}. 

\section{Experiments} \label{sec:experiments}
In this section, we conduct experiments to evaluate the effectiveness of the proposed NewtonNet and address the following research questions: \textbf{RQ1} How effective is NewtonNet on datasets of various domains and sizes with varying degrees of homophily and heterophily? \textbf{RQ2} How does NewtonNet compare to other baselines under weak supervision? \textbf{RQ3} To what extent does shape-aware regularization contribute to the performance of NewtonNet? \textbf{RQ4} Is the learned filter shape consistent with our analysis on homophilous and heterophilous graphs?


\subsection{Experimental Setup}
We use node classification to evaluate the performance. Here we briefly introduce the dataset, baselines, and settings in the experiments. We give details in Appendix~\ref{appen_experimental_details}.

\vspace*{-0.3em}
\noindent\textbf{Datasets.} To evaluate NewtonNet on graphs with various homophily ratios, we adopt three homophilous datasets~\cite{sen2008collective}: Cora, Citeseer, Pubmed, and six heterophilous datasets~\cite{Pei2020Geom-GCN:, musae, traud2012social, lim2021large}, Chameleon, Squirrel, Crocodile, Texas, Cornell, Penn94, Twitch-gamer, and Genius.
Details are in Appendix~\ref{appen_real_datasets}.

\vspace*{-0.3em}
\textbf{Baselines.} We compare NewtonNet with representative baselines, including (i) non-topology method: MLP; (ii) spatial methods: GCN~\cite{kipf2017semi}, Mixhop~\cite{abu2019mixhop}, APPNP~\cite{klicpera2018predict}, GloGNN++~\cite{li2022finding}; and (iii) spectral methods: ChebNet~\cite{defferrard2016convolutional}, GPRGNN~\cite{chien2021adaptive}, BernNet~\cite{he2021bernnet}, ChebNetII~\cite{he2022chebnetii}, JacobiConv~\cite{wang2022powerful}. Details of methods are in Appendix~\ref{appen_basleines}.

\vspace*{-0.3em}
\textbf{Settings.} We adopt the commonly-used split of 60\%/20\%/20\% for train/validation/test sets. For a fair comparison, for each method, we select the best configuration of hyperparameters using the validation set and report the mean accuracy and variance of 10 random splits on the test.

\begin{table*}[h]
\caption{Node classiﬁcation performance (Accuracy(\%) ± Std.).}
\vskip -1em
\centering
\resizebox{\linewidth}{!}{%
\begin{tabular}{cccccccccccc}
\toprule
           & \textbf{Cora}      & \textbf{Cite.}     & \textbf{Pubm.}     & \textbf{Cham.}     & \textbf{Squi.}     & \textbf{Croc.}     & \textbf{Texas}     & \textbf{Corn.}     & \textbf{Penn94}    & \textbf{Gamer}     & \textbf{Genius}    \\ \hline
MLP        & 73.28±1.9          & 70.95±2.1          & 86.08±0.7          & 48.86±2.3          & 32.27±1.0          & 65.37±1.0          & 75.79±8.4          & 75.79±8.4          & 74.18±0.3          & 65.24±0.2          & 86.80±0.1          \\
GCN        & 87.86±2.1          & 75.47±1.0          & 87.00±0.6          & 66.12±3.7          & 54.65±2.7          & 72.95±0.6          & 54.21±10.2         & 54.21±6.5          & 83.23±0.2          & 66.58±0.2          & 80.28±0.0          \\
Mixhop     & 87.81±1.7          & 74.32±1.3          & 88.50±0.7          & 64.39±0.6          & 49.95±1.9          & {\ul 73.63±0.8}    & 70.00±8.7          & 73.16±2.6          & 84.09±0.2          & 67.27±0.2          & 88.39±0.4          \\
APPNP      & 89.04±1.5          & 77.04±1.4          & 88.84±0.4          & 56.60±1.7          & 37.00±1.5          & 67.42±0.9          & 76.84±4.7          & 80.53±4.7          & 75.91±0.2          & 66.76±0.2          & 87.19±0.2          \\
ChebNet    & 88.32±2.0          & 75.47±1.0          & {\ul 89.62±0.3}    & 62.94±2.2          & 43.07±0.7          & 72.01±1.0          & 81.05±3.9          & 82.63±5.7          & 82.63±0.3          & 67.57±0.2          & 86.69±0.2          \\
GPRGNN     & 89.20±1.6          & 77.48±1.9          & 89.50±0.4          & 71.15±2.1          & 55.18±1.3          & 69.68±1.0          & {\ul 86.37±1.1}    & 83.16±4.9          & 84.08±0.2          & 64.44±0.3          & 87.41±0.1          \\
BernNet    & \textbf{89.76±1.6} & {\ul 77.49±1.4}    & 89.47±0.4          & 72.19±1.6          & 55.43±1.1          & 69.70±0.9          & 85.26±6.4          & {\ul 84.21±8.6}    & 83.04±0.1          & 62.90±0.2          & 86.52±0.1          \\
ChebNetII  & 88.51±1.5          & 75.83±1.3          & 89.51±0.6          & 69.91±2.3          & 52.83±0.8          & 67.86±1.6          & 84.74±3.1          & 81.58±8.0          & 83.52±0.2          & 62.53±0.2          & 86.49±0.1          \\
JacobiConv & 88.98±0.7          & 75.76±1.9          & 89.55±0.5          & 73.87±1.6          & {\ul 57.56±1.8}    & 67.69±1.1          & 84.17±6.8          & 75.56±6.1          & 83.28±0.1          & {\ul 67.68±0.2}    & 88.03±0.4          \\
GloGNN++   & 88.11±1.8          & 74.68±1.3          & 89.12±0.2          & {\ul 73.94±1.8}    & 56.58±1.7          & 69.25±1.1          & 82.22±4.5          & 81.11±4.4          & \textbf{84.94±0.2} & 67.50±0.3          & \textbf{89.31±0.1} \\
NewtonNet  & {\ul 89.39±1.4}    & \textbf{77.87±1.9} & \textbf{89.68±0.5} & \textbf{74.47±1.5} & \textbf{61.58±0.8} & \textbf{75.70±0.4} & \textbf{87.11±3.8} & \textbf{86.58±5.3} & {\ul 84.56±0.1}    & \textbf{67.92±0.3} & {\ul 88.20±0.1}    \\ 
\bottomrule
\end{tabular}%
}
\label{tab:full-super}
\vskip -1em
\end{table*}

\subsection{Node Classification Performance on Heterophily and Homophily Graphs}  \label{sec:overall_performance_comparison}
Table~\ref{tab:full-super} shows the results on node classification, where boldface denotes the best results and underline denotes the second-best results. We observe that learnable spectral GNNs outperform other baselines since spectral methods can learn beneficial frequencies for prediction under supervision, while spatial methods and predetermined spectral methods only obtain information from certain frequencies. NewtonNet achieves state-of-the-art performance on eight of nine datasets with both homophily and heterophily, and it achieves the second-best result on Cora. This is because NewtonNet efficiently learns the filter shape through Newton interpolation and shape-aware regularization.

\subsection{Performance with Weak Supervision}

\begin{figure}[b]
\centering
\subfigure[Chameleon]{
\begin{minipage}[t]{0.48\linewidth}
\centering
\includegraphics[width=1.0\linewidth]{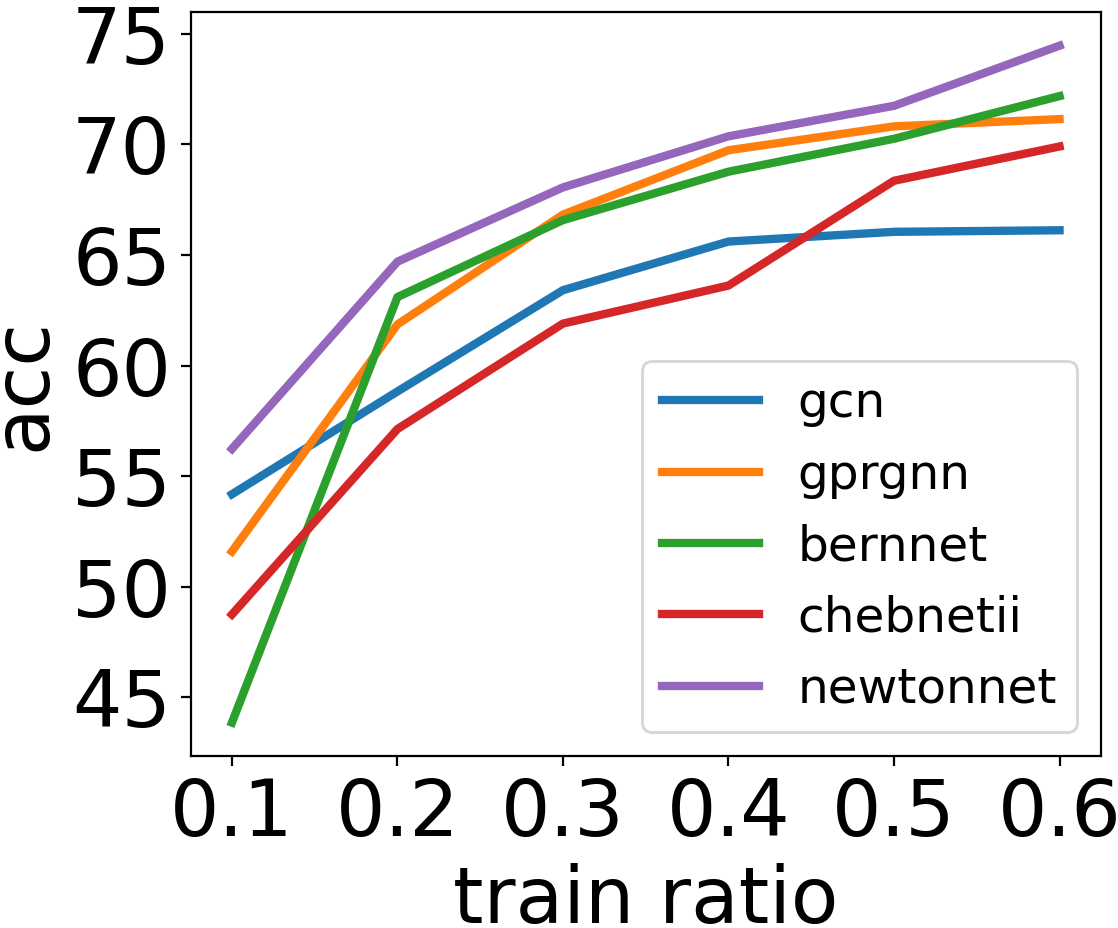} 
\end{minipage}}
\subfigure[Squirrel]{
\begin{minipage}[t]{0.48\linewidth}
\centering
\includegraphics[width=1.0\linewidth]{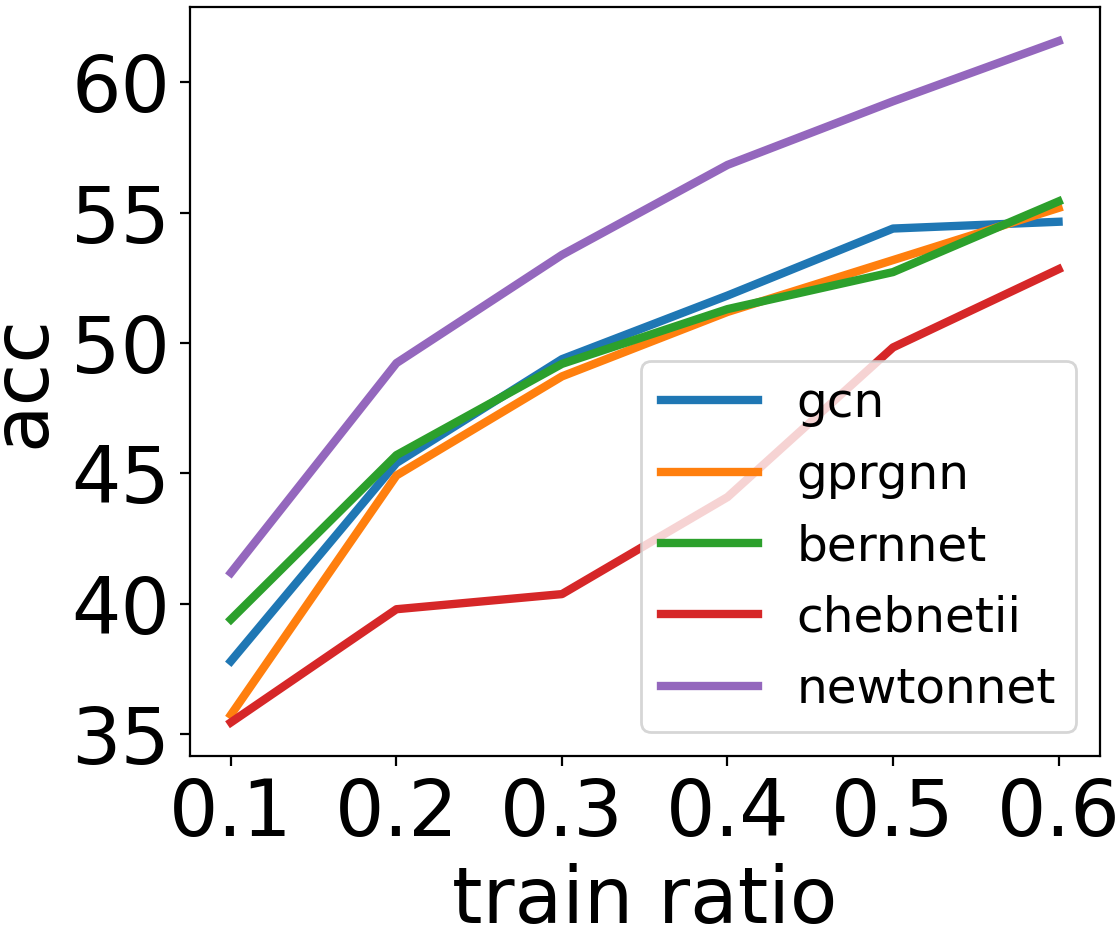}
\end{minipage}}
\centering
\vskip -1em
\caption{The accuracy on Chameleon and Squirrel datasets as the training set ratio varies.}
\label{fig:vary_ratio}
\vskip -1.5em
\end{figure}

As we mentioned before, learning filters solely based on the downstream tasks may lead to suboptimal performance, especially when there are rare labels. Thus, this subsection delves into the impact of the training ratio on the performance of various GNNs. We keep the validation and test set ratios fixed at 20\%, and vary the training set ratio from $0.1$ to $0.6$. We select several best-performing baselines and plot their performance on the Chameleon and Squirrel datasets in Fig.~\ref{fig:vary_ratio}. From the figure, we observe: (1) When the training ratio is large, spectral methods perform well because they have sufficient labels to learn a filter shape. Conversely, when the training ratio is small, some spectral methods do not perform as well as GCN as there are not enough labels for the models to learn the filter; (2) NewtonNet consistently outperforms all baselines. When the training ratio is large, NewtonNet can filter out harmful frequencies while encouraging beneficial ones, leading to representations of higher quality. When the training ratio is small, the shape-aware regularization of NewtonNet incorporates prior knowledge to provide guidance in learning better filters.

\subsection{Ablation Study}

Next, we study the effect of shape-aware regularization on node classification performance. We show the best result of NewtonNet with and without shape-aware regularization, where we use the same search space as  in Section~\ref{sec:overall_performance_comparison}. Fig.~\ref{fig:ablation} gives the results on five datasets. From the figure, we can observe that the regularization contributes more to the performance on heterophilous datasets (Chameleon, Squirrel, Crocodile) than on homophilous datasets (Cora, Citeseer). The reason is as follows. Theorem~\ref{theorem_trasition_phase} and Fig.~\ref{fig:mag_imp}(b) show that the importance of frequencies changes significantly near the transition phase, $1/C$. Cora and Citeseer have homophily ratios of 0.81 and 0.74, while they have 7 and 6 classes, respectively. Since Cora and Citeseer are highly homophilous datasets whose homophily ratios are far from the transition faces, almost only low frequencies are beneficial, and thus the filter shapes are easier to learn. By contrast, the homophily ratios of Chameleon, Squirrel, and Crocodile are 0.24, 0.22, and 0.25, respectively, while they have five classes. Their homophily ratios are around the transition phase; therefore, the model relies more on shape-aware regularization to guide the learning process. We also conduct the hyperparameter analysis for $\gamma_1$, $\gamma_2$, $\gamma_3$, and $K$ in Appendix~\ref{appen:hyper_analysis}.

\begin{figure}[h]
\centering
\includegraphics[width=0.5\linewidth]{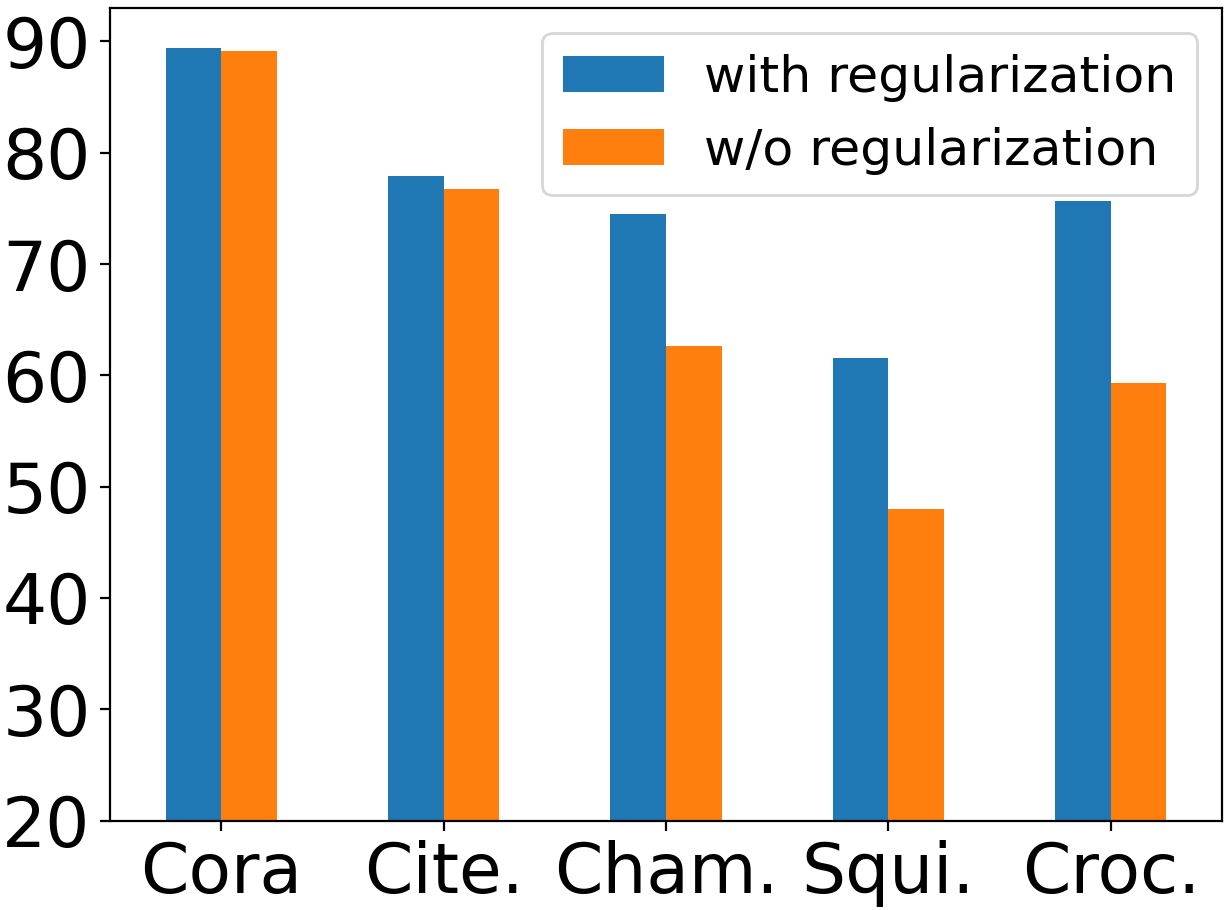}
\vskip -0.5em
\caption{Ablation Study} 
\label{fig:ablation}
\vskip -0.75em
\end{figure}

\subsection{Analysis of Learned Filters}
Here we present the learned filters of NewtonNet on Citeseer and Chameleon and compare them with those learned by BernNet and ChebNetII. In Fig.~\ref{fig:learned_on_citeseer_chameleon}, we plot the average filters by calculating the average temperatures of 10 splits. On the homophilous graph Citeseer, NewtonNet encourages low-frequency and filters out middle and high-frequency, which is beneficial for the representations of homophilous graphs. In contrast, ChebNetII and BernNet do not incorporate any shape-aware regularization, and their learning processes include middle and high-frequency, harming the representations of homophilous graphs. On the heterophilous dataset Penn94, the filter learned by NewtonNet contains more high-frequency and less-frequency, while harmful frequencies are included in the filters learned by ChebNetII and BernNet. These results reveal that NewtonNet, with its shape-aware regularization, can learn a filter shape with beneficial frequencies while filtering out harmful frequencies. More results of the learned filters can be found in Appendix~\ref{appen:learned_filters}. We also present the learned homophily ratio in Appendix~\ref{appen:learned_ratio}.

\begin{figure}[t]
\centering
\subfigure[NewtonNet on Citeseer]{
\begin{minipage}[t]{0.31\linewidth}
\centering
\includegraphics[width=1.0\linewidth]{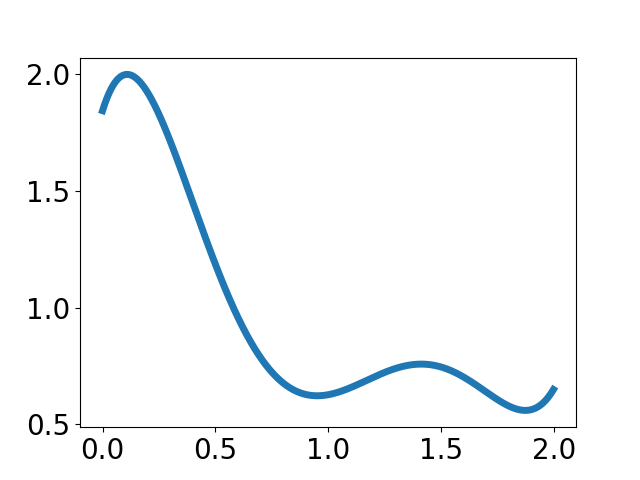}
\end{minipage}}
\subfigure[ChebNetII on Citeseer]{
\begin{minipage}[t]{0.31\linewidth}
\centering
\includegraphics[width=1.0\linewidth]{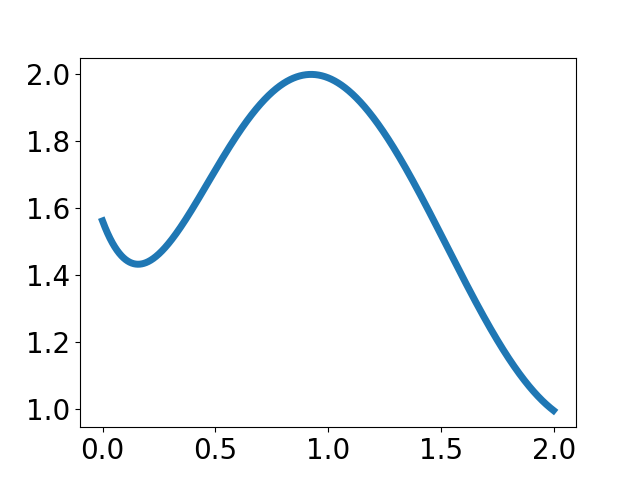}
\end{minipage}}
\subfigure[BernNet on Citeseer]{
\begin{minipage}[t]{0.31\linewidth}
\centering
\includegraphics[width=1.0\linewidth]{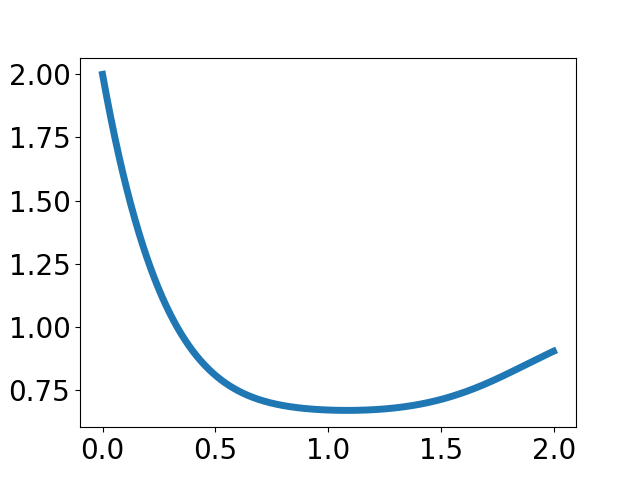}
\end{minipage}}
\centering
\subfigure[NewtonNet on Penn94]{
\begin{minipage}[t]{0.31\linewidth}
\centering
\includegraphics[width=1.0\linewidth]{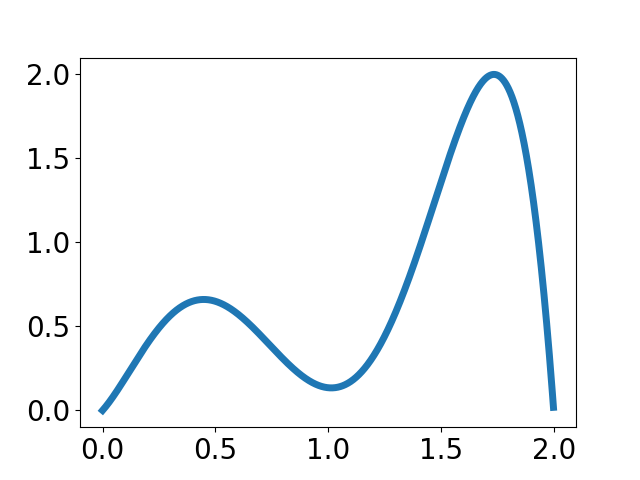}
\end{minipage}}
\subfigure[ChebNetII on Penn94]{
\begin{minipage}[t]{0.31\linewidth}
\centering
\includegraphics[width=1.0\linewidth]{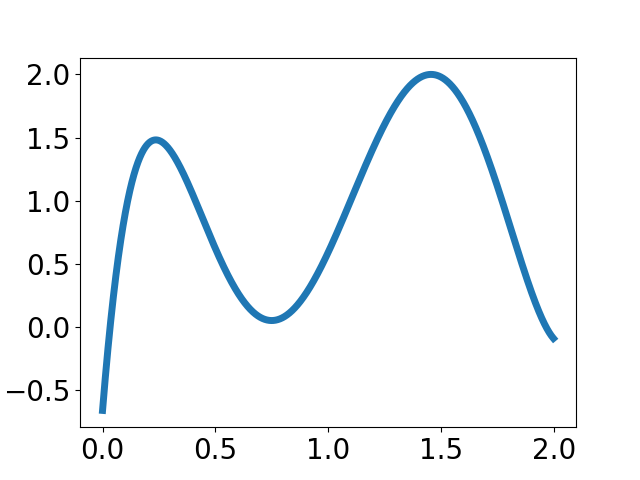}
\end{minipage}}
\subfigure[BernNet on Penn94]{
\begin{minipage}[t]{0.31\linewidth}
\centering
\includegraphics[width=1.0\linewidth]{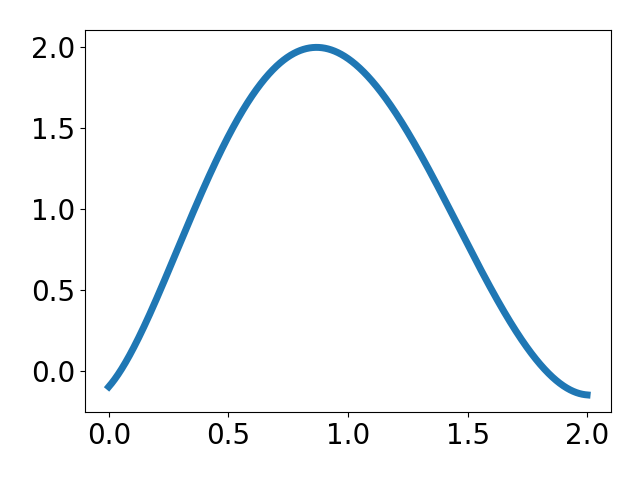}
\end{minipage}}
\centering
\vskip -0.7em
\caption{Learned filters on Citeseer and Penn94.}
\vskip -1.6em
\label{fig:learned_on_citeseer_chameleon}
\end{figure}


\section{Conclusion and Future Work}
In this paper, we propose a novel framework NewtonNet for learning spectral filters GNNs. NewtonNet incorporates prior knowledge about the desired shape of spectral filters based on the homophily ratio of the dataset. The empirical and theoretical analysis reveals that low-frequency is positively correlated with the homophily ratio, while high-frequency is negatively correlated. NewtonNet utilizes Newton Interpolation with shape-aware regularization to learn arbitrary polynomial spectral filters that adapt to different homophily levels. Experimental results on real-world datasets demonstrate the effectiveness of the proposed framework. In the paper, we propose one possible regularization for filter and we leave the exploration of other regularizers as future work.

\newpage
\section*{Impact Statements}
This paper presents work whose goal is to advance the field of Machine Learning. There are many potential societal consequences of our work, none of which we feel must be specifically highlighted here.


\bibliography{ref}
\bibliographystyle{icml2024}

\newpage
\appendix
\onecolumn


\section{Detailed Proofs}
\subsection{Proof of Lemma~\ref{theorem_1/C}} \label{appec:proof:1}
\begin{lemma}
For a graph $\mathcal{G}$ with $N$ nodes, $C$ classes, and $N/C$ nodes for each class, if we randomly connect nodes to form the edge set $\mathcal{E}$, the expected homophily ratio is $\mathbb{E}(h(\mathcal{G})) = \frac{1}{C}$.
\end{lemma}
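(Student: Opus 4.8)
The plan is to apply linearity of expectation directly to the definition of the homophily ratio. Writing $h(\mathcal{G}) = \frac{1}{|\mathcal{E}|}\sum_{(s,t)\in\mathcal{E}} \mathbf{1}[y_s = y_t]$, where the randomness is over the construction of $\mathcal{E}$ (equivalently, over which pairs of nodes get joined), we obtain $\mathbb{E}[h(\mathcal{G})] = \frac{1}{|\mathcal{E}|}\sum_{(s,t)\in\mathcal{E}} \Pr[y_s = y_t]$. So it suffices to show that each edge connects two nodes of the same class with probability exactly $1/C$, after which the sum collapses to $\frac{1}{|\mathcal{E}|}\cdot|\mathcal{E}|\cdot\frac{1}{C} = \frac{1}{C}$. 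Note that no independence across edges is required here; linearity of expectation handles the sum regardless of how the edge events are correlated, which keeps the argument short.

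For the per-edge probability, I would condition on the first endpoint $s$ of the edge and on its class $y_s$. When the second endpoint $t$ is chosen uniformly at random among the $N$ nodes, and since every class contains exactly $N/C$ nodes, the event $y_t = y_s$ has probability $\frac{N/C}{N} = \frac{1}{C}$, independent of the value of $y_s$. Averaging over $y_s$ leaves $\Pr[y_s = y_t] = 1/C$, and substituting into the expression above finishes the proof.

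The only place the argument needs care is the precise meaning of ``randomly connect nodes.'' If one insists the two endpoints of an edge be distinct, the per-edge probability becomes $\frac{N/C-1}{N-1}$, which equals $1/C$ only asymptotically in $N$ (it differs from $1/C$ by an $O(1/N)$ term); alternatively, if the class labels are instead drawn i.i.d.\ uniformly on $\{1,\dots,C\}$, then $\Pr[y_s=y_t]=\sum_{c=1}^{C}(1/C)^2 = 1/C$ holds exactly. I would therefore fix the sampling model that makes the identity exact — uniform endpoints (with the $N/C$-per-class balance) or i.i.d.\ uniform labels — and remark that the distinct-endpoint variant yields the same conclusion up to a vanishing correction. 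Thus the ``hard part'' here is purely a modeling/bookkeeping decision rather than a mathematical obstacle; the computation itself is a one-line application of linearity of expectation.
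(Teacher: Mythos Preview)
Your proposal is correct and follows essentially the same argument as the paper: compute the per-edge probability $\Pr[y_s=y_t]=1/C$ by conditioning on one endpoint, then collapse the sum via linearity of expectation to get $\frac{1}{|\mathcal{E}|}\cdot|\mathcal{E}|\cdot\frac{1}{C}$. Your discussion of the sampling-model subtlety (with-replacement vs.\ distinct endpoints) is more careful than the paper's own proof, which tacitly adopts the with-replacement convention without comment.
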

\begin{proof}
We first randomly sample a node $s$ from $\mathcal{V}$, assuming its label is $y_s$. Then we sample another node $t$. Because the classes are balanced, we have
\begin{equation}
\begin{aligned}
    \mathrm{P}(y_s = y_t) &= \frac{1}{C} \ ,\\
    \mathrm{P}(y_s \neq y_t) &= \frac{C-1}{C} \ .
\end{aligned}
\end{equation}
Therefore, if each node pair in $\mathcal{E}$ is sampled randomly, we have 
\begin{equation}
    \mathbb{E}(h) = \frac{1}{|\mathcal{E}|} \cdot |\mathcal{E}| \cdot \frac{1}{C} = \frac{1}{C},
\end{equation}
which completes the proof.
\end{proof}

\subsection{Proof of Theorem~\ref{theorem_bound}} \label{appec:proof:2}
\begin{theorem} 
Let $\mathcal{G}=\{\mathcal{V}, \mathcal{E}\}$ be an undirected graph. $0 \leq \lambda_1 \cdots \leq \lambda_{N} $ are eigenvalues of its Laplacian matrix $\mathbf{L}$. Let $g_1$ and $g_2$ be two spectral filters satisfying the following two conditions: (1) $g_1(\lambda_i) < g_2(\lambda_i)$ for $1 \le i \le m$; and $g_1(\lambda_i) > g_2(\lambda_i)$ for $m+1 \le i \le N$, where $1< m < N$; and (2) They have the same norm of output values $\|[g_1(\lambda_1), \cdots, g_1(\lambda_{N})]^{\top}\|_2^2 = \|[g_2(\lambda_1), \cdots, g_2(\lambda_{N})]^{\top}\|_2^2 $. For a graph signal $\mathbf{x}$, $\mathbf{x}^{(1)} = g_1(\mathbf{L})\mathbf{x}$ and $\mathbf{x}^{(2)} = g_2(\mathbf{L})\mathbf{x}$ are the corresponding representations after filters $g_1$ and $g_2$. Let $\Delta s = \sum_{(s,t) \in \mathcal{E}} \left[ (x^{(1)}_s - x^{(1)}_t)^2 - (x^{(2)}_s - x^{(2)}_t)^2 \right]$ be the difference between the total distance of connected nodes got by $g_1$ and $g_2$, where $x_s^1$ denotes the $s$-th element of $\mathbf{x}^{(1)}_s$. Then we have $\mathbb{E}[\Delta s] > 0$.
\end{theorem}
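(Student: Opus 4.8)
The plan is to turn the edge-wise quantity $\Delta s$ into a spectral expression and then exploit the two hypotheses on $g_1,g_2$ through a single rearrangement inequality. First I would recall the standard identity $\sum_{(s,t)\in\mathcal{E}}(y_s-y_t)^2=\mathbf{y}^{\top}(\mathbf{D}-\mathbf{A})\mathbf{y}$ valid for any graph signal $\mathbf{y}$, so that, writing $\mathbf{x}^{(k)}=g_k(\mathbf{L})\mathbf{x}=\mathbf{U}g_k(\mathbf{\Lambda})\mathbf{U}^{\top}\mathbf{x}$,
\[
\Delta s = \mathbf{x}^{\top}g_1(\mathbf{L})(\mathbf{D}-\mathbf{A})g_1(\mathbf{L})\mathbf{x} - \mathbf{x}^{\top}g_2(\mathbf{L})(\mathbf{D}-\mathbf{A})g_2(\mathbf{L})\mathbf{x}.
\]
Taking the expectation over the random graph signal $\mathbf{x}$ with $\mathbb{E}[\mathbf{x}\mathbf{x}^{\top}]=\mathbf{I}$ (i.i.d.\ entries, mean zero, unit variance) and using $\mathbb{E}[\mathbf{x}^{\top}\mathbf{M}\mathbf{x}]=\operatorname{tr}(\mathbf{M})$ together with cyclicity of the trace gives $\mathbb{E}[\Delta s]=\operatorname{tr}\!\big((\mathbf{D}-\mathbf{A})(g_1(\mathbf{L})^2-g_2(\mathbf{L})^2)\big)$. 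On a $d$-regular graph $\mathbf{D}-\mathbf{A}=d\mathbf{L}$, which commutes with $g_k(\mathbf{L})$, so this diagonalizes in the eigenbasis $\mathbf{U}$ and collapses to
\[
\mathbb{E}[\Delta s] = d\sum_{i=1}^{N}\lambda_i\big(g_1(\lambda_i)^2-g_2(\lambda_i)^2\big).
\]

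Next I would set $a_i := g_1(\lambda_i)^2-g_2(\lambda_i)^2$. Since spectral filters output nonnegative amplitudes, $g_1(\lambda_i)+g_2(\lambda_i)>0$ and hence $\operatorname{sign}(a_i)=\operatorname{sign}(g_1(\lambda_i)-g_2(\lambda_i))$; condition (1) then yields $a_i<0$ for $i\le m$ and $a_i>0$ for $i>m$, while condition (2) yields $\sum_{i=1}^{N}a_i=0$. The core step is a threshold (Abel-type) argument: choose any $c$ with $\lambda_m\le c\le\lambda_{m+1}$; since $\sum_i a_i=0$,
\[
\mathbb{E}[\Delta s]=d\sum_{i=1}^{N}(\lambda_i-c)\,a_i .
\]
For $i\le m$ one has $\lambda_i-c\le 0$ and $a_i<0$, and for $i>m$ one has $\lambda_i-c\ge 0$ and $a_i>0$, so every summand is $\ge 0$, giving $\mathbb{E}[\Delta s]\ge 0$. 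Strictness follows because a sum of nonnegative terms vanishes only if each term does, i.e.\ only if $\lambda_i=c$ for every $i$ (all $a_i\ne 0$ by the strict inequalities in (1)), which is impossible since a graph with at least one edge has $\lambda_1=0<\lambda_N$.

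The part I expect to be the main obstacle is the reduction to the clean spectral sum when the graph is not regular: then $\mathbf{D}-\mathbf{A}=\mathbf{D}^{1/2}\mathbf{L}\mathbf{D}^{1/2}$ does not commute with $g_k(\mathbf{L})$, so $\operatorname{tr}\!\big((\mathbf{D}-\mathbf{A})(g_1(\mathbf{L})^2-g_2(\mathbf{L})^2)\big)$ no longer simplifies to $\sum_i\lambda_i a_i$. I would handle this either by stating the result for (approximately) regular graphs, or by measuring the edge distance with the degree-normalized difference $\big(x_s/\sqrt{d_s}-x_t/\sqrt{d_t}\big)^2$, whose edge-sum equals $\mathbf{x}^{\top}\mathbf{L}\mathbf{x}$ exactly, so the computation above goes through verbatim with weights $\lambda_i$ in place of $d\lambda_i$. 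A secondary point to be careful about is the passage from the sign pattern of $g_1-g_2$ to that of $g_1^2-g_2^2$, which is precisely where nonnegativity of the filter outputs is used; and the equal-norm hypothesis (2), which both rules out trivial rescalings $g_1=c\,g_2$ and supplies the identity $\sum_i a_i=0$ that makes the threshold argument work.
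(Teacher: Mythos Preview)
Your proposal is correct and follows essentially the same route as the paper: reduce $\mathbb{E}[\Delta s]$ to the spectral sum $\sum_i \lambda_i\big(g_1^2(\lambda_i)-g_2^2(\lambda_i)\big)$ (the paper does this by expanding $\mathbf{x}=\sum_i c_i\mathbf{u}_i$ and assuming the Fourier coefficients $c_i$ are i.i.d., which is equivalent to your $\mathbb{E}[\mathbf{x}\mathbf{x}^{\top}]=\mathbf{I}$), then apply the same threshold argument at $\lambda_m,\lambda_{m+1}$ using the equal-norm constraint $\sum_i a_i=0$. You are in fact more careful than the paper on two points it glosses over: the paper silently identifies $\sum_{(s,t)\in\mathcal{E}}(x_s-x_t)^2$ with $\mathbf{x}^{\top}\mathbf{L}\mathbf{x}$ for the \emph{normalized} Laplacian (valid only in the regular or degree-normalized sense you describe), and it never states the nonnegativity of $g_k$ needed to transfer the sign of $g_1-g_2$ to $g_1^2-g_2^2$.
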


\begin{proof}
\textbf{1.} Given the eigendecomposition of Laplacian $\mathbf{L} = \mathbf{U}\mathbf{\Lambda}\mathbf{U}^{\top}$, because $\mathbf{u}_0, \cdots,\mathbf{u}_{N-1}$ are orthonormal eigenvectors, any unit graph signal $\mathbf{x}$ can be expresses as the linear combination of the eigenvectors:
\begin{equation} \label{eq_lin_com_x}
    \mathbf{x} = \sum_{i=1}^{N} c_i \mathbf{u}_i,
\end{equation}
where $c_i = \mathbf{u}_i^T \mathbf{x}$ are the coefficients of each eigenvector. Then, we have
\begin{equation} \label{eq_lin_com_x'}
\begin{aligned}
    \mathbf{x}^{(1)} = g_1(\mathbf{L})\mathbf{x} = \mathbf{U}g_1(\mathbf{\Lambda})\mathbf{U}^{\top}\mathbf{x} = \left( \sum_{i=1}^N g_1(\lambda_i) \mathbf{u}_i \mathbf{u}_i^{\top} \right) \left( \sum_{i=1}^N c_i \mathbf{u}_i \right) = \sum_{i=1}^N g_1(\lambda_i) c_i \mathbf{u}_i ,  \\
    \mathbf{x}^{(2)} = g_2(\mathbf{L})\mathbf{x} = \mathbf{U}g_2(\mathbf{\Lambda})\mathbf{U}^{\top}\mathbf{x} = \left( \sum_{i=1}^N g_2(\lambda_i) \mathbf{u}_i \mathbf{u}_i^{\top} \right) \left( \sum_{i=1}^N c_i \mathbf{u}_i \right) = \sum_{i=1}^N g_2(\lambda_i) c_i \mathbf{u}_i .
\end{aligned}
\end{equation}
Note that $\lambda_i$ and $u_i$ are the eigenvalues and eigenvectors of original Laplacian $\mathbf{L}$. Moreover, we have
\begin{equation} \label{eq_c}
    \mathbf{c} = \mathbf{U}^{\top}\mathbf{x}
    \quad \quad
    c_i = \mathbf{u}_i^{\top}\mathbf{x} .
\end{equation}
Eq.~\ref{eq_c} demonstrates that each element of $\mathbf{c}$ is determined independently by the product of each eigenvalue and $\mathbf{x}$. We have $-1 = -\|\mathbf{u}_i\|_2 \|\mathbf{x}\|_2 \leq c_i^2 \leq \|\mathbf{u}_i\|_2 \|\mathbf{x}\|_2 = 1$. 
Furthermore, because $x$ is an arbitrary unit graph signal, it can achieve any value with $\|\mathbf{x}\|_2=1$. It's reasonable for us to assume that $c_i$'s are independently identically distributed with mean $0$.

\textbf{2.} For any graph signal $\mathbf{x}$, its smoothness is the total distance between the connected nodes, which is given by,
\begin{equation} \label{eq_smooth}
\begin{aligned}
    \sum_{(s,t) \in \mathcal{E}} (x_{s} - x_{t})^2 & = \mathbf{x}^{\top} \mathbf{L} \mathbf{x}, \\
    \sum_{(s,t) \in \mathcal{E}} (x^{(1)}_{s} - x^{(2)}_{t})^2 &= \mathbf{x}^{(1) \top} \mathbf{L} \mathbf{x}^{(1)}, \\
    \sum_{(s,t) \in \mathcal{E}} (x^{(2)}_{s} - x^{(2)}_{t})^2 & = \mathbf{x}^{(2) \top} \mathbf{L} \mathbf{x}^{(2)}. 
\end{aligned}
\end{equation}
Note that the smoothness score of an eigenvector equals the corresponding eigenvalue:
\begin{equation}
    \lambda_i = \mathbf{u}_i^{\top} \mathbf{L} \mathbf{u}_i = \sum_{(s,t) \in \mathcal{E}} (u_{i,s} - u_{i, t})^2 .
\end{equation}

Then we plug in Eq.~\ref{eq_lin_com_x} and Eq.~\ref{eq_lin_com_x'} into Eq.~\ref{eq_smooth} to get,
\begin{equation} \label{eq_x_total_dis}
    \sum_{(s,t) \in \mathcal{E}} (x_{s} - x_{t})^2 = \left(\sum_{i=1}^{N} c_i \mathbf{u}_i^{\top} \right) \left( \sum_{i=1}^N \lambda_i \mathbf{u}_i \mathbf{u}_i^{\top} \right)  \left(\sum_{i=1}^{N} c_i \mathbf{u}_i \right) = \sum_{i=1}^{N} c_i^2 \lambda_i ,
\end{equation}
\begin{equation}
    \sum_{(s,t) \in \mathcal{E}} (x^{(1)}_{s} - x^{(1)}_{t})^2 = \left(\sum_{i=1}^{N} g_1(\lambda_i) c_i \mathbf{u}_i^{\top} \right) \left( \sum_{i=1}^N \lambda_i \mathbf{u}_i \mathbf{u}_i^{\top} \right)  \left(\sum_{i=1}^{N} g_1(\lambda_i) c_i \mathbf{u}_i \right) = \sum_{i=1}^{N} c_i^2 \lambda_i g_1^2(\lambda_i) .
\end{equation}
\begin{equation}
    \sum_{(s,t) \in \mathcal{E}} (x^{(2)}_{s} - x^{(2)}_{t})^2 = \left(\sum_{i=1}^{N} g_2(\lambda_i) c_i \mathbf{u}_i^{\top} \right) \left( \sum_{i=1}^N \lambda_i \mathbf{u}_i \mathbf{u}_i^{\top} \right)  \left(\sum_{i=1}^{N} g_2(\lambda_i) c_i \mathbf{u}_i \right) = \sum_{i=1}^{N} c_i^2 \lambda_i g_2^2(\lambda_i) .
\end{equation}

\textbf{3.} For i.i.d. random variables $c_i$ and any $i < j$, we have
\begin{equation}
\begin{aligned} \label{eq:same_exp}
    \mathbb{E}[c_i^2] &= \mathbb{E}[c_j^2] \\
    \Rightarrow \lambda_i \mathbb{E}[c_i^2] = \mathbb{E}[\lambda_i c_i^2] &\leq \mathbb{E}[\lambda_j c_j^2] = \lambda_j \mathbb{E}[c_j^2] \\
\end{aligned}
\end{equation}
We are interested in the expected difference between the total distance of connected nodes got by $g_1$ and $g_2$. Let $\Delta s$ denote difference between the total distance of connected nodes got by $g_1$ and $g_2$, i.e.,
\begin{equation}
    \Delta s = \sum_{(s,t) \in \mathcal{E}} \left[ (x^{(1)}_s - x^{(1)}_t)^2 - (x^{(2)}_s - x^{(2)}_t)^2 \right]
\end{equation}
Then, the expected difference between the total distance of connected nodes got by $g_1$ and $g_2$ is  
\begin{equation}
\begin{aligned}
    \mathbb{E}\left[\Delta s\right] &= \mathbb{E}\left[\sum_{(s,t) \in \mathcal{E}} (x^{(1)}_s - x^{(1)}_t)^2 \right] - \mathbb{E}\left[\sum_{(s,t) \in \mathcal{E}} (x^{(2)}_s - x^{(2)}_t)^2 \right] \\
    &= \mathbb{E}\left[ \sum_{i=1}^{N} c_i^2 \lambda_i g_1^2(\lambda_i) \right] - \mathbb{E}\left[ \sum_{i=1}^{N} c_i^2 \lambda_i g_2^2(\lambda_i) \right] \\
    &= \sum_{i=1}^{N} \left\{ \left[ g_1^2(\lambda_i) - g_2^2(\lambda_i) \right] \lambda_i \mathbb{E}\left[c_i^2 \right] \right\}
\end{aligned}
\end{equation}

\textbf{4.} We assume $\|[g_1(\lambda_1), \cdots, g_1(\lambda_{N})]^{\top}\|_2^2 = \|[g_2(\lambda_1), \cdots, g_2(\lambda_{N})]^{\top}\|_2^2 $ so that $g_1$ and $g_2$ have the same $\ell_2$-norms. We make this assumption 
to avoid some trivial solutions. For example, if we simply multiply the representation $\mathbf{x}$ with a constant, the value in Eq.~\ref{eq_x_total_dis} will also be enlarged and reduced, but the discriminative ability is unchanged. Therefore, we have
\begin{equation}
\begin{aligned} \label{eq:same_norm}
    \sum_{i=1}^N g_1^2\left(\lambda_i\right) &= \sum_{i=1}^N g_2^2\left(\lambda_i\right) \\
    \Rightarrow \sum_{i=1}^m g_1^2\left(\lambda_i\right) + \sum_{i=m+1}^N g_1^2\left(\lambda_i\right) &= \sum_{i=1}^m g_2^2\left(\lambda_i\right) + \sum_{i=m+1}^N g_2^2\left(\lambda_i\right) \\
    \Rightarrow 0< \sum_{i=1}^m \left[ g_2^2\left(\lambda_i\right) - g_1^2\left(\lambda_i\right) \right] &= \sum_{i=m+1}^N \left[ g_1^2\left(\lambda_i\right) - g_2^2\left(\lambda_i\right) \right] ,
\end{aligned}
\end{equation}
because $g_1(\lambda_i) < g_2(\lambda_i)$ for $1 \le i \le m$ and $g_1(\lambda_i) > g_2(\lambda_i)$ for $m+1 \le i \le N$. By applying the results in Eq.~\ref{eq:same_norm} and Eq.~\ref{eq:same_exp}, we have
\begin{equation}
\begin{split}
    0 & < \sum_{i=1}^m \left\{ \left[ g_2^2\left(\lambda_i\right) - g_1^2\left(\lambda_i\right) \right] \lambda_i \right\} < \lambda_m \sum_{i=1}^m \left[ g_2^2\left(\lambda_i\right) - g_1^2\left(\lambda_i\right) \right]  \\
    & < \lambda_{m+1} \sum_{i=1}^m \left[ g_2^2\left(\lambda_i\right) - g_1^2\left(\lambda_i\right) \right] = \lambda_{m+1} \sum_{i=m+1}^N \left[ g_1^2\left(\lambda_i\right) - g_2^2\left(\lambda_i\right) \right] \\
    & < \sum_{i=m+1}^N \left\{ \left[ g_1^2\left(\lambda_i\right) - g_2^2\left(\lambda_i\right) \right] \lambda_i \right\} .
\end{split}
\end{equation}
Then we can derive that
\begin{equation}
\begin{aligned}
    & \sum_{i=1}^N \left\{ \left[ g_1^2\left(\lambda_i\right) - g_2^2\left(\lambda_i\right) \right] \lambda_i \right\} > 0 \\
    \Rightarrow & \sum_{i=1}^{N} \left\{ \left[ g_1^2(\lambda_i) - g_2^2(\lambda_i) \right] \lambda_i \mathbb{E}\left[c_i^2 \right] \right\} = \mathbb{E}\left[\Delta s\right] > 0 \\
    \Rightarrow & \mathbb{E}\big[ \sum\limits_{(s,t) \in \mathcal{E}} (x^{(1)}_s - x^{(1)}_t)^2 \big] > \mathbb{E}\big[ \sum\limits_{(s,t) \in \mathcal{E}} (x^{(2)}_s - x^{(2)}_t)^2 \big]
\end{aligned}
\end{equation}
which completes our proof.
\end{proof}

\subsection{Proof of Theorem~\ref{theorem_trasition_phase}} \label{appec:proof:3}
\begin{theorem}
Let $\mathcal{G}=\{\mathcal{V}, \mathcal{E}\}$ be a balanced undirected graph with $N$ nodes, $C$ classes, and $N/C$ nodes for each class. $\mathcal{P}_{in}$ is the set of possible pairs of nodes from the same class. $\mathcal{P}_{out}$ is the set of possible pairs of nodes from different classes. $g_1$ and $g_2$ are two filters same as Theorem~\ref{theorem_bound}. Given an arbitrary graph signal $\mathbf{x}$, let $d^{(1)}_{in} = \sum_{(s, t) \in \mathcal{P}_{in}} \big(x^{(1)}_s - x^{(1)}_t \big)^2$ be the total intra-class distance, $d^{(1)}_{out} = \sum_{(s, t) \in \mathcal{P}_{out}} \big(x^{(1)}_s - x^{(1)}_t \big)^2$ be the total inter-class distance, and $\bar{d}^{(1)}_{in} =  d^{(1)}_{in}/|\mathcal{P}_{in}|$ be the average intra-class distance while $\bar{d}^{(1)}_{out} = d^{(1)}_{out}/|\mathcal{P}_{out}|$ be the average inter-class distance. $\Delta \bar{d}^{(1)} = \bar{d}^{(1)}_{out} - \bar{d}^{(1)}_{in}$ is the difference between average inter-distance and intra-class distance. $d^{(2)}_{out}$, $d^{(2)}_{out}$, $\bar{d}^{(2)}_{in}$, $\bar{d}^{(2)}_{out}$, and $\Delta \bar{d}^{(2)}$ are corresponding values defined similarly on $\mathbf{x}^{(2)}$. If $\mathbb{E}[\Delta s] > 0$, we have: (1) when $h > \frac{1}{C}$, $\mathbb{E}[\Delta \bar{d}^{(1)}] < \mathbb{E}[\Delta \bar{d}^{(2)}]$; and (2) when $h < \frac{1}{C}$, $\mathbb{E}[\Delta \bar{d}^{(1)}] > \mathbb{E}[\Delta \bar{d}^{(2)}]$. 
\end{theorem}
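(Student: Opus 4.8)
The plan is to relate the inter/intra-class distances appearing in this theorem to the edge-based quantity $\Delta s$ from Theorem~\ref{theorem_bound}, using the balanced structure of $\mathcal{G}$ and the definition of the homophily ratio. First I would write, for any signal $\mathbf{y}$, the total distance over all pairs $\sum_{(s,t)\in\mathcal{P}_{in}\cup\mathcal{P}_{out}} (y_s-y_t)^2$ as a fixed quadratic form that is \emph{independent of the class labels} — it depends only on $\mathbf{y}$, not on how nodes are partitioned into classes. Hence $d_{in} + d_{out}$ is the same constant (in terms of $\mathbf{y}$) regardless of labeling, and in particular $d^{(1)}_{in}+d^{(1)}_{out}$ and $d^{(2)}_{in}+d^{(2)}_{out}$ are each determined by $\mathbf{x}^{(1)},\mathbf{x}^{(2)}$ alone. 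The key consequence is that $\Delta\bar d = \bar d_{out}-\bar d_{in}$ can be rewritten, using $|\mathcal{P}_{in}|$ and $|\mathcal{P}_{out}|$ (which for a balanced graph are $C\binom{N/C}{2}$ and $\binom{N/C}{2}C(C-1)$ respectively, hence fixed), as an affine function of $d_{in}$ alone: $\Delta\bar d = \frac{T - d_{in}}{|\mathcal{P}_{out}|} - \frac{d_{in}}{|\mathcal{P}_{in}|}$, where $T$ is the labeling-independent total. Therefore $\mathbb{E}[\Delta\bar d^{(1)}] - \mathbb{E}[\Delta\bar d^{(2)}]$ is a \emph{negative} constant multiple of $\mathbb{E}[d^{(1)}_{in} - d^{(2)}_{in}]$ (plus a term from the difference of the totals $T^{(1)}-T^{(2)}$, which I will need to control via the same eigenvector-expansion computation as in Theorem~\ref{theorem_bound}).

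Next I would connect $\mathbb{E}[d^{(1)}_{in} - d^{(2)}_{in}]$ to the homophily ratio. The edge set $\mathcal{E}$ consists of $h|\mathcal{E}|$ intra-class edges and $(1-h)|\mathcal{E}|$ inter-class edges. Treating the edges as (roughly) sampled from $\mathcal{P}_{in}\cup\mathcal{P}_{out}$ — uniformly within each group — gives $\mathbb{E}[\Delta s] = h|\mathcal{E}|\,\mathbb{E}\big[\bar d^{(1)}_{in}-\bar d^{(2)}_{in}\big] + (1-h)|\mathcal{E}|\,\mathbb{E}\big[\bar d^{(1)}_{out}-\bar d^{(2)}_{out}\big]$, i.e. $\mathbb{E}[\Delta s]/|\mathcal{E}|$ is a convex combination, with weights $h$ and $1-h$, of the average intra- and inter-class per-pair differences under $g_1$ vs $g_2$. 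Combining this with the identity from the first paragraph (which ties $\mathbb{E}[\bar d^{(1)}_{out}-\bar d^{(2)}_{out}]$ to $\mathbb{E}[\bar d^{(1)}_{in}-\bar d^{(2)}_{in}]$ linearly), I get a single scalar equation relating $\mathbb{E}[\Delta s]>0$, $h$, and the sign of $\mathbb{E}[\bar d^{(1)}_{in}-\bar d^{(2)}_{in}]$. The algebra should show that the ``pivot'' value of $h$ at which the sign flips is exactly $1/C$ (using $|\mathcal{P}_{in}|/(|\mathcal{P}_{in}|+|\mathcal{P}_{out}|) = 1/C$ for the balanced graph, matching Lemma~\ref{theorem_1/C}). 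Then tracking signs: for $h>1/C$, $\mathbb{E}[\Delta s]>0$ forces $\mathbb{E}[d^{(1)}_{in}-d^{(2)}_{in}]<0$ in the appropriate normalized sense, which by the affine relation yields $\mathbb{E}[\Delta\bar d^{(1)}]<\mathbb{E}[\Delta\bar d^{(2)}]$; for $h<1/C$ the inequality reverses.

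The main obstacle I anticipate is making the ``edges are uniformly sampled from the pair sets'' step rigorous, and more specifically handling the term $T^{(1)}-T^{(2)} = \sum_{(s,t)\in\mathcal{P}_{in}\cup\mathcal{P}_{out}}\big[(x^{(1)}_s-x^{(1)}_t)^2-(x^{(2)}_s-x^{(2)}_t)^2\big]$. This total-over-all-pairs quantity equals $N\,\mathbf{x}^{(i)\top}\mathbf{x}^{(i)} - (\mathbf{1}^\top\mathbf{x}^{(i)})^2$ up to scaling, and expanding $\mathbf{x}^{(i)}$ in the Laplacian eigenbasis (as in Eq.~\ref{eq_lin_com_x'}) shows $\mathbb{E}\|\mathbf{x}^{(1)}\|^2 = \sum_i \mathbb{E}[c_i^2]g_1^2(\lambda_i) = \sum_i\mathbb{E}[c_i^2]g_2^2(\lambda_i) = \mathbb{E}\|\mathbf{x}^{(2)}\|^2$ by the equal-norm assumption (2) on $g_1,g_2$ together with the i.i.d.\ assumption on $c_i$; the $(\mathbf{1}^\top\mathbf{x}^{(i)})^2$ term contributes only through the eigenvector $\mathbf{u}_1$ associated with $\lambda_1=0$ and can likewise be shown to match in expectation. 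So $\mathbb{E}[T^{(1)}-T^{(2)}]=0$, which is exactly what is needed to isolate the $d_{in}$-difference term. I would carry out this eigenbasis bookkeeping carefully, reuse the monotonicity inequality $\lambda_i\mathbb{E}[c_i^2]\le\lambda_j\mathbb{E}[c_j^2]$ from Theorem~\ref{theorem_bound} where needed, and then finish with the sign-chasing in the scalar identity.
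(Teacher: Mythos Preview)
Your route is more elaborate than the paper's, and the step you yourself flag as the ``main obstacle'' does not go through. The paper never introduces the total-over-all-pairs quantity $T$ and never needs $\mathbb{E}[T^{(1)}-T^{(2)}]=0$. Instead it works directly: under the heuristic that the filter's effect on pairwise distances is concentrated on the actual edges (``solely consider the direct influence of graph convolution on connected nodes''), one posits simultaneously $\mathbb{E}[d^{(1)}_{in}-d^{(2)}_{in}]=h\,\mathbb{E}[\Delta s]$ and $\mathbb{E}[d^{(1)}_{out}-d^{(2)}_{out}]=(1-h)\,\mathbb{E}[\Delta s]$. Plugging into $\Delta\bar d^{(i)}=\tfrac{2C}{(C-1)N^2}\bigl[d^{(i)}_{out}-(C-1)d^{(i)}_{in}\bigr]$ (from $|\mathcal{P}_{in}|=N^2/(2C)$, $|\mathcal{P}_{out}|=(C-1)N^2/(2C)$) yields $\mathbb{E}[\Delta\bar d^{(1)}-\Delta\bar d^{(2)}]=\tfrac{2C}{(C-1)N^2}(1-Ch)\,\mathbb{E}[\Delta s]$ in one line, and the $1/C$ threshold falls out immediately.

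Your $T$-decomposition, by contrast, replaces the paper's single heuristic by a weaker sampling heuristic that gives only one linear relation between $\mathbb{E}[\bar d^{(1)}_{in}-\bar d^{(2)}_{in}]$ and $\mathbb{E}[\bar d^{(1)}_{out}-\bar d^{(2)}_{out}]$; you then need $\mathbb{E}[T^{(1)}-T^{(2)}]=0$ as the second equation. But that claim is false. With $T=N\|\mathbf y\|^2-(\mathbf 1^\top\mathbf y)^2$ and $\mathbf x^{(i)}=\sum_j g_i(\lambda_j)c_j\mathbf u_j$, the $N\|\mathbf x^{(i)}\|^2$ piece does cancel in expectation by the equal-norm assumption and the i.i.d.\ $c_j$'s, but (for the unnormalized Laplacian, where $\mathbf u_1\propto\mathbf 1$) one has $(\mathbf 1^\top\mathbf x^{(i)})^2=N\,g_i^2(0)\,c_1^2$, and condition~(1) of Theorem~\ref{theorem_bound} forces $g_1(\lambda_1)\neq g_2(\lambda_1)$, so generically $g_1^2(0)\neq g_2^2(0)$ and these terms do \emph{not} match. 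The residual $\mathbb{E}[T^{(1)}-T^{(2)}]=N\sigma^2\bigl(g_2^2(0)-g_1^2(0)\bigr)$ then contaminates both your scalar identity and the final sign analysis. The fix is to drop the $T$ detour and adopt the paper's stronger heuristic, which supplies both $d_{in}$- and $d_{out}$-differences directly and makes the all-pairs bookkeeping unnecessary.
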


\begin{proof}
In graph $\mathcal{G}$, the number of possible homophilous (intra-class) edges (including self-loop) is,
\begin{equation}
    |\mathcal{P}_{in}| = \frac{C}{2} \frac{N}{C} \left( \frac{N}{C} \right) = \frac{N^2}{2C} .
\end{equation}
The number of possible heterophilous (inter-class) edges is,
\begin{equation}
    |\mathcal{P}_{out}| = \frac{C}{2} \frac{N}{C} \left( \frac{C-1}{C} N\right) = \frac{N^2}{2} \left(\frac{C-1}{C}\right) .
\end{equation}
Therefore, we have
\begin{equation}
\bar{d}^{(i)}_{in} = \frac{d^{(i)}_{in}}{|\mathcal{P}_{in}|} = \frac{2C d^{(i)}_{in}}{N^2} \ , \quad i \in \{1, 2\}
\end{equation}

\begin{equation}
\bar{d}^{(i)}_{out} = \frac{d^{(i)}_{out}}{|\mathcal{P}_{out}|} = \frac{2C d^{(i)}_{out}}{N^2(C-1)} \ , \quad i \in \{1, 2\}
\end{equation}

\begin{equation}
\begin{aligned}
    \Delta \bar{d}^{(i)} &= \bar{d}^{(i)}_{out} - \bar{d}^{(i)}_{in} \\
    &= \frac{2C d^{(i)}_{out}}{N^2(C-1)} - \frac{2C d^{(i)}_{in}}{N^2} \\
    &= \frac{2C}{(C-1) N^2}\left[d^{(i)}_{out} - (C-1) d^{(i)}_{in} \right]
\end{aligned}
\end{equation}

$\mathcal{E}_{in}$ denotes the set of edges connecting nodes from the same class (intra-class edges). $\mathcal{E}_{out}$ denotes the set of edges connecting nodes from different classes (inter-class edges). There are $|\mathcal{E}|$ edges, $h \cdot |\mathcal{E}|$ homophilous edges, and $(1 - h) \cdot |\mathcal{E}|$ heterophilous edges. In expectation, each edge has the same difference of the distance of connected nodes got by $g_1$ and $g_2$, i.e.,
\begin{equation}
\begin{aligned}
    \mathbb{E}\left[ \Delta s \right] &= \mathbb{E}\left[ \sum_{(s,t) \in \mathcal{E}} \left[ (x^{(1)}_s - x^{(1)}_t)^2 - (x^{(2)}_s - x^{(2)}_t)^2 \right] \right] \\
    \mathbb{E}\left[h\Delta s \right] &= \mathbb{E}\left[ \sum_{(s, t) \in \mathcal{E} \atop y_s = y_t} \left[ (x^{(1)}_s - x^{(1)}_t)^2 - (x^{(2)}_s - x^{(2)}_t)^2 \right] \right] \\
    \mathbb{E}\left[ (1-h) \Delta s \right] &= \mathbb{E}\left[ \sum_{(s, t) \in \mathcal{E} \atop y_s \neq y_t} \left[ (x^{(1)}_s - x^{(1)}_t)^2 - (x^{(2)}_s - x^{(2)}_t)^2 \right] \right]
\end{aligned}
\end{equation}
If we solely consider the direct influence of graph convolution on connected nodes, the relationship between $d'$ and $d$ can be expressed as follows:
\begin{equation}
    \mathbb{E}\left[ \bar{d}^{(1)}_{out} - \bar{d}^{(2)}_{out} \right] = (1 - h)\mathbb{E}\left[ \Delta s \right]
    \quad\text{and}\quad 
    \mathbb{E}\left[ \bar{d}^{(1)}_{in} - \bar{d}^{(2)}_{in} \right] = h \mathbb{E}\left[ \Delta s \right]
\end{equation}

\begin{equation}
\begin{aligned}
    \mathbb{E}\left[ \Delta \bar{d}^{(1)} - \Delta \bar{d}^{(2)} \right] &= \mathbb{E}\left[ \frac{2C}{(C-1) N^2} \left[ (d^{(1)}_{out} - d^{(2)}_{out}) - (C-1)(d^{(1)}_{in} - d^{(2)}_{in}) \right] \right] \\
    &= \frac{2C}{(C-1) N^2} \left[ (1-h) \mathbb{E}[\Delta s] -(C-1)h \mathbb{E}[\Delta s] \right] \\
    &= \frac{2C}{(C-1) N^2} \left[\mathbb{E}[ \Delta s] (1-Ch) \right]
\end{aligned}
\end{equation}

Because $\frac{2C}{(C-1)N^2} > 0$, then we have \\
(1) when $h > \frac{1}{C}$, if $\Delta s < 0$, then $\mathbb{E}[ \Delta \bar{d}^{(1)}] > \mathbb{E}[\Delta \bar{d}^{(2)}]$; if $\Delta s > 0$, then $\mathbb{E}[\Delta \bar{d}^{(1)}] < \mathbb{E}[\Delta \bar{d}^{(2)}]$. \\
(2) when $h < \frac{1}{C}$, if $\Delta s < 0$, then $\mathbb{E}[\Delta \bar{d}^{(1)}] < \mathbb{E}[\Delta \bar{d}^{(2)}]$; if $\Delta s > 0$, then $\mathbb{E}\Delta \bar{d}^{(1)}] > \mathbb{E}[\Delta \bar{d}^{(2)}]$. 
\end{proof}

\section{Training Algorithm of NewtonNet} \label{appen:algo}
We show the training algorithm of NewtonNet in Algorithm~\ref{algo}. We first randomly initialize $\theta$, $h$, and $\{t_0, \cdots, t_K\}$.
We update $h$ according to the predicted labels of each iteration. We update the learned representations and $\theta$, and $\{t_0, \cdots, t_K\}$ accordingly until convergence or reaching max iteration.

\begin{algorithm}[H]
\caption{Training Algorithm of NewtonNet}
\label{algo}
\begin{algorithmic}[1]
\REQUIRE $\mathcal{G} = (\mathcal{V}, \mathcal{E}, \mathbf{X})$, $\mathcal{Y}_L$, $K$, $\{q_0, \cdots, q_K\}$, $\gamma_1$, $\gamma_2$, $\gamma_3$, 
\ENSURE $\theta$, $h$, $\{t_0, \cdots, t_K\}$ 
\STATE Randomly initialize $\theta$, $h$, $\{t_0, \cdots, t_K\}$
\REPEAT
\STATE Update representations $\mathbf{z}$ by Eq.~\ref{newtonnet}
\STATE Update predicted labels $\hat{\boldsymbol{y}}_v$
\STATE Update $h$ with predicted labels
\STATE $\mathcal{L} \leftarrow$ Eq.~\ref{eq:total_loss} 
\STATE Update $\theta$ and $\{t_0, \dots, t_K\}$
\UNTIL \textit{convergence or reaching max iteration}
\end{algorithmic}
\end{algorithm}

\subsection{Complexity Analysis} \label{complexity}
\textbf{Time complexity.} According to~\citet{blakely2021time}, the time complexity of GCN is $O(L(EF+NF^2))$, where $E$ is the number of edges, and $L$ is the number of layers. One-layer GCN has the formula $\mathbf{X}^{l+1}=\sigma(\mathbf{A} \mathbf{X}^l \mathbf{W}^l)$. $O(NF^2)$ is the time complexity of feature transformation, and $O(EF)$ is the time complexity of neighborhood aggregation. Following GPRGNN, ChebNetII, and BernNet, NewtonNet uses the "first transforms features and then propagates" strategy. According to Eq.~\ref{newtonnet}, the feature transformation part is implemented by an MLP with time $O(NF^2)$. And the propagation part has the time complexity with $O(KEF)$. In other words, NewtonNet has a time complexity $O(KEF+NF^2)$, which is linear to $K$. BernNet's time complexity is quadratic to $K$. We summarize the time complexity in Table~\ref{tab:complexity}.

\begin{table}[H]
\caption{Time and space complexity.}
\vskip -0.5em
\centering
\begin{tabular}{ccc}
\toprule
Method    & Time Complexity       & Space Complexity               \\ \hline
MLP       & $O(NF^2)$             & $O(NF^2)$                       \\
GCN       & $O(L(EF+NF^2))$       & $O(E+LF^2+LNF)$                 \\
GPRGNN    & $O(KEF+NF^2)$         & $O(E+F^2+NF)$                   \\
ChebNetII & $O(KEF+NF^2)$         & $O(E+F^2+NF)$                   \\
BernNet   & $O(K^2EF+NF^2)$       & $O(KE+F^2+NF)$                  \\
NewtonNet & $O(KEF+NF^2)$         & $O(KE+F^2+NF)$                  \\
\bottomrule
\end{tabular}
\label{tab:complexity}
\end{table}

\begin{table}[H]
\caption{Running Time (ms/epoch) of each method.}
\vskip -0.5em
\centering
\begin{tabular}{ccccc}
\toprule
Method   & Chameleon & Pubmed & Penn94 & Genius \\ \hline
MLP              & 1.909     & 2.283  & 6.119  & 10.474 \\
GCN              & 2.891     & 3.169  & 22.043 & 20.159 \\
Mixhop           & 3.609     & 4.299  & 19.702 & 27.041 \\
GPRGNN           & 4.807     & 4.984  & 10.572 & 12.522 \\
ChebNetII (K=5)  & 4.414     & 4.871  & 9.609  & 12.189 \\
ChebNetII (K=10) & 7.352     & 7.447  & 13.661 & 15.346 \\
BernNet (K=5)    & 8.029     & 11.730 & 19.719 & 20.168 \\
BernNet (K=10)   & 20.490    & 20.869 & 49.592 & 43.524 \\
NewtonNet (K=5)  & 6.6135    & 7.075  & 17.387 & 17.571 \\
NewtonNet (K=10) & 12.362    & 13.042 & 25.194 & 30.836 \\
\bottomrule
\end{tabular}
\label{tab:running_time}
\end{table}

To examine our analysis, Table~\ref{tab:running_time} shows the running time of each method. We employ 5 different masks with 2000 epochs and calculate the average time of each epoch. We observe that (1) For the spectral methods, NewtonNet, ChebNetII, and GPRGNN run more quickly than BernNet since their time complexity is linear to $K$ while BernNet is quadratic; (2) NewtonNet cost more time than GPRGNN and ChebNetII because it calculates a more complex polynomial; (3) On smaller datasets (Chameleon, Pubmed), GCN runs faster than NewtonNet. On larger datasets (Penn94, Genius), NewtonNet and other spectral methods are much more efficient than GCN. This is because we only transform and propagate the features once, but in GCN, we stack several layers to propagate to more neighbors. In conclusion, NewtonNet is scalable on large datasets.

\textbf{Space complexity.}  Similarly, GCN has a space complexity $O(E+LF^2+LNF)$. $O(F^2)$ is for the weight matrix of feature transformation while $O(NF)$ is for the feature matrix. $O(E)$ is caused by the sparse edge matrix. NewtonNet has the space complexity $O(KE+F^2+NF)$ because it pre-calculates $(\mathbf{L}-q_i \mathbf{I})$ in Equation 6. We compare the space complexity in Table~\ref{tab:complexity} and the memory used by each model in Table~\ref{tab:running_space}. On smaller datasets, NewtonNet has a similar space consumption with GCN. However, NewtonNet and other spectral methods are more space efficient than GCN on larger datasets because we do not need to stack layers. Therefore, NewtonNet has excellent scalability.

\begin{table}[H]
\caption{Memory usage (MB) of each method.}
\vskip -0.5em
\centering
\begin{tabular}{ccccc}
\toprule
Method & Chameleon & Pubmed & Penn94 & Genius \\ \hline
MLP         & 1024      & 1058   & 1862   & 1390   \\
GCN         & 1060      & 1114   & 3320   & 2012   \\
Mixhop      & 1052      & 1124   & 2102   & 2536   \\
GPRGNN      & 1046      & 1060   & 1984   & 1370   \\
ChebNetII   & 1046      & 1080   & 1982   & 1474   \\
BernNet     & 1046      & 1082   & 2026   & 1544   \\
NewtonNet   & 1048      & 1084   & 2292   & 1868    \\
\bottomrule
\end{tabular}
\label{tab:running_space}
\end{table}

\begin{figure}[h]
\centering
\subfigure[Approximation.]{
\begin{minipage}[t]{0.35\linewidth}
\centering
\includegraphics[width=0.7\linewidth]{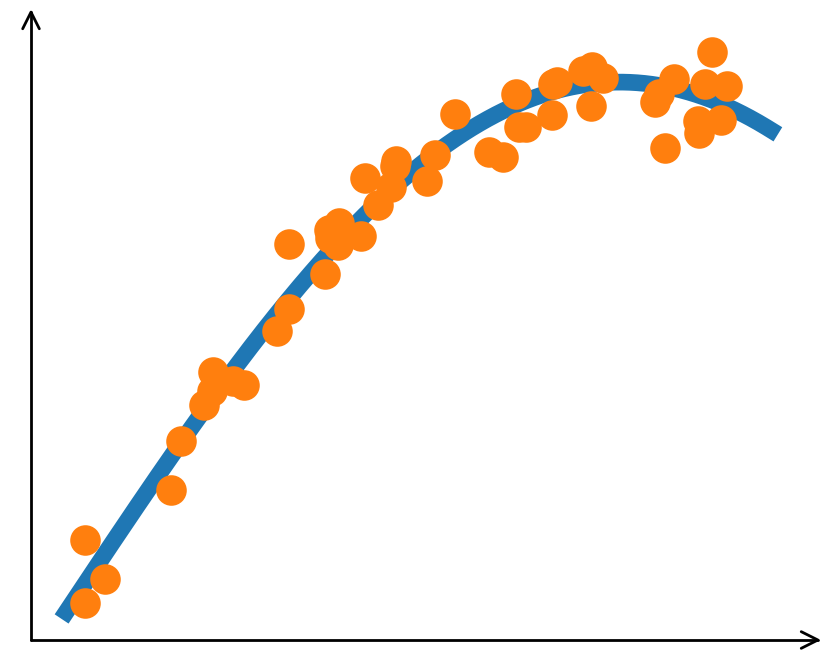} 
\end{minipage}}
\subfigure[Interpolation.]{
\begin{minipage}[t]{0.35\linewidth}
\centering
\includegraphics[width=0.7\linewidth]{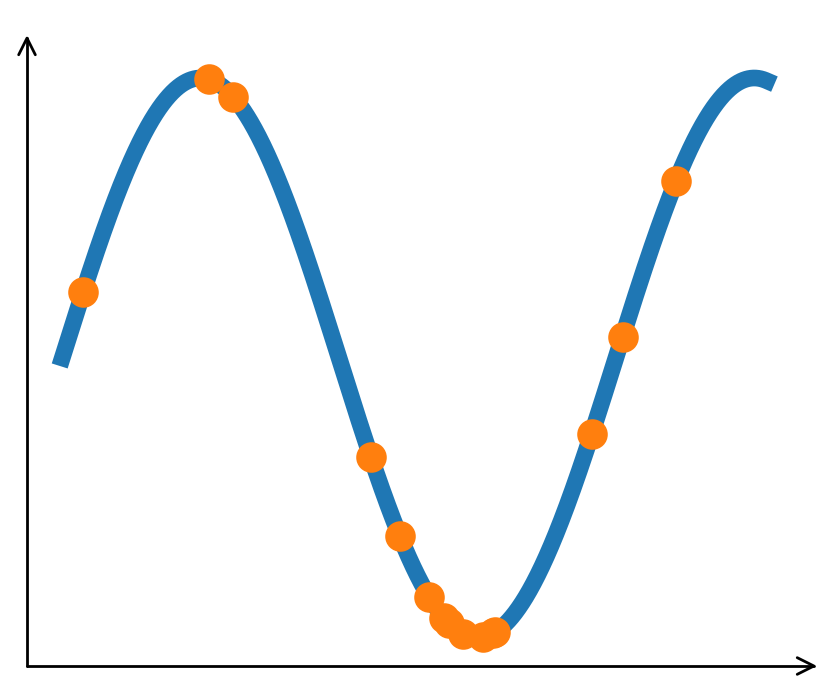}
\end{minipage}}
\centering
\vskip -0.5em
\caption{Difference between approximation and interpolation.}
\label{approxvsinter}
\vskip -0.5em
\end{figure}

\section{Approximation vs Interpolation} \label{appen:approximation_and_interpolation}
Approximation and interpolation are two common curve fitting methods~\cite{hildebrand1987introduction}. Given an unknown continuous function $\hat{f}(x)$, and its values at $n+1$ known points $\{(x_0, \hat{f}(x_0)), \cdots, (x_n, \hat{f}(x_n))\}$, we want to use $g(x)$ to fit unknown $\hat{f}(x)$. Approximation methods aim to minimize the error between the original function and estimated function $|\hat{f}(x) - g(x)|$; while interpolation methods aim to fit the data and make $\hat{f}(x_i) = g(x_i), i=0, \cdots, n$. In other words, the interpolation function passes every known point exactly, but the approximation function finds minimal error among the known points. Fig.~\ref{approxvsinter} shows the difference. The property of interpolation allows us to learn the function values directly in our model, which is discussed in Section~\ref{sec:methodology}.

\section{Experimental Details} \label{appen_experimental_details} 
\subsection{Synthetic Datasets} \label{appen_synthetic_datasets}
In this paper, we employ contextual stochastic block model (CSBM)~\cite{deshpande2018contextual} to generate synthetic datasets. CSBM provides a model to generate synthetic graphs with controllable inter-class and intra-class edge probability. It assumes features of nodes from the same class conform to the same distribution. Assume there are two equal-size classes $\{c_0, c_1\}$ with $n$ nodes for each class. CSBM generates edges and features by:
\begin{equation}
    \mathbb{P}(\mathbf{A}_{ij}=1)  =\left\{\begin{array}{ll}
    \frac{1}{n}(d + \sigma \sqrt{d}) \quad \text{when $y_i = y_j$ } \\
    \frac{1}{n}(d - \sigma \sqrt{d}) \quad \text{when $y_i \neq y_j$ }
    \end{array}\right. \quad \quad 
    \textbf{x}_{i}  =\sqrt{\frac{\mu}{n}} v_{i} \textbf{u} +\frac{\textbf{w}_{i}}{\sqrt{F}} 
\label{eq:csbm}
\end{equation}
where $d$ is the average node degree, $\mu$ 
is mean value of Gaussian distribution, $F$ is the feature dimension, entries of $\textbf{w}_{i} \in \mathbb{R}^p$ has independent standard normal distributions, and $\textbf{u} \sim \mathcal{N}(0, I_{F} / F)$. We can control homophily ratio $h$ by changing $\sigma = \sqrt{d}(2h - 1)$, $-\sqrt{d} \leq \sigma \leq \sqrt{d}$. When $\sigma = -\sqrt{d}$, it is a totally heterophilous graph; when $\sigma = \sqrt{d}$, it is a totally homophilous graph. Following~\cite{chien2021adaptive}, we adopt $d=5, \mu=1$ in this paper. We vary $\sigma$ to generate graphs with different homophily levels. In Fig.~\ref{fig:mag_imp}(b), we adopt $2n=3000, F=3000$ to generate the synthetic dataset. We vary the number of nodes $2n$ and number of features $F$ to generate different CSBM datasets and show their frequency importance in Fig.~\ref{fig:more_importance}.

\subsection{Real-world datasets} \label{appen_real_datasets}
\textbf{Citation Networks}~\cite{sen2008collective}: Cora, Citeseer, and PubMed are citation network datasets. Cora consists of seven classes of machine learning papers, while CiteSeer has six. Papers are represented by nodes, while citations between two papers are represented by edges. Each node has features defined by the words that appear in the paper's abstract. Similarly, PubMed is a collection of abstracts from three types of medical papers. 

\textbf{WebKB}~\cite{Pei2020Geom-GCN:}: Cornell, Texas, and Wisconsin are three sub-datasets of WebKB. They are collected from a set of websites of several universities' CS departments and processed by \cite{Pei2020Geom-GCN:}. For each dataset, a node represents a web page and an edge represents a hyperlink. Node features are the bag-of-words representation of each web page. We aim to classify the nodes into one of the five classes, student, project, course, staff, and faculty.

\textbf{Wikipedia Networks}~\cite{musae}: Chameleon, Squirrel, and Crocodile are three topics of Wikipedia page-to-page networks. Articles from Wikipedia are represented by nodes, and links between them are represented by edges. Node features indicate the occurrences of specific nouns in the articles. Based on the average monthly traffic of the web page, the nodes are divided into five classes.

\textbf{Social Networks}~\cite{lim2021large}: Penn94~\cite{traud2012social} is a social network of friends among university students on Facebook in 2005. The network consists of nodes representing individual students, each with their reported gender identified. Additional characteristics of the nodes include their major, secondary major/minor, dormitory or house, year of study, and high school attended. 

Twitch-gamers~\cite{rozemberczki2021twitch} is a network graph of Twitch accounts and their mutual followers. Node attributes include views, creation date, language, and account status. The classification is binary and to predict whether the channel has explicit content.

Genius~\cite{lim2021expertise} is from the genius.com social network, where nodes represent users and edges connect mutual followers. Node attributes include expertise scores, contribution counts, and user roles. Some users are labeled "gone," often indicating spam. Our task is to predict these marked nodes.

\begin{table}[h]
\caption{Statistics of real-world datasets.}
\vskip -0.5em
\centering
\resizebox{\columnwidth}{!}{%
\begin{tabular}{c|ccc|ccc|cc|ccc}
\hline
\multirow{2}{*}{Dataset} & \multicolumn{3}{c|}{\textbf{Citation}} & \multicolumn{3}{c|}{\textbf{Wikipedia}} & \multicolumn{2}{c|}{\textbf{WebKB}} & \multicolumn{3}{c}{\textbf{Social}} \\
                         & Cora       & Cite.       & Pubm.       & Cham.      & Squi.        & Croc.       & Texas            & Corn.            & Penn94      & Genius   & Gamer      \\ \hline
Nodes                    & 2708       & 3327        & 19717       & 2277       & 5201         & 11,631      & 183              & 183              & 41554       & 421,961  & 168,114    \\
Edges                    & 5429       & 4732        & 44338       & 36101      & 217,073      & 360040      & 309              & 295              & 1,362,229   & 984,979  & 6,797,557  \\
Attributes               & 1433       & 3703        & 500         & 2325       & 2089         & 128         & 1703             & 1703             & 4814        & 12       & 7          \\
Classes                  & 7          & 6           & 3           & 5          & 5            & 5           & 5                & 5                & 2           & 2        & 2          \\
$h$                      & 0.81       & 0.74        & 0.80        & 0.24       & 0.22         & 0.25        & 0.11             & 0.31             & 0.47        & 0.62     & 0.55       \\ \hline
\end{tabular}%
}
\end{table}

\subsection{Baselines} \label{appen_basleines}
We compare our method with various state-of-the-art methods for both spatial and spectral methods. First, we compare the following spatial methods:
\begin{itemize}[leftmargin=*]
    \item \textbf{MLP}: Multilayer Perceptron predicts node labels using node attributes only without incorporating graph structure information.
    \item \textbf{GCN}~\cite{kipf2017semi}: Graph Convolutional Network is one of the most popular MPNNs using 1-hop neighbors in each layer.
    \item \textbf{MixHop}~\cite{abu2019mixhop}: MixHop mixes 1-hop and 2-hop neighbors to learn higher-order information.
    \item \textbf{APPNP}~\cite{klicpera2018predict}: APPNP uses the Personalized PageRank algorithm to propagate the prediction results of GNN to increase the propagation range.
    \item \textbf{GloGNN++}~\cite{li2022finding}: GloGNN++ is a method for creating node embeddings by aggregating information from global nodes in a graph using coefficient matrices derived through optimization problems.
\end{itemize}

We also compare with recent state-of-the-art spectral methods:
\begin{itemize}[leftmargin=*]
    \item \textbf{ChebNet}~\cite{defferrard2016convolutional}: ChebNet uses Chebyshev polynomial to approximate the filter function. It is a more generalized form of GCN.
    \item \textbf{GPRGNN}~\cite{chien2021adaptive}: GPRGNN uses Generalized PageRank to learn weights for combining intermediate results. 
    \item \textbf{BernNet}~\cite{he2021bernnet}: ChebNet uses Bernstein polynomial to approximate the filter function. It can learn arbitrary target functions.
    \item \textbf{ChebNetII}~\cite{he2022chebnetii}: ChebNet uses Chebyshev interpolation to approximate the filter function. It mitigates the Runge phenomenon and ensures the learned filter has a better shape.
    \item \textbf{JacobiConv}~\cite{wang2022powerful}: JacobiConv uses Jacobi basis to study the expressive power of spectral GNNs.
\end{itemize}

\subsection{Settings} \label{appen:settings}
We run all of the experiments with 10 random splits and report the average performance with the standard deviation. For full-supervised learning, we use 60\%/20\%/20\% splits for the train/validation/test set. For a fair comparison, for each method, we select the best configuration of hyperparameters using the validation set and report the mean accuracy and variance of 10 random splits on the test. For NewtonNet, we choose $K=5$ and use a MLP with two layers and 64 hidden units for encoder $f_{\theta}$. We search the learning rate of encoder and propagation among \{0.05, 0.01, 0.005\}, the weight decay rate among \{0, 0.0005\}, the dropout rate for encoder and propagation among \{0.0, 0.1, 0.3, 0.5, 0.7, 0.9\}, and $\gamma_1, \gamma_2, \gamma_3$ among \{0, 1, 3, 5\}. For other baselines, we use the original code and optimal hyperparameters from authors if available. Otherwise, we search the hyperparameters within the same search space of NewtonNet.

\section{Additional experimental results}

\subsection{Hyperparameter Analysis} \label{appen:hyper_analysis}
In our hyperparameter sensitivity analysis on the Citeseer and Chameleon datasets, we investigated the effects of varying the values of $\gamma_1$, $\gamma_2$, and $\gamma_3$ among\{0, 0.01, 0.1, 1, 10, 100\}. The accuracy results were plotted in Figure~\ref{fig:sensi}. We made the following observations. For the Chameleon dataset, increasing the value of $\gamma_1$ resulted in improved performance, as it effectively discouraged low-frequency components. As for $\gamma_2$, an initial increase led to performance improvements since it balanced lower and higher frequencies. However, further increases in $\gamma_2$ eventually led to a decline in performance. On the other hand, increasing $\gamma_3$ had a positive effect on performance, as it encouraged the inclusion of more high-frequency components.

Regarding the Citeseer dataset, we found that increasing the values of $\gamma_1$, $\gamma_2$, and $\gamma_3$ initially improved performance. However, there was a point where further increases in these regularization terms caused a decrease in performance. This can be attributed to the fact that excessively large regularization terms overshadowed the impact of the cross entropy loss, thus hindering the model's ability to learn effectively. We also observe that the change of Chameleon is more than that in Citeseer, so heterophilous graphs need more regularization.

We also investigate the sensitivity of the parameter $K$. While keeping the remaining optimal hyperparameters fixed, we explore different values of $K$ within the set \{2, 5, 8, 11, 14\}. The corresponding accuracy results are presented in Fig.~\ref{fig:sensi_K}. In both datasets, we observe a pattern of increasing performance followed by a decline. This behavior can be attributed to the choice of $K$. If $K$ is set to a small value, the polynomial lacks the necessary power to accurately approximate arbitrary functions. Conversely, if $K$ is set to a large value, the polynomial becomes susceptible to the Runge phenomenon~\cite{he2022chebnetii}.

\subsection{Learned Homophily Ratio} \label{appen:learned_ratio}
Table~\ref{tab:learned_ratio} presents the real homophily ratio alongside the learned homophily ratio for each dataset. The close proximity between the learned and real homophily ratios indicates that our model can estimate the homophily ratio accurately so that it can further guide the learning of spectral filter.

\subsection{More results of frequency importance} \label{appen:more_importances}
In Fig.~\ref{fig:more_importance}, we present more results of frequency importance on CSBM datasets with different numbers of nodes and features. We fix $d=5$ and $\mu = 1$ in Eq.~\ref{eq:csbm} and vary the number of nodes and features among \{800, 1000, 1500, 2000, 3000\}. We can get similar conclusions as in Section~\ref{sec:influence_frequency_amplitude}.

\subsection{Learned filters} \label{appen:learned_filters}
The learned filters of NewtonNet, BernNet, and ChebNetII for each dataset are illustrated in Fig.~\ref{fig:learned_on_cora} to Fig.~\ref{fig:learned_on_penn94}. Our observations reveal that NewtonNet exhibits a distinct ability to promote or discourage specific frequencies based on the homophily ratio. In the case of homophilous datasets, NewtonNet emphasizes low-frequency components while suppressing middle and high frequencies. Conversely, for heterophilous datasets, the learned spectral filter of NewtonNet emphasis more on high-frequency components compared to other models.

\begin{table}[]
\caption{The real homophily ratio and learned homophily ratio in Table~\ref{tab:full-super}}
\vskip -1em
\centering
\begin{tabular}{cccccccccc}
\toprule
              & Cora & Cite. & Pubm. & Cham. & Squi. & Croc. & Texas & Corn. & Penn. \\ \hline
\textbf{Real} & 0.81 & 0.74  & 0.80  & 0.24  & 0.22  & 0.25  & 0.11  & 0.20  & 0.47  \\
\textbf{Learned} & 0.83 & 0.79  & 0.83  & 0.27  & 0.23  & 0.28  & 0.12  & 0.33  & 0.51  \\
\bottomrule
\end{tabular}
\label{tab:learned_ratio}
\end{table}

\begin{figure}[]
\centering
\subfigure[Citeseer]{
\begin{minipage}[t]{0.3\linewidth}
\centering
\includegraphics[width=1\linewidth]{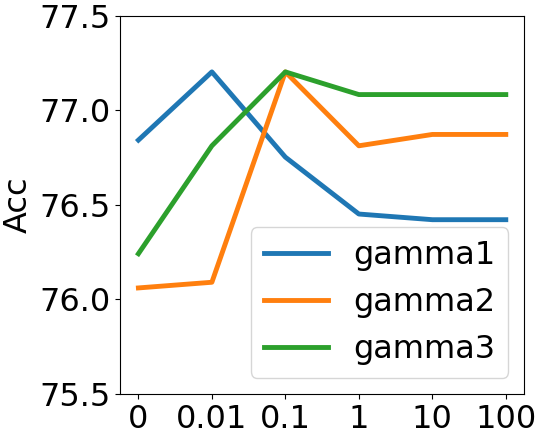} 
\end{minipage}}~~~
\hspace{0.5em}
\subfigure[Chameleon]{
\begin{minipage}[t]{0.32\linewidth}
\centering
\includegraphics[width=1\linewidth]{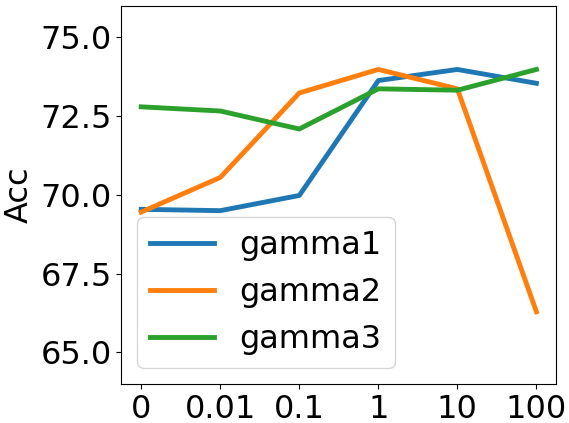}
\end{minipage}}
\centering
\vskip -1.25em
\caption{Hyperparameter Analysis for $\gamma_1$, $\gamma_2$, and $\gamma_3$}
\label{fig:sensi}
\end{figure}

\begin{figure}[]
\centering
\subfigure[Cora]{
\begin{minipage}[t]{0.3\linewidth}
\centering
\includegraphics[width=1\linewidth]{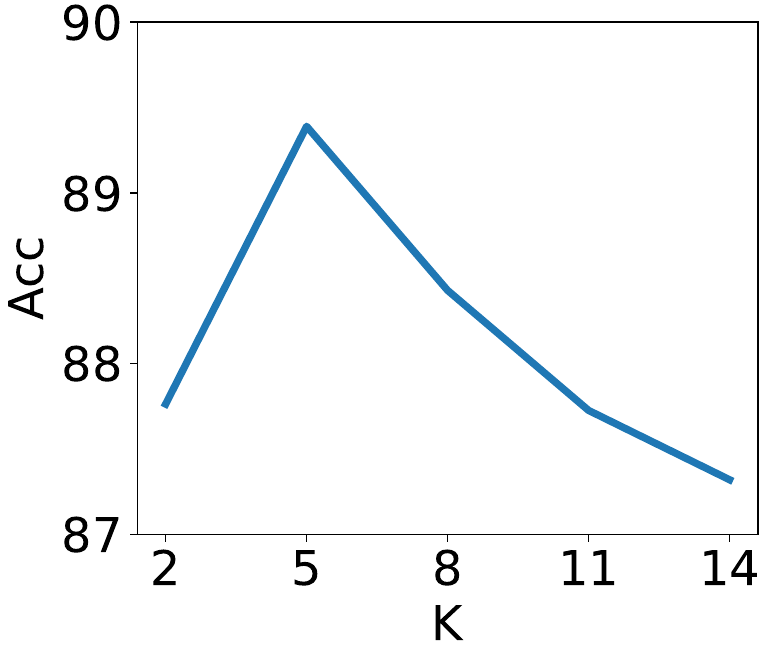} 
\end{minipage}}~~~
\hspace{0.5em}
\subfigure[Chameleon]{
\begin{minipage}[t]{0.3\linewidth}
\centering
\includegraphics[width=1\linewidth]{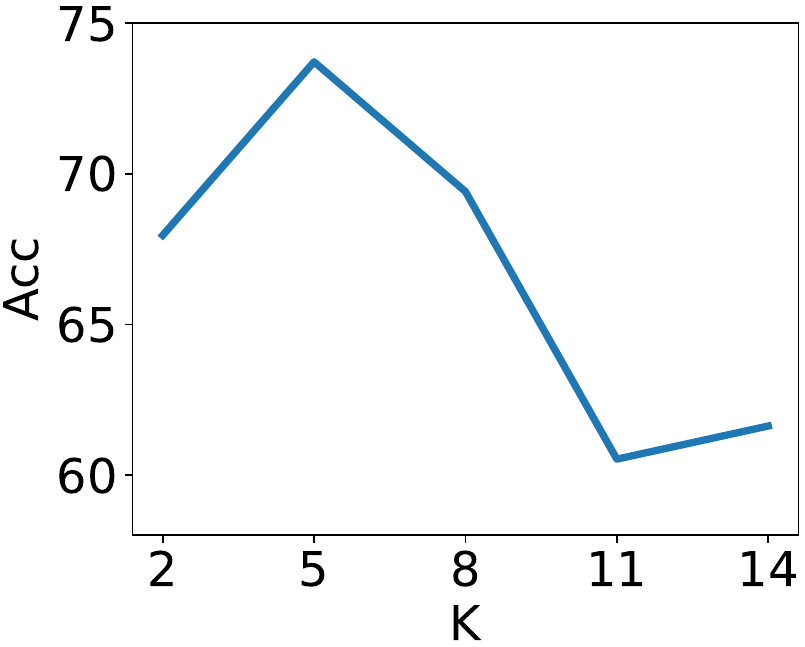}
\end{minipage}}
\centering
\vskip -1em
\caption{Hyperparameter Analysis for $K$.}
\label{fig:sensi_K}
\end{figure}

\begin{figure}[]
\centering
\subfigure[800 Nodes, 800 Features]{
\begin{minipage}[]{0.32\linewidth}
\centering
\includegraphics[width=1.0\linewidth]{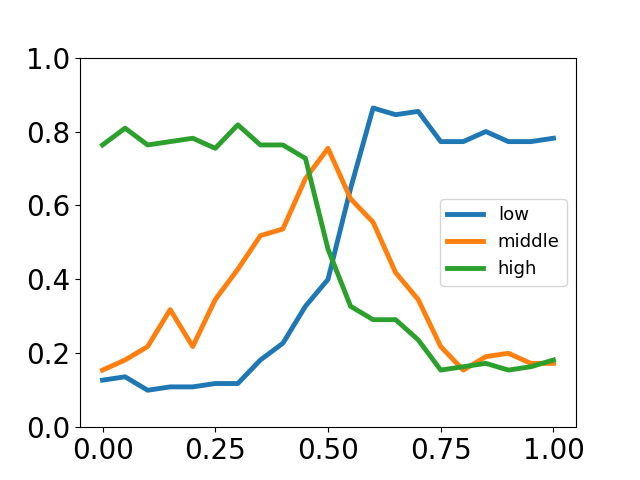}
\end{minipage}}
\subfigure[800 Nodes, 1000 Features]{
\begin{minipage}[]{0.32\linewidth}
\centering
\includegraphics[width=1.0\linewidth]{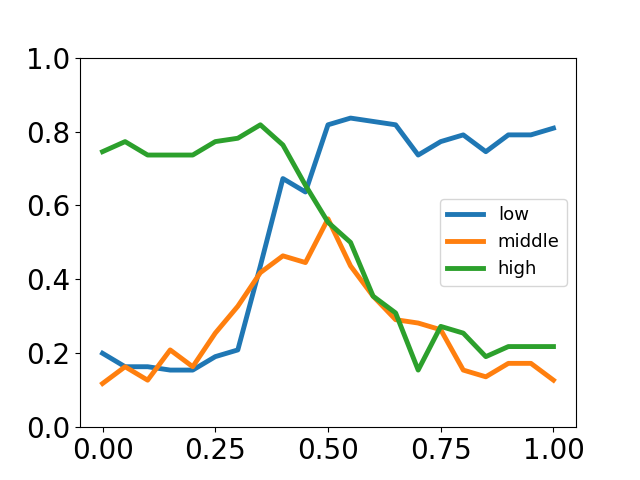}
\end{minipage}}
\subfigure[800 Nodes, 1500 Features]{
\begin{minipage}[]{0.32\linewidth}
\centering
\includegraphics[width=1.0\linewidth]{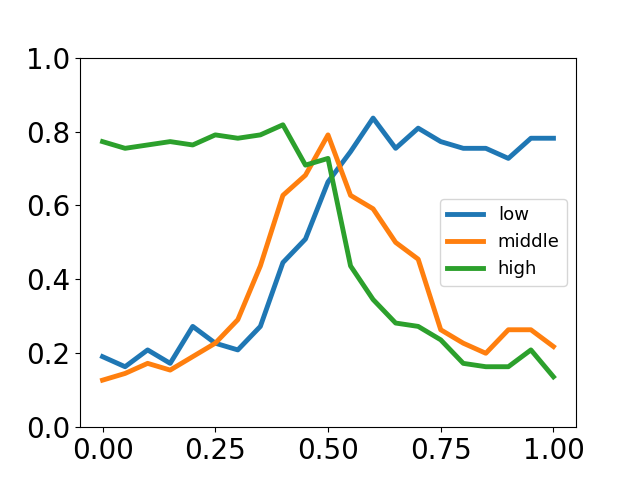}
\end{minipage}}
\subfigure[800 Nodes, 2000 Features]{
\begin{minipage}[]{0.32\linewidth}
\centering
\includegraphics[width=1.0\linewidth]{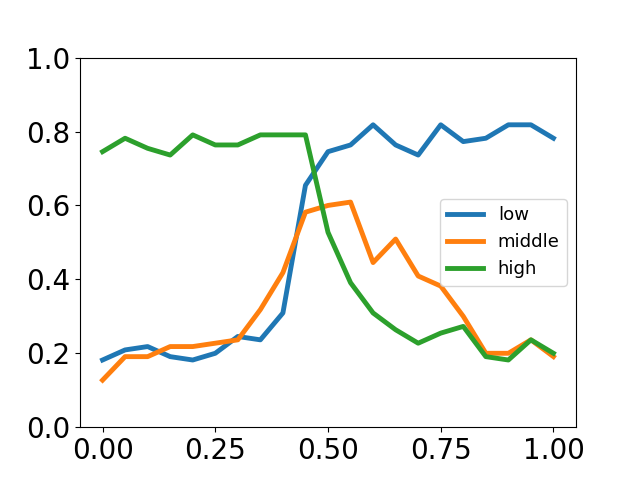}
\end{minipage}}
\subfigure[1000 Nodes, 1000 Features]{
\begin{minipage}[]{0.32\linewidth}
\centering
\includegraphics[width=1.0\linewidth]{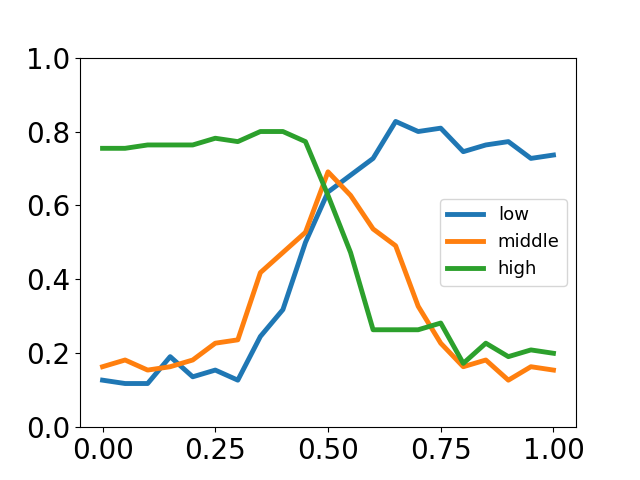}
\end{minipage}}
\subfigure[1000 Nodes, 2000 Features]{
\begin{minipage}[]{0.32\linewidth}
\centering
\includegraphics[width=1.0\linewidth]{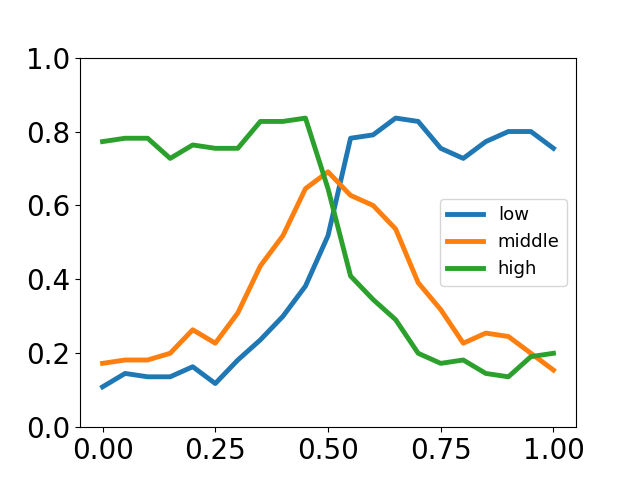}
\end{minipage}}
\subfigure[1000 Nodes, 3000 Features]{
\begin{minipage}[]{0.32\linewidth}
\centering
\includegraphics[width=1.0\linewidth]{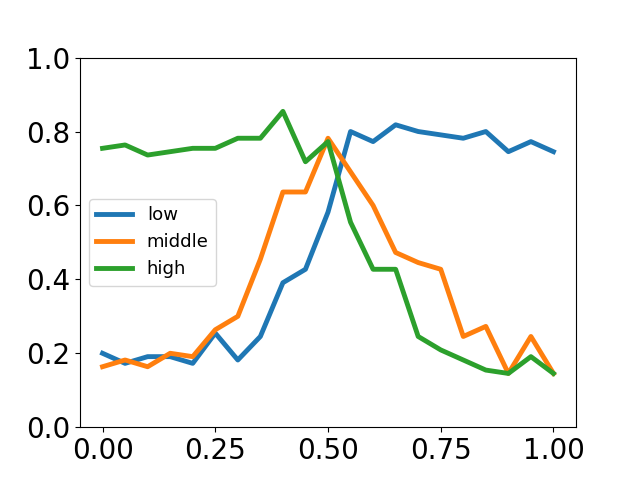}
\end{minipage}}
\subfigure[1500 Nodes, 800 Features]{
\begin{minipage}[]{0.32\linewidth}
\centering
\includegraphics[width=1.0\linewidth]{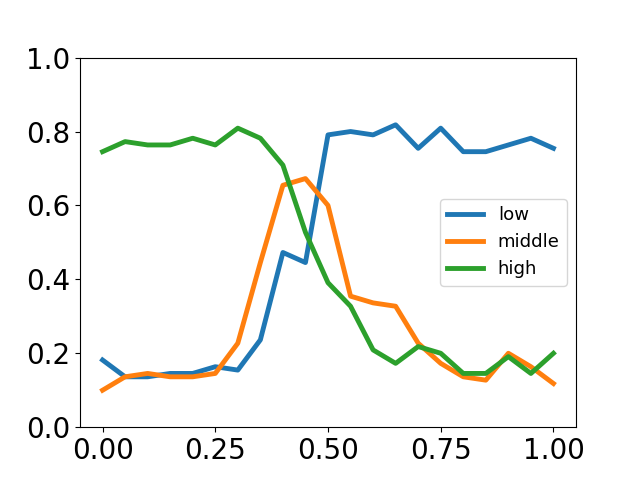}
\end{minipage}}
\subfigure[1500 Nodes, 1500 Features]{
\begin{minipage}[]{0.32\linewidth}
\centering
\includegraphics[width=1.0\linewidth]{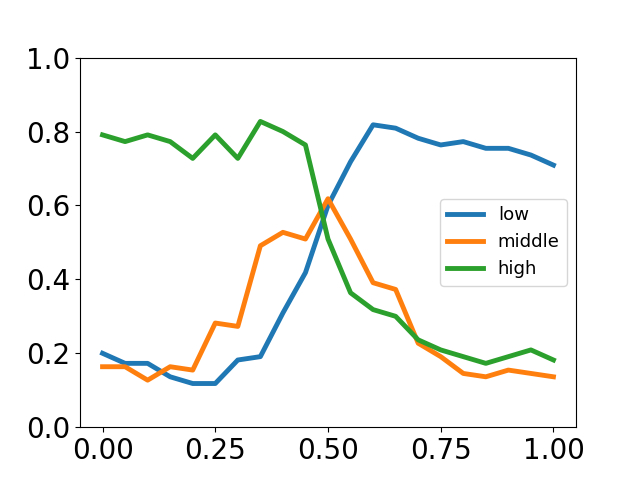}
\end{minipage}}
\subfigure[2000 Nodes, 1000 Features]{
\begin{minipage}[]{0.32\linewidth}
\centering
\includegraphics[width=1.0\linewidth]{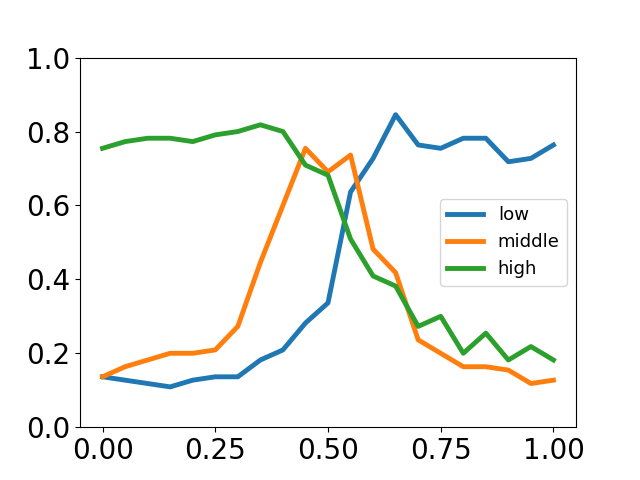}
\end{minipage}}
\subfigure[2000 Nodes, 1500 Features]{
\begin{minipage}[]{0.32\linewidth}
\centering
\includegraphics[width=1.0\linewidth]{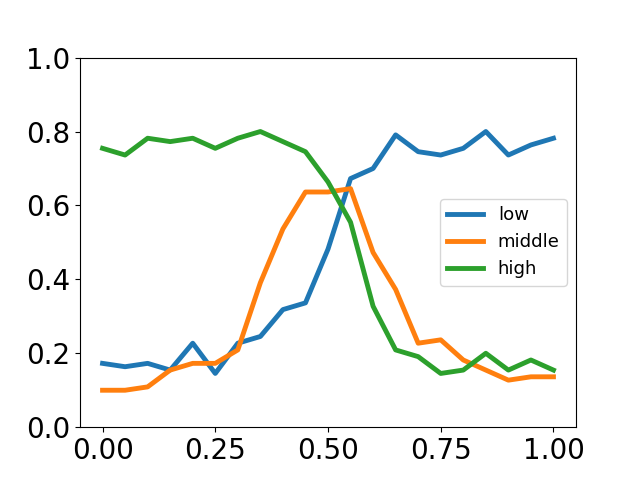}
\end{minipage}}
\subfigure[2000 Nodes, 2000 Features]{
\begin{minipage}[]{0.32\linewidth}
\centering
\includegraphics[width=1.0\linewidth]{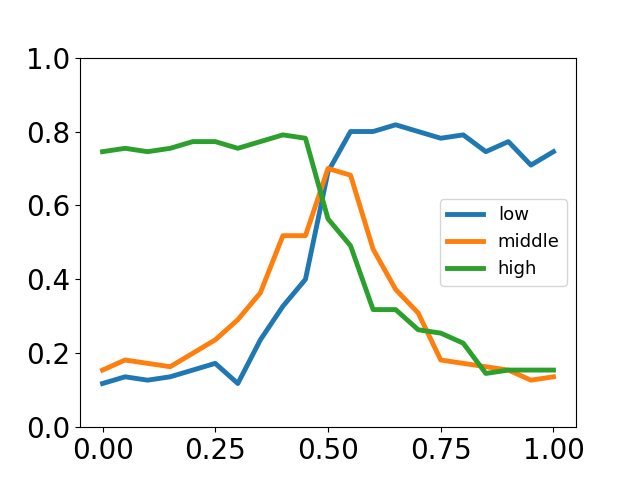}
\end{minipage}}
\subfigure[2000 Nodes, 3000 Features]{
\begin{minipage}[]{0.32\linewidth}
\centering
\includegraphics[width=1.0\linewidth]{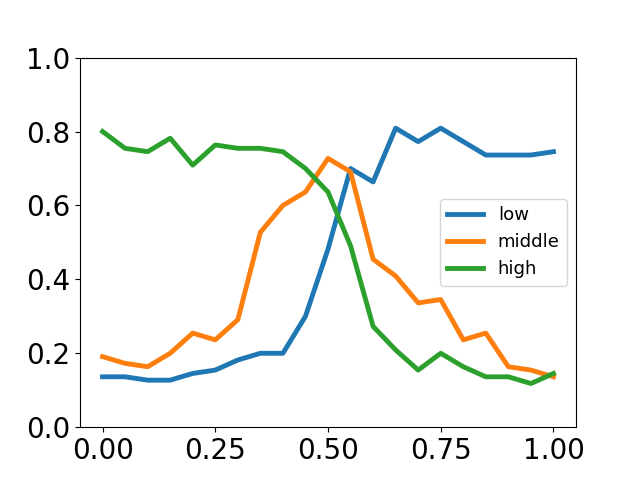}
\end{minipage}}
\subfigure[3000 Nodes, 1000 Features]{
\begin{minipage}[]{0.32\linewidth}
\centering
\includegraphics[width=1.0\linewidth]{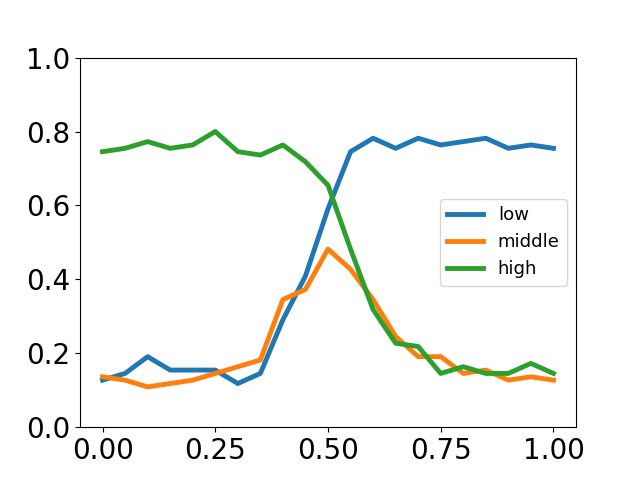}
\end{minipage}}
\subfigure[3000 Nodes, 2000 Features]{
\begin{minipage}[]{0.32\linewidth}
\centering
\includegraphics[width=1.0\linewidth]{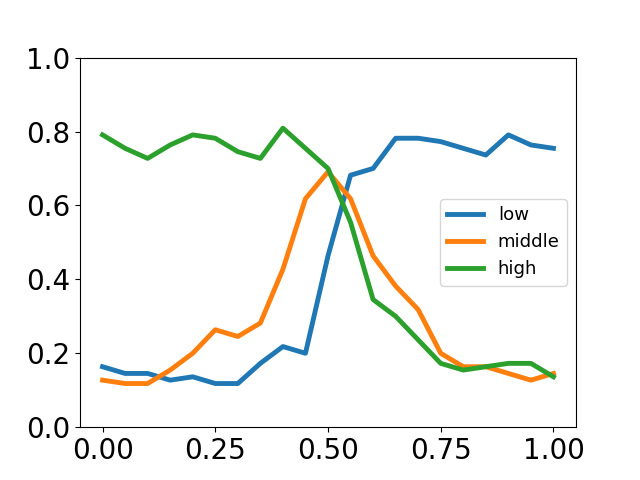}
\end{minipage}}
\centering
\caption{Frequency importance on CSBM model with different hyperparameters.}
\label{fig:more_importance}
\end{figure}

\begin{figure}[]
\centering
\subfigure[NewtonNet]{
\begin{minipage}[]{0.32\linewidth}
\centering
\includegraphics[width=0.8\linewidth]{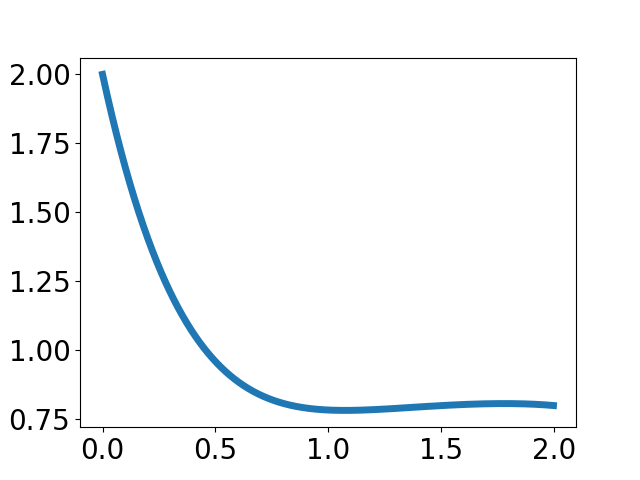}
\end{minipage}}
\subfigure[ChebNetII]{
\begin{minipage}[]{0.32\linewidth}
\centering
\includegraphics[width=0.8\linewidth]{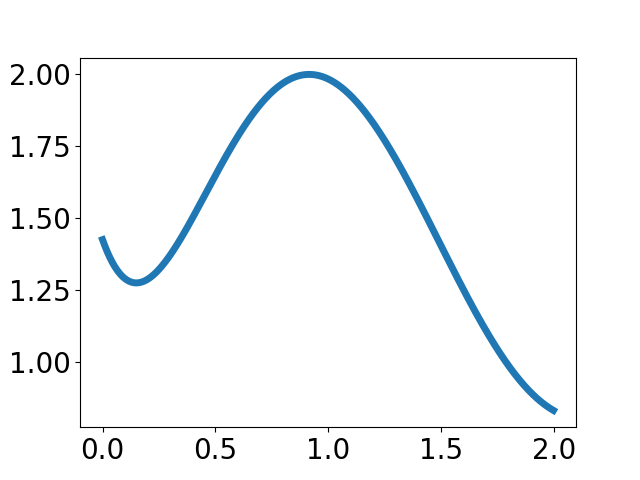}
\end{minipage}}
\subfigure[BernNet]{
\begin{minipage}[]{0.32\linewidth}
\centering
\includegraphics[width=0.8\linewidth]{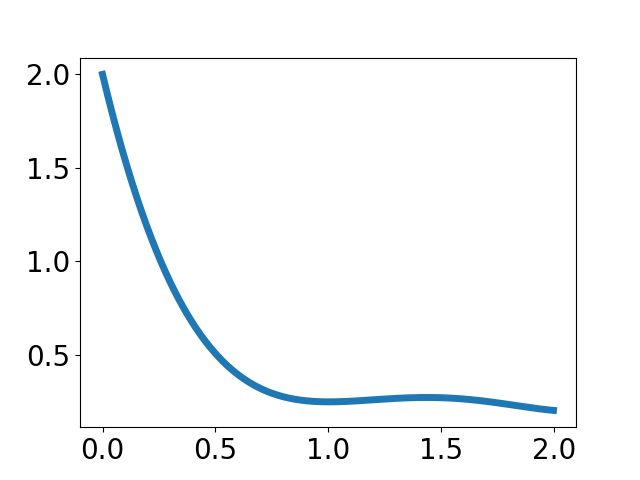}
\end{minipage}}
\centering
\caption{Learned filters on Cora.}
\label{fig:learned_on_cora}
\end{figure}

\begin{figure}[]
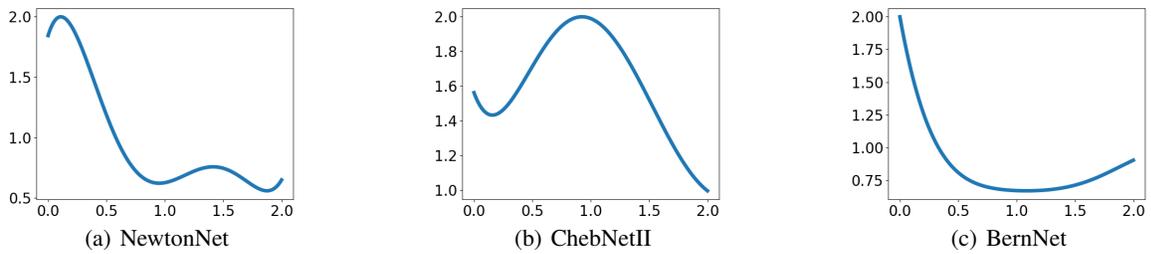

\centering
\subfigure[NewtonNet]{
\begin{minipage}[]{0.32\linewidth}
\centering
\includegraphics[width=0.8\linewidth]{figures/learned_filters/newton_learned_citeseer.png}
\end{minipage}}
\subfigure[ChebNetII]{
\begin{minipage}[]{0.32\linewidth}
\centering
\includegraphics[width=0.8\linewidth]{figures/learned_filters/chebii_learned_citeseer.png}
\end{minipage}}
\subfigure[BernNet]{
\begin{minipage}[]{0.32\linewidth}
\centering
\includegraphics[width=0.8\linewidth]{figures/learned_filters/bern_learned_citeseer.png}
\end{minipage}}
\centering
\caption{Learned filters on Citeseer.}
\label{fig:learned_on_citeseer}
\end{figure}

\begin{figure}[]
\centering
\subfigure[NewtonNet]{
\begin{minipage}[]{0.32\linewidth}
\centering
\includegraphics[width=0.8\linewidth]{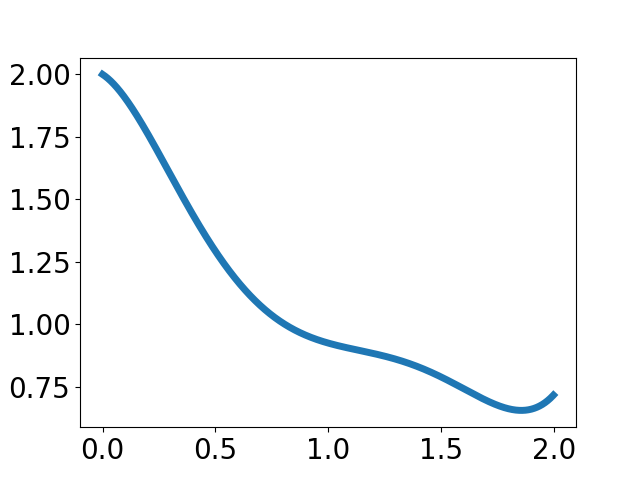}
\end{minipage}}
\subfigure[ChebNetII]{
\begin{minipage}[]{0.32\linewidth}
\centering
\includegraphics[width=0.8\linewidth]{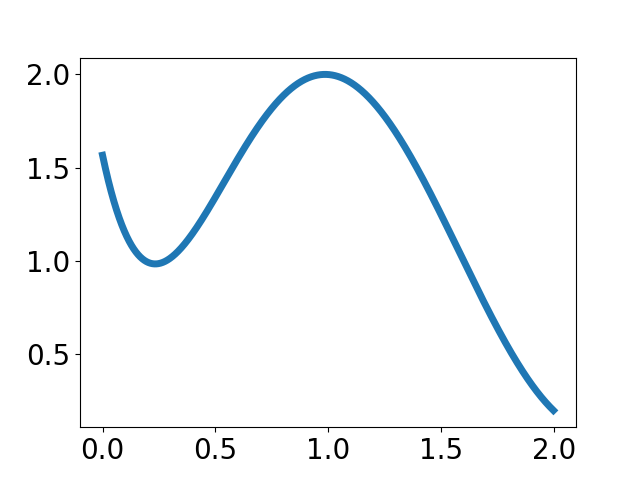}
\end{minipage}}
\subfigure[BernNet]{
\begin{minipage}[]{0.32\linewidth}
\centering
\includegraphics[width=0.8\linewidth]{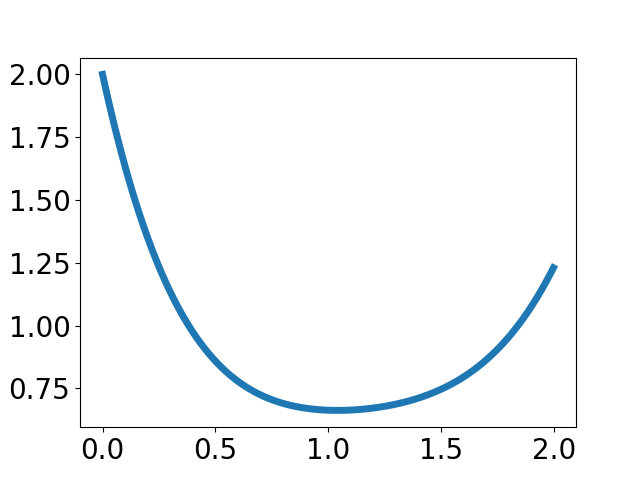}
\end{minipage}}
\centering
\caption{Learned filters on Pubmed.}
\label{fig:learned_on_pubmed}
\end{figure}

\begin{figure}[]
\centering
\subfigure[NewtonNet]{
\begin{minipage}[]{0.32\linewidth}
\centering
\includegraphics[width=0.8\linewidth]{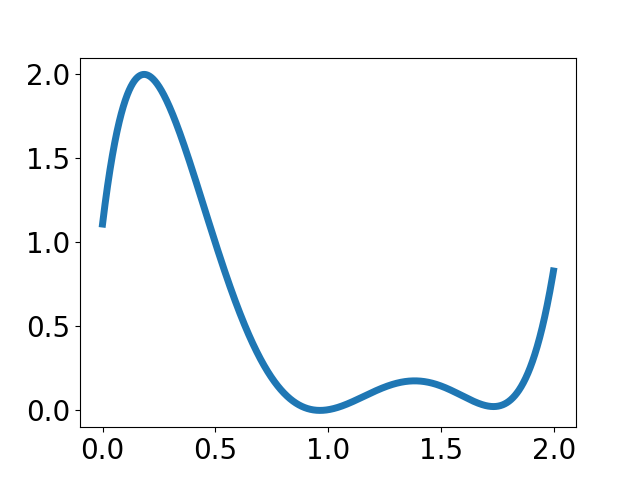}
\end{minipage}}
\subfigure[ChebNetII]{
\begin{minipage}[]{0.32\linewidth}
\centering
\includegraphics[width=0.8\linewidth]{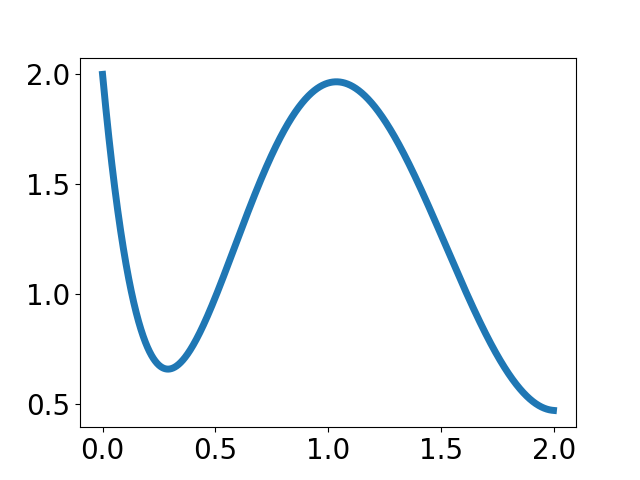}
\end{minipage}}
\subfigure[BernNet]{
\begin{minipage}[]{0.32\linewidth}
\centering
\includegraphics[width=0.8\linewidth]{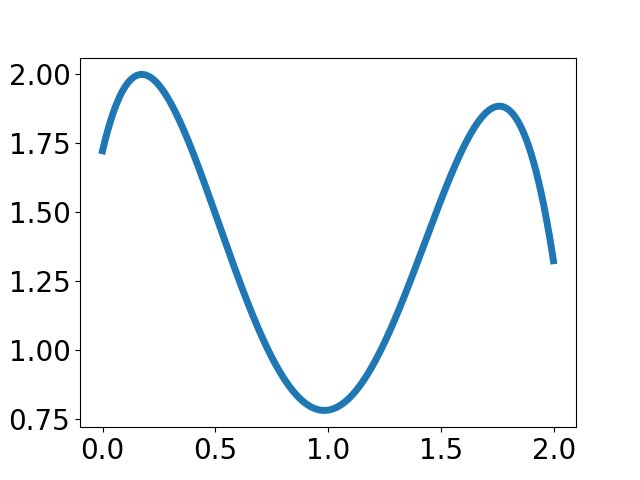}
\end{minipage}}
\centering
\caption{Learned filters on Chameleon.}
\label{fig:learned_on_chameleon}
\end{figure}

\begin{figure}[]
\centering
\subfigure[NewtonNet]{
\begin{minipage}[]{0.32\linewidth}
\centering
\includegraphics[width=0.8\linewidth]{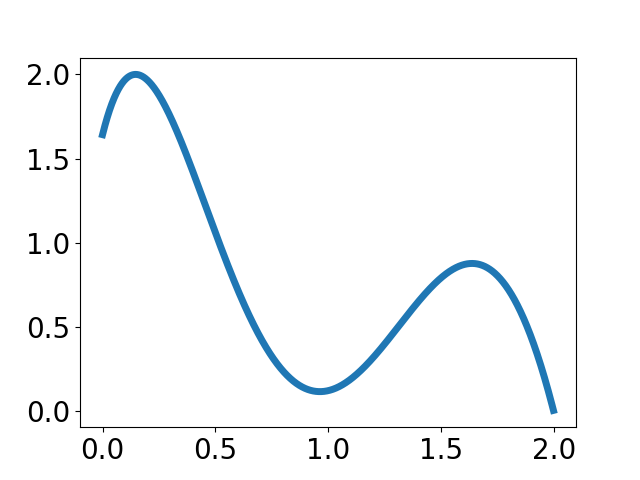}
\end{minipage}}
\subfigure[ChebNetII]{
\begin{minipage}[]{0.32\linewidth}
\centering
\includegraphics[width=0.8\linewidth]{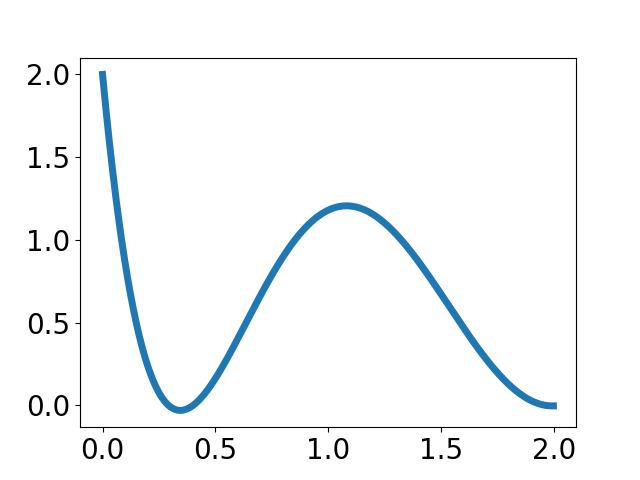}
\end{minipage}}
\subfigure[BernNet]{
\begin{minipage}[]{0.32\linewidth}
\centering
\includegraphics[width=0.8\linewidth]{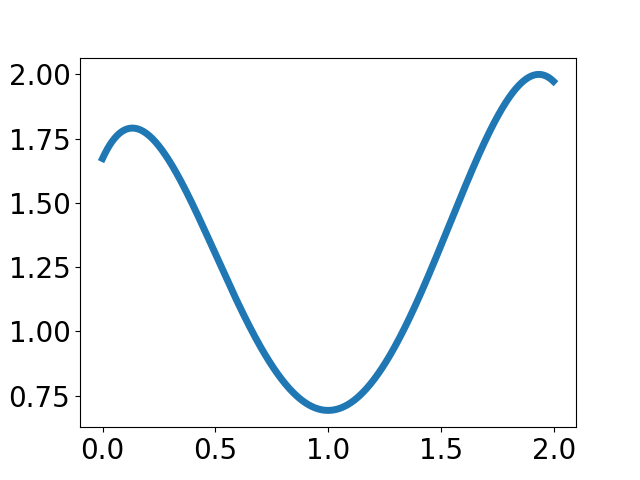}
\end{minipage}}
\centering
\caption{Learned filters on Squirrel.}
\label{fig:learned_on_squirrel}
\end{figure}

\begin{figure}[]
\centering
\subfigure[NewtonNet]{
\begin{minipage}[]{0.32\linewidth}
\centering
\includegraphics[width=0.8\linewidth]{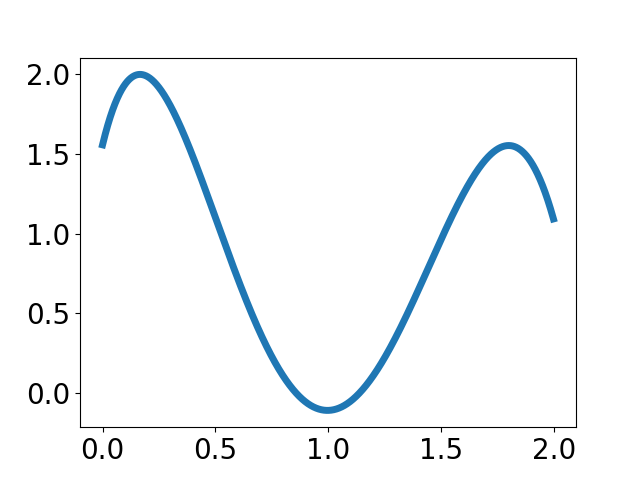}
\end{minipage}}
\subfigure[ChebNetII]{
\begin{minipage}[]{0.32\linewidth}
\centering
\includegraphics[width=0.8\linewidth]{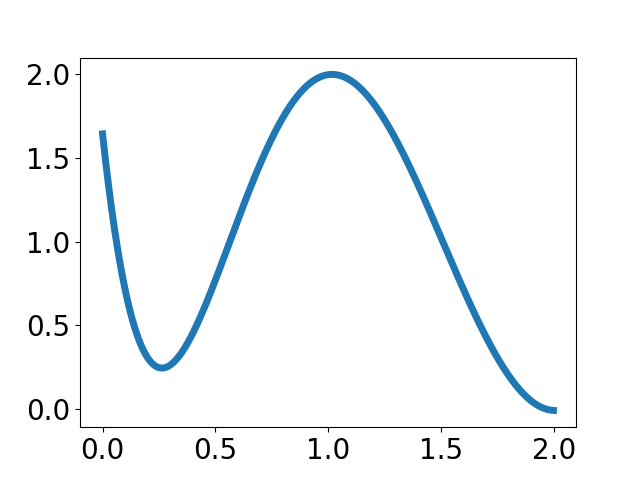}
\end{minipage}}
\subfigure[BernNet]{
\begin{minipage}[]{0.32\linewidth}
\centering
\includegraphics[width=0.8\linewidth]{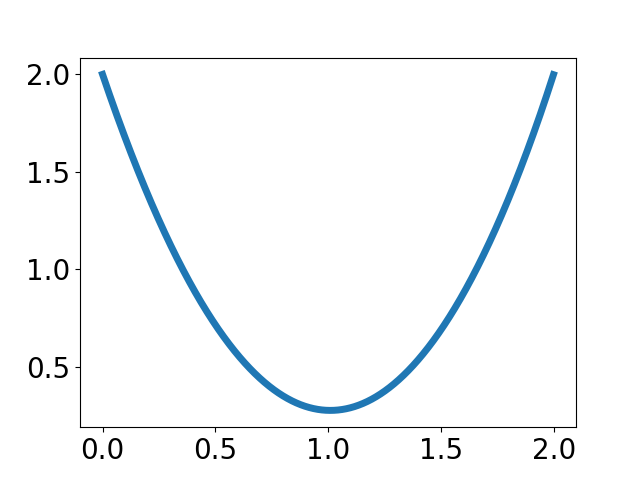}
\end{minipage}}
\centering
\caption{Learned filters on Crocodile.}
\label{fig:learned_on_crocodile}
\end{figure}

\begin{figure}[]
\centering
\subfigure[NewtonNet]{
\begin{minipage}[]{0.32\linewidth}
\centering
\includegraphics[width=0.8\linewidth]{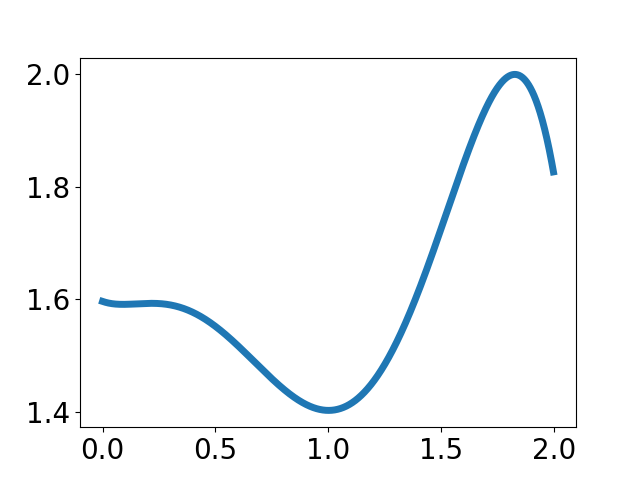}
\end{minipage}}
\subfigure[ChebNetII]{
\begin{minipage}[]{0.32\linewidth}
\centering
\includegraphics[width=0.8\linewidth]{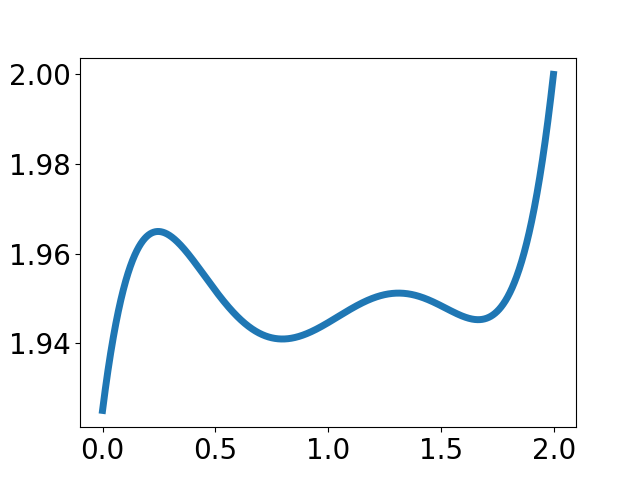}
\end{minipage}}
\subfigure[BernNet]{
\begin{minipage}[]{0.32\linewidth}
\centering
\includegraphics[width=0.8\linewidth]{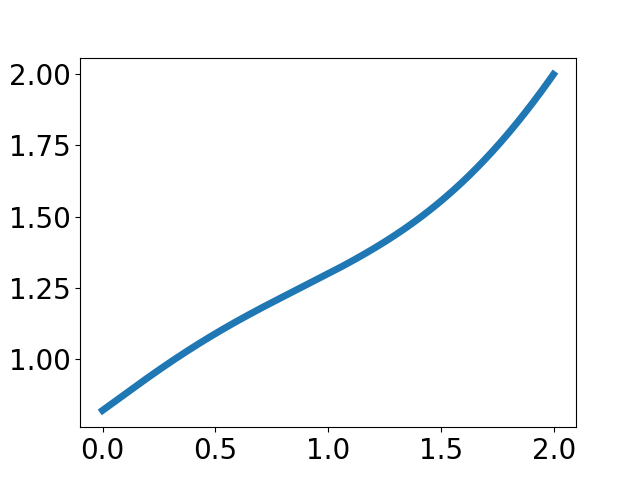}
\end{minipage}}
\centering
\caption{Learned filters on Texas.}
\label{fig:learned_on_texas}
\end{figure}

\begin{figure}[]
\centering
\subfigure[NewtonNet]{
\begin{minipage}[]{0.32\linewidth}
\centering
\includegraphics[width=0.8\linewidth]{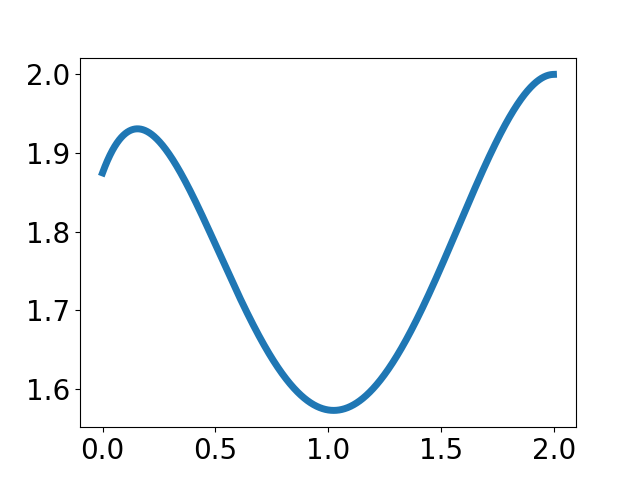}
\end{minipage}}
\subfigure[ChebNetII]{
\begin{minipage}[]{0.32\linewidth}
\centering
\includegraphics[width=0.8\linewidth]{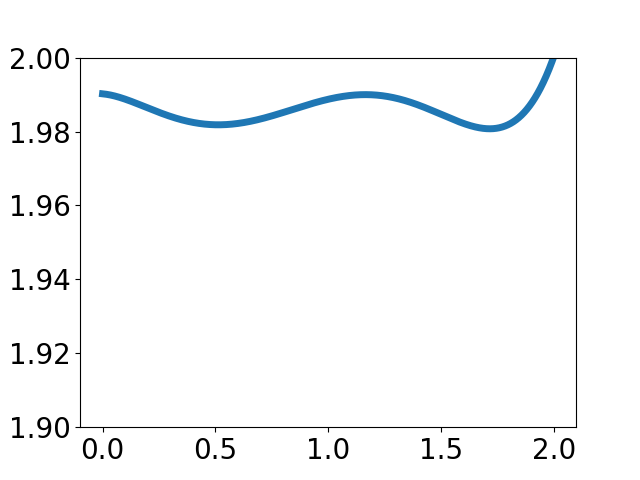}
\end{minipage}}
\subfigure[BernNet]{
\begin{minipage}[]{0.32\linewidth}
\centering
\includegraphics[width=0.8\linewidth]{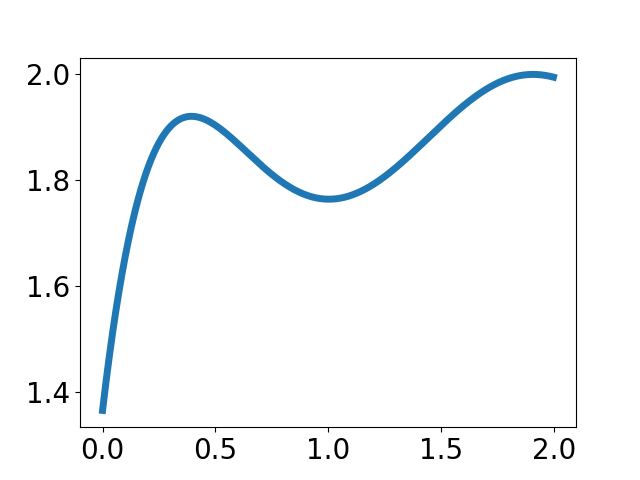}
\end{minipage}}
\centering
\caption{Learned filters on Cornell.}
\label{fig:learned_on_cornell}
\end{figure}


\begin{figure}[]
\centering
\subfigure[NewtonNet]{
\begin{minipage}[]{0.32\linewidth}
\centering
\includegraphics[width=0.8\linewidth]{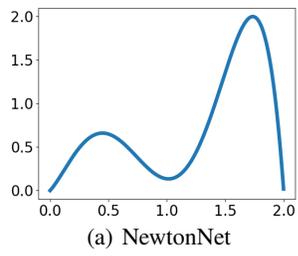}
\end{minipage}}
\subfigure[ChebNetII]{
\begin{minipage}[]{0.32\linewidth}
\centering
\includegraphics[width=0.8\linewidth]{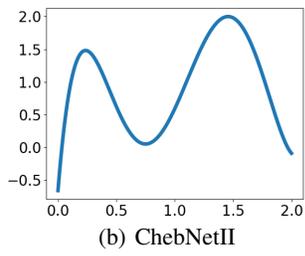}
\end{minipage}}
\subfigure[BernNet]{
\begin{minipage}[]{0.32\linewidth}
\centering
\includegraphics[width=0.8\linewidth]{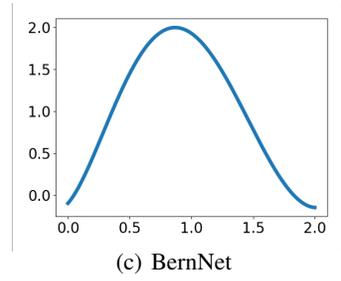}
\end{minipage}}
\centering
\caption{Learned filters on Penn94.}
\label{fig:learned_on_penn94}
\end{figure}


\end{document}